\renewcommand{\r}{\right}
\renewcommand{\l}{\left} 
\author{
Ruiqi Zhang \\
UC Berkeley\\
\texttt{rqzhang@berkeley.edu}
\and Spencer Frei\\
 UC Berkeley\\
 \texttt{frei@berkeley.edu}\\
 \and
 Peter L. Bartlett\\
 UC Berkeley and Google DeepMind\\
 \texttt{peter@berkeley.edu}
}
\title{\textbf{ Trained Transformers Learn Linear Models In-Context}}   
\begin{document}

\maketitle

\begin{abstract}
Attention-based neural networks such as transformers have demonstrated a remarkable ability to exhibit in-context learning (ICL): Given a short prompt sequence of tokens from an unseen task, they can formulate relevant per-token and next-token predictions without any parameter updates.  By embedding a sequence of labeled training data and unlabeled test data as a prompt, this allows for transformers to behave like supervised learning algorithms. Indeed, recent work has shown that when training transformer architectures over random instances of linear regression problems, these models' predictions mimic those of ordinary least squares. 

Towards understanding the mechanisms underlying this phenomenon, we investigate the dynamics of ICL in transformers with a single linear self-attention layer trained by gradient flow on linear regression tasks.  We show that despite non-convexity, gradient flow with a suitable random initialization finds a global minimum of the objective function.  At this global minimum, when given a test prompt of labeled examples from a new prediction task, the transformer achieves prediction error competitive with the best linear predictor over the test prompt distribution. We additionally characterize the robustness of the trained transformer to a variety of distribution shifts and show that although a number of shifts are tolerated, shifts in the covariate distribution of the prompts are not.  Motivated by this, we consider a generalized ICL setting where the covariate distributions can vary across prompts. We show that although gradient flow succeeds at finding a global minimum in this setting, the trained transformer is still brittle under mild covariate shifts.  
We complement this finding with experiments on large, nonlinear transformer architectures which we show are more robust under covariate shifts.

\end{abstract}

\newcommand{\sfnote}[1]{\footnote{ #1 -sf.}}
\newcommand{\rznote}[1]{\footnote{ #1 -rz.}}
\newcommand{\pbnote}[1]{\footnote{ #1 -pb.}}

\def\softmax{\mathrm{softmax}}
\def\P{\mathbb{P}}
\def\vecUWX{\vector\left(U_\tau U X_\tau\right)}
\def\UUX{\left(U_\tau U X_\tau\right)}
\def\bU{U}
\def\bUtl{U_{11}}
\def\bUtr{u_{12}}
\def\bUbl{u_{21}}
\def\bUlast{u_{-1}}
\def\vecZUX{\vector\left(Z_\tau U X_\tau\right)}
\def\ZUX{\left(Z_\tau U X_\tau\right)}

\section{Introduction}

Transformer-based neural networks have quickly become the default machine learning model for problems in natural language processing, forming the basis of chatbots like ChatGPT~\citep{openai2023gpt4}, and are increasingly popular in computer vision~\citep{dosovitskiy2021visiontransformer}.  These models can take as input sequences of tokens and return relevant next-token predictions.  When trained on sufficiently large and diverse datasets, these models are often able to perform \textit{in-context learning} (ICL): when given a short sequence of input-output pairs (called a \textit{prompt}) from a particular task as input, the model can formulate predictions on test examples without having to make any updates to the parameters in the model.  

Recently,~\citet{garg2022can} initiated the investigation of ICL from the perspective of learning particular function classes.  At a high-level, this refers to when the model has access to instances of prompts of the form $(x_1, h(x_1), \dots, x_N, h(x_N), x_\query)$ where $x_i, x_\query$ are sampled i.i.d. from a distribution $\calD_x$ and $h$ is sampled independently from a distribution over functions in a function class $\calH$.  The transformer succeeds at in-context learning if when given a new prompt $(x_1', h'(x_1'),\dots, x_N', h'(x_N'), x_\query')$ corresponding to an independently sampled $h'$ it is able to formulate a prediction for $x_\query'$ that is close to $h'(x_\query')$ given a sufficiently large number of examples $N$.  The authors showed that when transformer models
are trained on prompts corresponding to instances of training data from a particular function class (e.g., linear models, neural networks, or decision trees), they succeed at in-context learning, and moreover the behavior of the trained transformers can mimic those of familiar learning algorithms like ordinary least squares.

Following this, a number of follow-up works provided constructions of transformer-based neural network architectures which are capable of achieving small prediction error for query examples when the prompt takes the form $(x_1, \sip{w}{x_1}, \dots, x_N, \sip{w}{x_N}, x_\query)$ where $x_i, x_\query, w\iid \normal(0, I_d)$~\citep{von2022transformers,akyurek2022learning}.  However, this leaves open the question of how it is that \textit{gradient-based optimization algorithms} over transformer architectures produce models which are capable of in-context learning.\footnote{We note a concurrent work also explores the optimization question we consider here~\citep{ahn2023transformers}; we shall provide a more detailed comparison to this work in Section~\ref{sec:related}.} 

In this work, we investigate the learning dynamics of gradient flow in a simplified transformer architecture when the training prompts consists of random instances of linear regression datasets.  Our main contributions are as follows. 
\begin{itemize} 
    \item We establish that for a class of transformers with a single layer and with a linear self-attention module (LSAs), gradient flow on the population loss with a suitable random initialization converges to a global minimum of the population objective, despite the non-convexity of the underlying objective function.
    \item We characterize the learning algorithm that is encoded by the transformer at convergence, as well as the prediction error achieved when the model is given a test prompt corresponding to a new (and possibly nonlinear) prediction task. 
    \item We use this to conclude that transformers trained by gradient flow indeed in-context learn the class of linear models.  Moreover, we characterize the robustness of the trained transformer to a variety of distribution shifts.  We show that although a number of shifts can be tolerated, shifts in the covariate distribution of the features $x_i$ can not.  
    \item Motivated by this failure under covariate shift, we consider a generalized setting of in-context learning where the covariate distribution can vary across prompts.  We provide global convergence guarantees for LSAs trained by gradient flow in this setting and show that even when trained on a variety of covariate distributions, LSAs still fail under covariate shift. \item We then empirically investigate the behavior of large, nonlinear transformers when trained on linear regression prompts.  We find that these more complex models are able to generalize better under covariate shift, especially when trained on prompts with varying covariate distributions.
\end{itemize}

\section{Additional Related Work}\label{sec:related}
The literature on transformers and non-convex optimization in machine learning is vast.  In this section, we will focus on those works most closely related to theoretical understanding of in-context learning of function classes. 

As mentioned previously,~\citet{garg2022can} empirically investigated the ability for transformer architectures to in-context learn a variety of function classes.  They showed that when trained on random instances of linear regression, the models' predictions are very similar to those of ordinary least squares.  Additionally, they showed that transformers can in-context learn two-layer ReLU networks and decision trees, showing that by training on differently-structured data, the transformers learn to implement distinct learning algorithms.   A number of works further investigated the types of algorithms implemented by transformers trained on in-context examples of linear models~\citep{ahuja2023context,ahuja2023closer}.

\citet{akyurek2022learning} and \citet{von2022transformers} examined the behavior of transformers when trained on random instances of linear regression, as we do in this work.  They considered the setting of isotropic Gaussian data with isotropic Gaussian weight vectors, and showed that the trained transformer's predictions mimic those of a single step of gradient descent.  They also provided a construction of transformers which implement this single step of gradient descent.  By contrast, we explicitly show that gradient flow provably converges to transformers which learn linear models in-context.  Moreover, our analysis holds when the covariates are anisotropic Gaussians, for which a single step of vanilla gradient descent is unable to achieve small prediction error.\footnote{To see this, suppose $(x_i, y_i)$ are i.i.d. with $x \sim \normal(0,\Lambda)$ and $y=\sip wx$.  A single step of gradient descent under the squared loss from a zero initialization yields the predictor $x\mapsto x^\top\l(\f 1 n \summ i n y_i x_i \r)= x^\top \l( \f 1 n \summ i n x_i x_i^\top \r) w \approx x^\top \Lambda w$.  Clearly, this is not close to $x^\top w$ when $\Lambda \neq I_d$.} 

Let us briefly mention a number of other works on understanding in-context learning in transformers and other sequence-based models.
 \citet{han2023incontext} suggests that Bayesian inference on prompts can be asymptotically interpreted as kernel regression. \citet{dai2022can} interprets ICL as implicit fine-tuning, viewing large language models as meta-optimizers performing gradient-based optimization. \citet{xie2021explanation} regards ICL as implicit Bayesian inference, with transformers learning a shared latent concept between prompts and test data, and they prove the ICL property when the training distribution is a mixture of HMMs. Similarly, \citet{wang2023large} perceives ICL as a Bayesian selection process, implicitly inferring information pertinent to the designated tasks. \citet{li2023closeness} explores the functional resemblance between a single layer of self-attention and gradient descent on a softmax regression problem, offering upper bounds on their difference. \citet{min2022rethinking} notes that the alteration of label parts in prompts does not drastically impair the ICL ability. They contend that ICL is invoked when prompts reveal information about the label space, input distribution, and sequence structure. 

Another collection of works have sought to understand transformers from an approximation-theoretic perspective. \citet{yun2019transformers, yun2020n} established that transformers can universally approximate any sequence-to-sequence function under some assumptions. Investigations by \citet{edelman2022inductive, likhosherstov2021expressive} indicate that a single-layer self-attention can learn sparse functions of the input sequence, where sample complexity and hidden size are only logarithmic relative to the sequence length. Further studies by \citet{perez2019turing, dehghani2019universal, bhattamishra2020computational} indicate that the vanilla transformer and its variants exhibit Turing completeness.  \citet{liu2023transformers} showed that transformers can approximate finite-state automata with few layers. \citet{bai2023transformers} showed that transformers can implement a variety of statistical machine learning algorithms as well as model selection procedures.
\citet{aamw-23} showed that a pretrained transformer can be used to define a transformer that segments a prompt into examples and labels and learns to solve a sparse retrieval task. \citet{zhang2023and} interpreted in-context learning via a Bayesian model averaging process.

A handful of recent works have developed provable guarantees for transformers trained with gradient-based optimization.  \citet{jelassi2022vision} analyzed the dynamics of gradient descent in vision transformers for data with spatial structure.  \citet{li2023transformerstopic} demonstrated that a single-layer transformer trained by a gradient method could learn a topic model, treating learning semantic structure as detecting co-occurrence between words and theoretically analyzing the two-stage dynamics during the training process.

Finally, we note a concurrent work by~\citet{ahn2023transformers} on the optimization landscape of single layer transformers with linear self-attention layers as we do in this work.  
They show that there exist global minima of the population objective of the transformer that can achieve small prediction error with anisotropic Gaussian data, and they characterize some critical points of deep linear self-attention networks.  In this work, we show that despite nonconvexity, gradient flow with a suitable random initialization converges to a global minimum that achieves small prediction error for anistropic Gaussian data.  We also characterize the prediction error when test prompts come from a new (possibly nonlinear) task, when there is distribution shift, and when transformers are trained on prompts with possibly different covariate distributions across prompts.  

\section{Preliminaries}
\paragraph{Notation}
We first describe the notation we use in the paper.  
We write $[n] = \{1,2,...,n\}.$  We use $\otimes$ to denote the Kronecker product, and $\vector$ the vectorization operator in column-wise order. For example, $\vector \begin{pmatrix}[0.5] 1 & 2 \\ 3 & 4\end{pmatrix} = (1,3,2,4)^\top.$ We write the inner product of two matrices $A,B \in \mathbb{R}^{m \times n}$ as $\left\langle A,B\right\rangle = \operatorname{tr}(AB^\top).$ We use $0_{n}$ and $0_{m\times n}$ to denote the zero vector and zero matrix of size $n$ and $m\times n,$ respectively. For a general matrix $A$, $A_{k:}$ and $A_{:k}$ denote the k-th row and k-th column, respectively. We denote the matrix operator norm and Frobenius norm as $\left\|\cdot\right\|_{op}$ and $\left\|\cdot\right\|_{F}$. We use $I_d$ to denote the $d$-dimensional identity matrix and sometimes we also use $I$ when the dimension is clear from the context. For a positive semi-definite matrix $A,$ we write $\left\|x\right\|_A^2 := x^\top A x$. Unless otherwise defined, we use lower case letters for scalars and vectors, and use upper case letters for matrices.

\subsection{In-context learning} 
We begin by describing a framework for in-context learning of function classes, as initiated by~\citet{garg2022can}.  In-context learning refers to the behavior of models that operate on sequences, called \textit{prompts}, of input-output pairs $(x_1, y_1, \dots, x_N, y_N, x_\query)$, where $y_i = h(x_i)$ for some (unknown) function $h$ and examples $x_i$ and query $x_\query$.  The goal for an in-context learner is to use the prompt to form a prediction $\hat y(x_\query)$ for the query such that $\hat y(x_\query) \approx h(x_\query)$.  

From this high-level description, one can see that at a surface level, the behavior of in-context learning is no different than that of a standard learning algorithm: the learner takes as input a training dataset and returns predictions on test examples.  For instance, one can view ordinary least squares as an `in-context learner' for linear models.  However, the rather unique feature of in-context learners is that these learning algorithms can be the solutions to stochastic optimization problems defined over a distribution of prompts.  We formalize this notion in the following definition.

\begin{definition}[Trained on in-context examples]
\label{def:trained.on.incontext}
Let $\calDx$ be a distribution over an input space $\calX$, $\calH\subset \calY^ \calX$ a set of functions $\calX\to \calY$, and $\calDH$ a distribution over functions in $\calH$.  Let $\ell:\calY\times \calY \to \R$ be a loss function.   Let $\seqspace = \cup_{n\in \N} \{ (x_1, y_1, \dots, x_n, y_n): x_i \in \calX, y_i\in \calY \}$ be the set of finite-length sequences of $(x, y)$ pairs and let 
\[ \calF_\Theta = \{f_\theta : \seqspace\times \calX \to \calY,\, \theta\in \Theta\}\]
be a class of functions parameterized by $\theta$ in some set $\Theta$.  For $N>0$, we say that a model $f:\seqspace\times \calX \to \calY$ is \emph{trained on in-context examples of functions in $\calH$ under loss $\ell$ w.r.t. $(\calDH,\calD_x)$} if $f = f_{\theta^*}$ where $\theta^*\in \Theta$ satisfies
\begin{equation} \label{eq:icl.stochastic.opt.def}
\theta^* \in \mathrm{argmin}_{\theta \in \Theta} \E_{\prompt = (x_1, h(x_1), \dots, x_N, h(x_{N}), x_\query)} \l[ \ell\l(f_\theta(\prompt), h(x_\query) \r)\r],
\end{equation}
where $x_{i},x_\query \iid \calDx$ and $h\sim \calDH$ are independent.
We call $N$ the \emph{length of the prompts seen during training.}
\end{definition}

As mentioned above, this definition naturally leads to a method for \textit{learning a learning algorithm from data}: Sample independent prompts by sampling a random function $h\sim \calDH$ and feature vectors $x_i, x_{\query} \iid \calDx$, and then minimize the objective function appearing in~\eqref{eq:icl.stochastic.opt.def}
using stochastic gradient descent or other stochastic optimization algorithms.  This procedure returns a model that is learned from in-context examples and can form predictions for test (query) examples given a sequence of training data.  This leads to the following natural definition that quantifies how well such a model performs on in-context examples corresponding to a particular hypothesis class.

\begin{definition}[In-context learning of a hypothesis class]\label{def:icl.function.class}
    Let $\calDx$ be a distribution over an input space $\calX$, $\calH\subset \calY^\calX$ a class of functions $\calX\to \calY$, and $\calDH$ a distribution over functions in $\calH$.  Let $\ell: \calY\times \calY \to \R$ be a loss function.   Let $\seqspace = \cup_{n\in \N} \{ (x_1, y_1, \dots, x_n, y_n): x_i \in \calX, y_i\in \calY \}$ be the set of finite-length sequences of $(x, y)$ pairs.
We say that a model $f:\seqspace\times \calX\to \calY$ defined on prompts of the form $P=(x_1, h(x_1), \dots, x_M, h(x_M), x_{\query})$ \emph{in-context learns a hypothesis class $\calH$ under loss $\ell$ with respect to $(\calDH, \calDx)$ up to error $\eta \in \R$}
 if there exists a function $M_{\calDH,\calDx}(\eps): (0,1) \to \N$ such that for every $\eps \in (0,1)$, and for every prompt $P$ of length $M \geq M_{\calDH,\calDx}(\eps)$, 
\begin{equation}
        \E_{P = (x_1, h(x_1), \dots, x_M, h(x_M), x_\query)} \bigg[ \ell\Big( f(P), h \l(x_{\query} \r) \Big) \bigg] 
        \leq \eta + \eps,
    \end{equation}
    where the expectation is over the randomness in $x_i, x_\query \iid \calD_x$ and $h\sim \calDH$. 
\end{definition}

The additive error term $\eta$ in Definition~\ref{def:icl.function.class} above allows for the possibility that the model does not achieve arbitrarily small error.  This error could come from using a model which is not complex enough to learn functions in $\calH$ or from considering a non-realizable setting where it is not possible to achieve arbitrarily small error.

With these two definitions in hand, we can formulate the following questions: suppose a function class $\calF_\Theta$ is given and $\calDH$ corresponds to random instances of hypotheses in a hypothesis class $\calH$.  Can a model from $\calF_\Theta$ that is trained on in-context examples of functions in $\calH$ w.r.t. $(\calDH, \calDx)$ in-context learn the hypothesis class $\calH$ w.r.t. $(\calDH, \calDx)$ with small prediction error? Do standard gradient-based optimization algorithms suffice for training the model from in-context examples?  How
long must the contexts be during training and at test time to achieve small prediction error?
 In the remaining sections, we shall answer these questions for the case of one-layer transformers with linear self-attention modules when the hypothesis class is linear models, the loss of interest is the squared loss, and the marginals are (possibly anisotropic) Gaussian marginals.  

\subsection{Linear self-attention networks}
Before describing the particular transformer models we analyze in this work, we first recall the definition of the softmax-based single-head self-attention module~\citep{vaswani2017attention}.  Let $E\in \R^{d_e\times d_N}$ be an embedding matrix that is formed using a prompt $(x_1, y_1, \dots, x_N, y_N, x_{\query})$ of length $N$.
The user has the freedom to determine how this embedding matrix is formed from the prompt. One natural way to form $E$ is to stack $(x_i, y_i)^\top \in \R^{d+1}$ as the first $N$ columns of $E$ and to let the final column be $(x_\query, 0)^\top$; if $x_i\in \R^d$, $y_i\in \R$, we would then have $d_e = d+1$ and $d_N=N+1$. 
Let $\WK, \WQ \in \R^{d_k\times d_e}$ and $\WV \in \R^{d_v\times d_e}$ be the key, query, and value weight matrices, $\WP \in \R^{d_e \times d_v}$ the projection matrix, and $\rho >0$ a normalization factor. The softmax self-attention module takes as input an embedding matrix $E$ of 
width $d_N$ and outputs a matrix of the same size,
\[ f_\attn (E;\WK, \WQ, \WV, \WP) = E + \WP \WV E \cdot \softmax\l(\f{ (\WK E)^\top \WQ E}{\rho} \r),\]
where $\softmax$ is applied column-wise and, given a vector input of $v$, the $i$-th entry of $\softmax(v)$ is given by $\exp(v_i)/\sum_s \exp(v_s)$.  The $d_N \times d_N$ matrix appearing inside the softmax is referred to as the \textit{self-attention matrix}.  Note that $f_\attn$ can take as its input a sequence of arbitrary length.  

In this work, we consider a simplified version of the single-layer self-attention module, which is more amenable to theoretical analysis and yet is still capable of in-context learning linear models.  In particular, we consider a single-layer linear self-attention (LSA) model, which is a modified version of $f_\attn$ where we remove the softmax nonlinearity, merge the projection and value matrices into a single matrix $\WPV \in\R^{d_e \times d_e}$, and merge the query and key matrices into a single matrix $\WKQ \in \R^{d_e\times d_e}$.  We concatenate these matrices into $\params = (\WKQ, \WPV)$ and denote
\begin{equation} \label{eq:lsa}
    f_\lsa(E;\params) = E + \WPV E \cdot \f{ E^\top \WKQ E}{\rho}. 
\end{equation}
We note that recent theoretical works on understanding transformers looked at identical models~\citep{von2022transformers,li2023transformers,ahn2023transformers}. 
It is noteworthy that recent empirical work has shown that state-of-the-art trained vision transformers with standard softmax-based attention modules are such that $(W^K)^\top W^Q$ and $W^P W^V$ are nearly multiples of the identity matrix~\citep{trockman2023mimetic}, which can be represented under the  parameterization we consider. 

The user has the flexibility to determine the method for constructing the embedding matrix from a prompt ${P = (x_1, y_1, \dots, x_N, y_N, x_{\query})}$.  In this work, for a prompt of length $N,$ we shall use the following embedding, which stacks $(x_i, y_i)^\top \in \R^{d+1}$ into the first $N$ columns with $(x_\query, 0)^\top \in \R^{d+1}$ as the last column:
\begin{equation}
E = E(P) = \begin{pmatrix}
    x_1 & x_2 & \cdots & x_N & x_{\query} \\
    y_1 & y_2 & \cdots & y_N & 0
\end{pmatrix} \in \R^{(d+1)\times (N+1)}.\label{eq:embedding.matrix.prompt}
\end{equation}
We take the normalization factor $\rho$ to be the width of embedding matrix $E$ minus one, i.e., $\rho = d_N - 1,$ since each element in $E\cdot E^\top$ is a inner product of two vectors of length $d_N.$  
Under the above token embedding, we take $\rho = N.$  We note that there are alternative ways to form the embedding matrix with this data, e.g. by padding all inputs and labels into vectors of equal length and arranging them into a matrix \citep{akyurek2022learning}, or by stacking columns that are linear transformations of the concatenation $(x_i, y_i)$~\citep{garg2022can}, although the dynamics of in-context learning will differ under alternative parameterizations.

The network's prediction for the token $x_{\query}$
will be the bottom-right entry of matrix output by $f_\lsa$, namely,
\[ \widehat y_{\query} = \widehat y_{\query}(E;\params) = [f_{\lsa}(E; \params)]_{(d+1), (N+1)}.\] 
Here and after, we may occasionally suppress dependence on $\params$ and write $\widehat y_{\query}(E;\params)$ as $\widehat y_{\query}.$ Since the prediction takes only the right-bottom entry of the token matrix output by the LSA layer, actually only part of $\WPV$ and $\WKQ$ 
affect 
the prediction.  To see how, let us denote 
\begin{equation}\label{eqn:block_matrix}
    \WPV = \begin{pmatrix}[1.5]
    \WPV_{11} & w_{12}^{PV} \\ (w_{21}^{PV})^\top & w_{22}^{PV}\end{pmatrix}  \in \mathbb{R}^{(d+1) \times (d+1)},
    \quad
    \WKQ = \begin{pmatrix}[1.5]
    \WKQ_{11} & w_{12}^{KQ} \\ (w_{21}^{KQ})^\top & w_{22}^{KQ} \end{pmatrix}  \in \mathbb{R}^{(d+1) \times (d+1)},
\end{equation}
where $\WPV_{11} \in \mathbb{R}^{d \times d}; w_{12}^{PV},w_{21}^{PV} \in \mathbb{R}^d; w_{22}^{PV} \in \mathbb{R};$ and $\WKQ_{11} \in \mathbb{R}^{d \times d}; w_{12}^{KQ},w_{21}^{KQ} \in \mathbb{R}^d; w_{22}^{KQ} \in \mathbb{R}.$ Then, the prediction $\widehat y_{\query}$ is 
\begin{equation}\label{eqn:simple_expression_prediction}
    \widehat y_{\query} = 
    \begin{pmatrix}[1.5]
     (w_{21}^{PV})^\top & w_{22}^{PV}\end{pmatrix}
     \cdot \left(\frac{EE^\top}{N}\right) 
     \begin{pmatrix}[1.5]
    \WKQ_{11} \\ (w_{21}^{KQ})^\top \end{pmatrix}
    x_{\query},
\end{equation}
since only the last row of $\WPV$ and the first $d$ columns of $\WKQ$ 
affects 
the prediction, which means we can simply take all other entries zero in the following sections.

\subsection{Training procedure}

In this work, we will consider the task of in-context learning linear predictors.  We will assume training prompts are sampled as follows. Let $\Lambda$ be a positive definite covariance matrix.  Each training prompt, indexed by $\tau \in \N$, takes the form of $P_\tau = (x_{\tau, 1}, h_\tau(x_{\tau_1}), \dots, x_{\tau, N}, h_\tau(x_{\tau, N}), x_{\tau, \query})$, where task weights $w_\tau \iid \normal (0, I_d)$, inputs $x_{\tau, i}, x_{\tau, \query} \iid \normal(0, \Lambda)$, and labels $h_{\tau}(x) = \sip{w_\tau}{x}$.  
 
Each prompt corresponds to an embedding matrix $E_\tau$, formed using the transformation~\eqref{eq:embedding.matrix.prompt}:
\begin{equation*}
    E_\tau := \begin{pmatrix}
    x_{\tau,1} & x_{\tau,2} & \cdots & x_{\tau,N} & x_{\tau,\query} \\
    \sip{w_\tau}{x_{\tau,1}} & \sip{w_\tau}{x_{\tau,2}} & \cdots & \sip{w_\tau}{x_{\tau,N}} & 0
\end{pmatrix} \in \R^{(d+1)\times (N+1)}.
\end{equation*}
We denote the prediction of the LSA model on the query label in the task $\tau$ as $\widehat y_{\tau,\query}$, which is the bottom-right element of $f_\lsa(E_\tau),$ where $f_{\lsa}$ is the linear self-attention model defined in \eqref{eq:lsa}.  The empirical risk over $B$ independent prompts is defined as
\begin{equation}\label{eqn:emp_loss_maintext}
    \widehat L(\params) = \f{1}{2B} \sum_{\tau=1}^B \bigg(\widehat y_{\tau,\query} - \sip{w_\tau}{x_{\tau,\query}}\bigg)^2.
\end{equation}
We shall consider the behavior of gradient flow-trained networks over the population loss induced by the limit of infinite training tasks/prompts $B\to \infty$:
\begin{equation}\label{eqn:population_loss}
    L(\params) = \lim_{B\to \infty} \widehat L(\params) = \f 12 \E_{w_\tau, x_{\tau,1}, \cdots, x_{\tau,N}, x_{\tau,\query}} \l[ (\widehat y_{\tau,\query} - \sip{w_\tau}{x_{\tau,\query}})^2 \r]
\end{equation}
Above, the expectation is taken w.r.t. the covariates $\{x_{\tau,i}\}_{i=1}^N \cup \{x_{\query}\}$ in the prompt and the weight vector $w_\tau$, i.e. over $x_{\tau,i},x_\query \iid \normal(0, \Lambda)$ and $w_{\tau} \sim \normal(0,I_d)$.  
Gradient flow captures the behavior of gradient descent with infinitesimal step size and has dynamics given by the following differential equation:
\begin{equation}\label{eqn:gf}
    \f{\mathrm{d}}{\mathrm dt} \params = - \nabla L(\params).
\end{equation}
We will consider gradient flow with an initialization that satisfies the following. 
\begin{assumption}[Initialization]\label{assume_init}
    Let $\sigma>0$ be a parameter, and let $\Theta \in \R^{d\times d}$ be any matrix satisfying $\snorm{\Theta \Theta^\top}_F =1$ and $\Theta \Lambda \neq 0_{d\times d}$. We assume
\begin{equation} \label{eq:wpv.wkq.init}
    \WPV(0)= 
    \sigma\begin{pmatrix}
    0_{d\times d} & 0_d \\
    0_d^\top & 1
    \end{pmatrix},\quad 
    \WKQ(0) = 
    \sigma \begin{pmatrix}
    \Theta \Theta^\top & 0_d \\ 
    0_d^\top & 0 
    \end{pmatrix}.
\end{equation}
\end{assumption}
This initialization is satisfied for a particular class of random initialization schemes: if $M$ has i.i.d. entries from a continuous distribution, then by setting $\Theta \Theta^\top = MM^\top / \snorm{M M^\top}_F$, the assumption is satisfied almost surely.  The reason we use this particular initialization scheme will be made more clear in Section~\ref{sec:proof} when we describe the proof, but at a high-level this is due to the fact that the predictions~\eqref{eqn:simple_expression_prediction} can be viewed as the output of a two-layer linear network, and initializations satisfying Assumption~\ref{assume_init} allow for the layers to be `balanced' throughout the gradient flow trajectory.  Random initializations that induce this balancedness condition have been utilized in a number of theoretical works on deep linear networks~\citep{du2018algorithmic,arora2018optimization,arora2019implicit,azulay2021implicit}. 
We leave the question of convergence under alternative random initialization schemes for future work.

\section{Main results}
In this section, we present the main results of this paper. First, in Section~\ref{sec:main.convergence.error}, we prove the gradient flow on the population loss will converge to a specific global optimum.  We characterize the prediction error of the trained transformer at this global minimum when given a prompt from a new prediction task.  Our characterization allows for the possibility that this new prompt comes from a nonlinear prediction task.  
We then instantiate our results for well-specified linear regression prompts and characterize the number of samples needed to achieve small prediction error, showing that transformers can in-context learn linear models 
when trained on 
in-context examples of linear models. 

Next, in Section~\ref{sec:main.distshift}, we analyze the behavior of the trained transformer under a variety of distribution shifts.  We show the transformer is robust to a number of distribution shifts, including task shift (when the labels in the prompt are not deterministic linear functions of their input) and query shift (when the query example $x_\query$ has a possibly different distribution than the test prompt).  On the other hand, we show that the transformer suffers from covariate distribution shifts, i.e. when the training prompt covariate distribution differs from the test prompt covariate distribution.

Finally, motivated by the failure of the trained transformer under covariate distribution shift, we consider in Section~\ref{sec:main.randomcov} the setting of training on in-context examples with varying covariate distributions across prompts.  We prove that transformers with a single linear self-attention layer trained by gradient flow converge to a global minimum of the population objective, but that the trained transformer still fails to perform well on new prompts. We complement our proof in the linear self-attention case with experiments on large, nonlinear transformer architectures which we show
 are more robust under covariate shifts. 

\subsection{Convergence of gradient flow and prediction error for new tasks}
\label{sec:main.convergence.error}
First, we prove that under suitable initialization, gradient flow will converge to a global optimum.

\begin{restatable}[Convergence and limits]{theorem}{mainresult}\label{thm:mainresult}
Consider gradient flow of the linear self-attention network $f_\lsa$ defined in~\eqref{eq:lsa} over the population loss~\eqref{eqn:population_loss}.  Suppose the initialization satisfies Assumption~\ref{assume_init} with initialization scale $\sigma>0$ satisfying $\sigma^2\snorm{\Gamma}_{op}\sqrt d <2$ 
where we have defined
\[ \Gamma := \left(1 + \frac{1}{N}\right)\Lambda + \frac{1}{N}\operatorname{tr}(\Lambda) I_d \in \mathbb{R}^{d \times d}.\]
Then gradient flow converges to a global minimum of the population loss \eqref{eqn:population_loss}. Moreover, $\WPV$ and $\WKQ$ converge to $\WPV_*$ and $\WKQ_*$ respectively, where
\begin{equation}\label{eq:limit_expression}
\begin{aligned}
    \WKQ_* &=
    \left[\operatorname{\tr}\left(\Gamma^{-2}\right)\right]^{-\frac{1}{4}}
    \begin{pmatrix}[1.5]
        \Gamma^{-1} & 0_{d} \\
        0_d^\top & 0
    \end{pmatrix}, \qquad
   \WPV_* =
    \left[\operatorname{\tr}\left(\Gamma^{-2}\right)\right]^{\frac{1}{4}}
    \begin{pmatrix}[1.5]
        0_{d \times d} & 0_{d} \\
        0_d^\top & 1
    \end{pmatrix}.
\end{aligned}
\end{equation}
\end{restatable}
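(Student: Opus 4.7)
The plan is to reduce the high-dimensional, non-convex gradient flow to a tractable two-layer-linear-network problem by exploiting invariances of the initialization, and then to prove convergence via an eigenbasis decomposition.

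As a first step, I would show that gradient flow preserves the sparsity pattern of $\WPV(0)$ and $\WKQ(0)$, so that only the scalar $a(t) := w_{22}^{PV}(t)$ and the $d\times d$ block $U(t) := W_{11}^{KQ}(t)$ evolve non-trivially throughout the trajectory. By~\eqref{eqn:simple_expression_prediction}, the entries of $\WPV$ and $\WKQ$ outside the last row and first $d$ columns have identically zero gradient and stay at $0$. For the two remaining entries $w_{21}^{PV}$ and $w_{21}^{KQ}$, which also start at $0$, a short computation shows that on the set where they vanish, the corresponding components of $\nabla L$ are expectations of odd-order polynomials in the Gaussian variable $w_\tau$ and therefore vanish. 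Uniqueness of the gradient flow solution then traps these entries at $0$ for all $t$.

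Next I would derive the reduced loss in closed form. Using $\widehat y_\tau = a\cdot w_\tau^\top \widehat\Sigma_\tau U x_{\tau,\query}$ with $\widehat\Sigma_\tau = \tfrac{1}{N}\sum_{i=1}^N x_{\tau,i} x_{\tau,i}^\top$, expanding the squared loss, and computing the Gaussian fourth moment $\E[\widehat\Sigma_\tau M \widehat\Sigma_\tau]$ via Isserlis' theorem, one obtains
\[
L(a, U) \;=\; \tfrac{1}{2}\bigl[a^2 \operatorname{tr}(\Gamma \Lambda U \Lambda U^\top) - 2 a\, \operatorname{tr}(\Lambda^2 U) + \operatorname{tr}(\Lambda)\bigr],
\]
with $\Gamma$ arising naturally from the fourth-moment expansion. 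The key structural observation is that $L(a,U)$ depends on $(a,U)$ only through the product $M := aU$; the induced loss $L_0(M)$ is strictly convex with unique minimizer $M^\star = \Gamma^{-1}$. To separate $a^\star$ and $U^\star$, I would then establish the two-layer balancedness identity $\tfrac{d}{dt}(a^2) = \tfrac{d}{dt}\|U\|_F^2$, which combined with $a(0)^2 = \sigma^2 = \sigma^2\snorm{\Theta\Theta^\top}_F^2 = \|U(0)\|_F^2$ (from Assumption~\ref{assume_init}) forces $a(t)^2 = \|U(t)\|_F^2$ throughout. Imposing this on $aU = \Gamma^{-1}$ yields $a^4 = \operatorname{tr}(\Gamma^{-2})$, and the formulas in~\eqref{eq:limit_expression} follow after picking the positive root determined by $a(0) = \sigma > 0$.

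The main obstacle is the final step: proving actual convergence to $(a^\star, U^\star)$ rather than stagnation near a saddle. I would work in the shared eigenbasis of $\Lambda$ and $\Gamma$, in which the off-diagonal entries $\tilde U_{ij}$ ($i \neq j$) satisfy the linear ODE $\dot{\tilde U}_{ij} = -a^2 \gamma_i \lambda_i \lambda_j \tilde U_{ij}$ and so decay monotonically in magnitude whenever $a(t) \neq 0$. The coupled dynamics for $a$ and the diagonal entries $\tilde U_{ii}$ has, on the balanced manifold $a^2 = \|U\|_F^2$, only the trivial saddle $(a, U) = (0, 0)$ and the two sign-related minima $\pm(a^\star, U^\star)$ as critical points. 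Coercivity of $L$ together with its non-increasing behavior under the flow yields a bounded trajectory, and LaSalle's invariance principle then confines the $\omega$-limit set to this critical set. Excluding the saddle is where the initialization condition $\sigma^2\snorm{\Gamma}_{op}\sqrt d < 2$ enters: a direct estimate bounds $L(a(0), U(0))$ strictly below $L(0, 0) = \operatorname{tr}(\Lambda)/2$, ruling out the saddle as a limit, after which continuity of $a(t)$ and $a(0) > 0$ pick out the positive-sign global minimum.
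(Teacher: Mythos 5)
Your reduction matches the paper's exactly through the first several stages: the sparsity preservation of $\WPV$ and $\WKQ$ under the flow, the reduced loss $L(a,U) = \tfrac12\bigl[a^2\operatorname{tr}(\Gamma\Lambda U\Lambda U^\top) - 2a\operatorname{tr}(\Lambda^2 U) + \operatorname{tr}(\Lambda)\bigr]$, the balancedness invariant $a(t)^2 = \|U(t)\|_F^2$, and the use of $\sigma^2\snorm{\Gamma}_{op}\sqrt d < 2$ to force $L(a(0),U(0)) < L(0,0)$ and hence keep $a(t)$ bounded away from the origin. Your identification of the global minimizer via $aU=\Gamma^{-1}$ plus balancedness and the correct normalizing power $a_* = [\operatorname{tr}(\Gamma^{-2})]^{1/4}$ also agrees with the paper (its Lemma ``globalmin'' and Corollary ``min\_loss'').

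Where you genuinely diverge is the convergence mechanism. The paper upgrades the ``initial loss is negative'' observation into a \emph{uniform} positive lower bound on $\bUlast(t)$ along the whole trajectory (Lemma ``lower\_bound\_W'') and then uses this to prove a Polyak--{\L}ojasiewicz inequality for the reduced loss (Lemma ``proxypl''), yielding exponential convergence $\tilde\ell(t) - \min\tilde\ell \le e^{-\mu t}[\tilde\ell(0)-\min\tilde\ell]$. You instead argue qualitatively: diagonalize $\Lambda$ and $\Gamma$ simultaneously, observe that off-diagonal entries obey a homogeneous linear ODE that contracts whenever $a\neq 0$, show coercivity of $L$ on the invariant balanced manifold, and invoke LaSalle's invariance principle to land the $\omega$-limit set in the critical set $\{(0,0), \pm(a_*, U_*)\}$, then exclude $(0,0)$ by the sub-level-set argument and the negative branch by sign continuity of $a$. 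Both routes are correct; your LaSalle argument is cleaner and more geometric but gives no rate, whereas the paper's PL inequality gives explicit exponential convergence with rate $\mu$ as in~\eqref{eqn:def_mu_maintext}. One small point worth making explicit in your write-up: the observation that $L$ depends on $(a,U)$ only through the product $M = aU$ and is strictly convex in $M$ is a nice structural lemma that the paper states only implicitly (via Corollary ``min\_loss''); stated cleanly, it immediately identifies the critical manifold $\{aU = \Gamma^{-1}\}\cup\{a=0, \operatorname{tr}(\Lambda^2 U)=0\}$, whose intersection with the balanced manifold is exactly the three points you list. Also be careful to say that coercivity holds \emph{on the balanced manifold} (it fails on the full space along the direction $a\to 0$, $\|U\|_F\to\infty$ with $a\|U\|_F$ bounded); since that manifold is invariant this is enough, but it deserves a sentence.
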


The full proof of this theorem appears in Appendix~\ref{appendix:proof_main}. 
We note that if we restrict our setting to $\Lambda=I_d$, then the limiting solution described found by gradient flow is quite similar to the construction of~\citet{von2022transformers}.  Since the prediction of the transformer is the same if we multiply $\WPV$ by a constant $c\neq 0$ and simultaneously multiply $\WKQ$ by $c^{-1}$, the only difference (up to scaling) is that the top-left entry of their $\WKQ$ matrix is $I_d$ rather than the $(1 + (d+1)/N)^{-1}I_d$ that we find for the case $\Lambda=I_d$.

Next, we would like to characterize the prediction error of the trained network described above when the network is given a new prompt.  
Let us consider a prompt of the form $(\testx_1, \sip{w}{\testx_1}, \dots, \testx_M, \sip{w}{\testx_M}, x_\query)$ where $w\in \R^d$ and $\testx_i, x_\query \iid \normal(0,\Lambda)$.  
A simple calculation shows that the prediction $\hat y_\query$ at the global optimum with parameters $\WKQ_*$ and $\WPV_*$ is given by \begin{align*}
    \widehat y_{\query} 
    &= \begin{pmatrix}[1.5]
    0_d^\top & 1
\end{pmatrix}
\begin{pmatrix}[1.5]
    \frac{1}{M}\sum_{i=1}^{M} \testx_i \testx_i^\top + \frac{1}{M}\testx_{\query} \testx_{\query}^\top & \frac{1}{M}\sum_{i=1}^{M} \testx_i \testx_i^\top w \\
    \frac{1}{M}\sum_{i=1}^{M} w^\top \testx_i \testx_i^\top  &
    \frac{1}{M}\sum_{i=1}^{M} w^\top \testx_i \testx_i^\top w
\end{pmatrix}
\begin{pmatrix}[1.5]
    \Gamma^{-1} & 0_d \\
    0_d^\top & 0
\end{pmatrix}
\begin{pmatrix}[1.5]
    \testx_\query \\
    0
\end{pmatrix} \notag\\
&=\testx_\query^\top  \Gamma^{-1} \l( \f 1 M \summ i M \testx_i \testx_i^\top\r) w. \numberthis \label{eq:prediction.trained.transformer.linear}
\end{align*}
When the length of prompts seen during training $N$ is large, $\Gamma^{-1} \approx \Lambda^{-1}$, and when the test prompt length $M$ is large, $\f 1 M \summ i M x_i x_i^\top \approx \Lambda$, so that $\hat y_\query \approx x_\query^\top w$.  Thus, for sufficiently large prompt lengths, \textit{the trained transformer indeed in-context learns the class of linear predictors}. 

In fact, we can generalize the above calculation for test prompts which could take a significantly different form than the training prompts.  Consider prompts that are of the form $(\testx_1, \testy_1, \dots, \testx_n, \testy_n, \testx_\query)$ where, for some joint distribution $\calD$ over $(x,y)$ pairs with marginal distribution $x\sim \normal(0,\Lambda)$, we have $(\testx_i, \testy_i) \iid \calD$ and $x_\query\sim \normal(0,\Lambda)$ independently.  Note that this allows for a label $\testy_i$ to be a nonlinear function of the input $\testx_i$.  The prediction of the trained transformer for this prompt is then
\begin{align}
    \widehat y_{\query} 
    &= \begin{pmatrix}[1.5]
    0_d^\top & 1
\end{pmatrix}
\begin{pmatrix}[1.5]
    \frac{1}{M}\sum_{i=1}^{M} \testx_i \testx_i^\top + \frac{1}{M}\testx_{\query} \testx_{\query}^\top & \frac{1}{M}\sum_{i=1}^{M} \testx_i \testy_i \\
    \frac{1}{M}\sum_{i=1}^{M} \testx_i^\top \testy_i &
    \frac{1}{M}\sum_{i=1}^{M} \testy_i^2
\end{pmatrix}
\begin{pmatrix}[1.5]
    \Gamma^{-1} & 0_d \\
    0_d^\top & 0
\end{pmatrix}
\begin{pmatrix}[1.5]
    \testx_\query \\
    0
\end{pmatrix} \notag\\
&=\testx_\query^\top  \Gamma^{-1} \l( \f 1 M \summ i M \testy_i \testx_i\r). \label{eq:prediction.trained.transformer}
\end{align}
Just as before, when $N$ is large we have $\Gamma^{-1} \approx \Lambda^{-1}$, and so when $M$ is large as well this implies
\begin{equation}\label{eq:prediction.trained.transformer.largeMN}
 \hat y_\query \approx \testx_\query^\top \Lambda^{-1} \E_{(\testx, \testy) \sim \calD}[yx] = x_\query^\top \l( \argmin_{w\in \R^d} \E_{(\testx, \testy) \sim \calD}[(y - \sip wx)^2] \r) .
\end{equation} 
This suggests that trained transformers in-context learn the \textit{best linear predictor} over a distribution when the test prompt consists of i.i.d. samples from a joint distribution over feature-response pairs.   In the following theorem, we formalize the above and characterize the prediction error
when prompts take this form. 

\begin{restatable}{theorem}{generalrisk}\label{thm:general_risk_maintext}
    Let $\calD$ be a distribution over $(x,y) \in \R^d \times \R,$ whose marginal distribution on $x$ is $\calDx = \normal(0,\Lambda).$ 
Assume $\E_\calD[y], \E_\calD[xy], \E_\calD[y^2 xx^\top]$ exist and are finite.
    Assume the test prompt is of the form $\testprompt = (\testx_1, \testy_1, \dots, \testx_M, \testy_M, \testx_{\query}),$ where $(\testx_i,\testy_i), (\testx_\query,\testy_\query) \iid \calD.$
    Let $f^*_{\lsa}$ be the LSA model with parameters $\WPV_*$ and $\WKQ_*$ in \eqref{eq:limit_expression}, and $\widehat y_\query$ is the prediction for $\testx_\query$ given the prompt. If we define
    \begin{equation}
        a := \Lambda^{-1} \E_{(x,y) \sim \calD} \left[xy\right],\qquad 
        \Sigma := \E_{(x,y) \sim \calD}  \left[\big(xy  - \E \left(x y\right) \big) \big(x y  - \E \left(x y\right) \big)^\top\right],
    \end{equation}
    then, for $\Gamma = \Lambda + \frac{1}{N} \Lambda + \frac{1}{N}\tr(\Lambda)I_d.$ we have,
    \begin{align}\label{eqn:risk_nonlinear}
        \E \left(\widehat{y}_{\query} - \testy_\query\right)^2
        & = \underbrace{\min_{\testw \in \mathbb{R}^d} \E \left(\sip{\testw}{\testx_\query} - \testy_\query\right)^2}_{\text{Error of best linear predictor}} \notag \\*
        &+ \frac{1}{M} \tr\left[\Sigma \Gamma^{-2} \Lambda\right]
        + \frac{1}{N^2} \left[ \left\|a\right\|^2_{\Gamma^{-2}\Lambda^3} 
        + 2\tr(\Lambda) \left\|a\right\|^2_{\Gamma^{-2}\Lambda^2} + \tr(\Lambda)^2 \left\|a\right\|^2_{\Gamma^{-2}\Lambda} \right],
    \end{align}
    where the expectation is over $(\testx_i,\testy_i), (\testx_\query,\testy_\query) \iid \calD$.  
\end{restatable}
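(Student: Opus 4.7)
The plan is to start from the closed-form expression for the prediction at the global minimum. From Theorem~\ref{thm:mainresult} and equation~\eqref{eq:prediction.trained.transformer}, we have $\widehat y_\query = \testx_\query^\top \widehat a$, where
\[ \widehat a := \Gamma^{-1}\Bigl(\tfrac{1}{M}\sum_{i=1}^M \testy_i \testx_i\Bigr) \]
is a random vector independent of $(\testx_\query, \testy_\query)$. The key structural observation driving the calculation is that $\Lambda$ and $\Gamma$ commute, since $\Gamma = (1 + 1/N)\Lambda + \tfrac{1}{N}\tr(\Lambda) I_d$ is an affine function of $\Lambda$; this will let polynomials in $\Lambda$ and $\Gamma$ be manipulated freely.

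First, I would perform a bias--variance decomposition by adding and subtracting the best linear predictor's output $\testx_\query^\top a$. Since $a = \Lambda^{-1}\E[xy]$ satisfies the normal equations $\Lambda a = \E[xy]$, one obtains after taking expectations that the cross-term vanishes: conditional on $\widehat a$, independence of $(\testx_\query, \testy_\query)$ from $\widehat a$ yields
\[ \E\bigl[(\testx_\query^\top a - \testy_\query)\,\testx_\query^\top (\widehat a - a)\bigr] = \E\bigl[(\widehat a - a)^\top(\Lambda a - \E[xy])\bigr] = 0. \]
This reduces the risk to the sum of the Bayes-like term $\min_{w}\E[(\testx_\query^\top w - \testy_\query)^2]$ and the excess term $\E\bigl[\testx_\query^\top(\widehat a - a)(\widehat a - a)^\top\testx_\query\bigr]$, which by independence equals $\tr\bigl(\Lambda\, \E[(\widehat a - a)(\widehat a - a)^\top]\bigr)$.

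Second, I would split $\widehat a - a = (\widehat a - \E\widehat a) + (\E\widehat a - a)$ into a stochastic part and a bias part, where $\E\widehat a = \Gamma^{-1}\E[xy] = \Gamma^{-1}\Lambda a$. The cross term vanishes since the centered sum has zero mean. For the stochastic part, i.i.d.\ averaging gives $\Var(\widehat a) = \tfrac{1}{M}\Gamma^{-1}\Sigma\Gamma^{-1}$, and using commutativity of $\Lambda$ and $\Gamma$,
\[ \tr\bigl(\Lambda \Gamma^{-1}\Sigma\Gamma^{-1}\bigr) = \tr\bigl(\Sigma \Gamma^{-2}\Lambda\bigr), \]
producing the $\tfrac{1}{M}\tr(\Sigma\Gamma^{-2}\Lambda)$ term. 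For the bias part, the identity $\Gamma - \Lambda = \tfrac{1}{N}\Lambda + \tfrac{1}{N}\tr(\Lambda) I_d$ together with commutativity gives $\Gamma^{-1}\Lambda - I = -\tfrac{1}{N}\Gamma^{-1}\bigl(\Lambda + \tr(\Lambda)I_d\bigr)$, and hence
\[ a^\top(\Gamma^{-1}\Lambda - I)\Lambda(\Gamma^{-1}\Lambda - I)a = \tfrac{1}{N^2}\, a^\top\bigl(\Lambda + \tr(\Lambda)I_d\bigr)\Gamma^{-2}\Lambda\bigl(\Lambda + \tr(\Lambda)I_d\bigr)a. \]
Expanding the product and collecting powers of $\Lambda$ (legitimate by commutativity) yields exactly the three quadratic-form terms $\|a\|^2_{\Gamma^{-2}\Lambda^3}$, $2\tr(\Lambda)\|a\|^2_{\Gamma^{-2}\Lambda^2}$, and $\tr(\Lambda)^2\|a\|^2_{\Gamma^{-2}\Lambda}$.

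The proof requires no genuinely difficult step once the formula from Theorem~\ref{thm:mainresult} is in hand; the main work is bookkeeping. The two places that require care are (i) verifying that both cross-terms (the prediction-error/estimation cross-term via the normal equations, and the bias/variance cross-term via centering) vanish cleanly, and (ii) consistently invoking the commutativity of $\Lambda$ with $\Gamma^{-1}$ to collapse products like $\Gamma^{-1}\Lambda\Gamma^{-1}$ into $\Gamma^{-2}\Lambda$ and to expand the quadratic form so that the resulting expression matches the target display.
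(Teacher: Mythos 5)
Your proposal is correct and follows essentially the same route as the paper: the paper also decomposes around the best linear predictor weight $a=\Lambda^{-1}\E[xy]$ and the intermediate vector $b=\Gamma^{-1}\E[xy]$ (which is exactly your $\E\widehat a$), shows the cross-terms vanish by independence and the normal equations, and uses commutativity of $\Lambda$ and $\Gamma$ to collapse the variance term into $\tfrac{1}{M}\tr(\Sigma\Gamma^{-2}\Lambda)$ and to expand the bias term into the three $1/N^2$ quadratic forms. The only difference is organizational: you nest the bias--variance split of $\widehat a - a$ inside a two-term decomposition, whereas the paper writes out a flat six-term expansion of $(\widehat y_\query - y_\query)^2$ and kills the three cross-terms one by one; the underlying calculations are identical.
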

The full proof is deferred to Appendix \ref{appendix_proof_risk_general}.  Let us now make a few remarks on the above theorem before considering particular instances of $\calD$ where we may provide more explicit bounds on the prediction error.  

First, this theorem shows that, provided the length of prompts seen during training ($N$) and the length of the test prompt ($M$) is large enough, a transformer trained by gradient flow from in-context examples achieves prediction error competitive with the best linear model.  
Next, our bound shows that the length of prompts seen during training and the length of prompts seen at test-time have different effects on the prediction error: ignoring dimension and covariance-dependent factors, the prediction error is at most $O(1/M+1/N^2)$, decreasing more rapidly as a function of the training prompt length $N$ compared to the test prompt length $M$.  Additionally, it is worth noting that even if $M\to \infty$, the gap between the prediction error of the transformer with that of the best linear predictor does not vanish unless $N\to \infty$ as well.  Thus, the transformer is inherently limited by training on finite-length prompts.

Let us now consider when $\calD$ corresponds to noiseless linear models, so that for some $\testw\in \R^d$, we have $(\testx, \testy) = (\testx, \sip{\testw}{\testx})$, in which case the prediction of the trained transformer is given by~\eqref{eq:prediction.trained.transformer.linear}. 
Moreover, a simple calculation shows that the $\Sigma$ from Theorem~\ref{thm:general_risk_maintext} takes the form $\Sigma = \snorm{\testw}_\Lambda^2 \Lambda + \Lambda \testw \testw^\top \Lambda$. Hence Theorem~\ref{thm:general_risk_maintext} implies the prediction error for the prompt $\testprompt=(x_1, \sip{w}{x_1}, \dots, x_M, \sip{w}{x_M}, x_\query)$ is
    \begin{align*}
        &\quad \E_{\testx_1,...,\testx_M,\testx_{\query}} \left(\widehat{y}_{\query} - \sip{\testw}{\testx_{\query}}\right)^2 \notag \\ 
        &= \frac{1}{M} \left\{\left\|\testw\right\|^2_{\Gamma^{-2} \Lambda^3} + \tr(\Gamma^{-2} \Lambda^2) \left\|\testw\right\|^2_\Lambda\right\} 
        +\frac{1}{N^2} \left\{\left\|\testw\right\|^2_{\Gamma^{-2} \Lambda^3} + 2 \left\|\testw\right\|^2_{\Gamma^{-2} \Lambda^2} \tr(\Lambda) + \left\|\testw\right\|^2_{\Gamma^{-2} \Lambda} \tr(\Lambda)^2\right\} \\
        & \leq \frac{d+1}{M} \left\|\testw\right\|^2_{\Lambda} + \frac{1}{N^2} \left[\left\|\testw\right\|^2_{\Lambda} + 2 \left\|\testw\right\|^2_{2} \tr(\Lambda) + \left\|\testw\right\|^2_{\Lambda^{-1}} \tr(\Lambda)^2\right],
    \end{align*}
The inequality above uses that $\Gamma \succ \Lambda$.  Finally, if we assume that $\testw \sim \normal(0, I_d)$ and denote $\kappa$ as the condition number of $\Lambda$, then by taking expectations over $\testw$ we get the following:
    \begin{align*}
        \E_{\testx_1,...,\testx_M,\testx_{\query},\testw } \left(\widehat{y}_{\query} - \sip{\testw}{\testx_{\query}}\right)^2 &\leq \frac{(d+1)\tr(\Lambda)}{M} + \frac{1}{N^2} \left[\tr(\Lambda) + 2d \tr(\Lambda) + \tr(\Lambda^{-1}) \tr(\Lambda)^2\right]\\
        &\leq \frac{(d+1)\tr(\Lambda)}{M} + \frac{(1 + 2d + d^2\kappa)\tr(\Lambda)}{N^2},
    \end{align*}
From the upper bound above, we can see the rate w.r.t $M$ and $N$ are still at most $O(1/M)$ and $O(1/N^2)$ respectively. Moreover, the generalization error also scales with dimension $d$, $\tr(\Lambda)$ and the condition number $\kappa$. This suggests that for in-context examples involving covariates of greater variance, or a more ill-conditioned covariance matrix, the generalization error will be higher for the same lengths of training and testing prompts. Putting the above together with Theorem~\ref{thm:general_risk_maintext}, Definition~\ref{def:trained.on.incontext} and Definition~\ref{def:icl.function.class}, we get the following corollary.
\begin{corollary}
\label{corollary:trained.transformer}
The transformer $f_\lsa$ 
trained on length-$N$ prompts of in-context examples of functions in $\{x\mapsto \sip wx \}$ w.r.t. $w\sim \normal(0, I_d)$ and $\calD_x = \normal(0, \Lambda)$ by gradient flow on the population loss~\eqref{eqn:population_loss} for initializations satisfying Assumption~\ref{assume_init} converges to the model $f_\lsa(\cdot\ ; \WKQ_*, \WPV_*)$.  This model takes a prompt $\testprompt = (\testx_1, \testy_1, \dots, \testx_M, \testy_M, \testx_\query)$ and returns a prediction $\hat y_\query$ for $x_\query$ given by 
\[ \hat y_\query = [f_\lsa(\testprompt; \WKQ_*, \WPV_*)]_{d+1,M+1} = \testx_\query^\top \l( \Lambda + \f 1 N \Lambda + \f {\tr(\Lambda)} N I_d\r)^{-1} \l( \f 1 M \summ i M \testy_i \testx_i\r).\]
 This model in-context learns the class of linear models $\{x\mapsto \sip wx\}$ with respect to $w\sim \normal(0,I_d)$ and $\calDx = \normal(0,\Lambda)$ up to error $\eta := (1+2d+d^2\kappa) \tr(\Lambda)/N^2$ (where $\kappa$ is the condition number of $\Lambda$): provided $M\geq (d+1)\tr(\Lambda)\eps^{-1}$, the model achieves prediction error at most $\eta+\eps$.
\end{corollary}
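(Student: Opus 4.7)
The proof is essentially an assembly of the two main theorems already established, combined with the explicit computation carried out in the paragraphs immediately preceding the corollary statement. My plan is to split it into three short steps, mirroring the three claims in the corollary.

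First, I would invoke Theorem~\ref{thm:mainresult} directly. Since the training prompts are sampled exactly as specified in Section~2.3 (with $w_\tau \sim \normal(0, I_d)$ and $x_{\tau, i}, x_{\tau, \query} \sim \normal(0, \Lambda)$) and the initialization satisfies Assumption~\ref{assume_init}, the hypotheses of Theorem~\ref{thm:mainresult} hold. Therefore gradient flow on~\eqref{eqn:population_loss} converges to the limit $(\WKQ_*, \WPV_*)$ from~\eqref{eq:limit_expression}, establishing the first sentence of the corollary.

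Next, I would compute the prediction formula by plugging $\WKQ_*$ and $\WPV_*$ into the general prediction expression~\eqref{eqn:simple_expression_prediction}. Since the two scalar prefactors $[\tr(\Gamma^{-2})]^{1/4}$ and $[\tr(\Gamma^{-2})]^{-1/4}$ cancel, this yields exactly the computation carried out in~\eqref{eq:prediction.trained.transformer}, giving $\hat y_\query = \testx_\query^\top \Gamma^{-1}\bigl(\frac{1}{M}\sum_{i=1}^M \testy_i \testx_i\bigr)$. Unfolding the definition $\Gamma = (1 + 1/N)\Lambda + \tr(\Lambda)/N \cdot I_d$ gives the formula stated in the corollary.

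Third, I would establish the in-context learning guarantee by specializing Theorem~\ref{thm:general_risk_maintext} to the noiseless linear case where $\calD$ is the joint distribution of $(\testx, \sip{\testw}{\testx})$ with $\testx \sim \normal(0, \Lambda)$. A brief direct calculation shows $\E_\calD[xy] = \Lambda \testw$ and $\Sigma = \|\testw\|_\Lambda^2 \Lambda + \Lambda \testw \testw^\top \Lambda$; the best linear predictor has zero error. Substituting into~\eqref{eqn:risk_nonlinear} and using the operator-monotone bound $\Gamma \succ \Lambda$ (so that $\Gamma^{-2} \Lambda \preceq \Lambda^{-1}$, etc.) reduces the prediction error to the displayed inequality in the text just before the corollary. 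Taking expectation over $\testw \sim \normal(0, I_d)$, which contributes factors of the form $\E\|\testw\|_A^2 = \tr(A)$, and bounding $\tr(\Lambda^{-1}) \leq d\kappa/\|\Lambda\|_{op} \leq d\kappa\tr(\Lambda)/\|\Lambda\|_{op}^2$ — more simply, using $\tr(\Lambda^{-1})\tr(\Lambda) \leq d^2 \kappa$ via $\tr(\Lambda^{-1}) \leq d/\lambda_{\min}(\Lambda)$ and $\tr(\Lambda) \leq d\lambda_{\max}(\Lambda)$ — yields the bound $(d+1)\tr(\Lambda)/M + (1+2d+d^2\kappa)\tr(\Lambda)/N^2$. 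Setting $\eta := (1+2d+d^2\kappa)\tr(\Lambda)/N^2$ and requiring $M \geq (d+1)\tr(\Lambda)/\eps$ reduces the first term to $\eps$, giving total prediction error at most $\eta + \eps$, which is exactly Definition~\ref{def:icl.function.class} with the stated sample complexity $M_{\calDH, \calDx}(\eps) = (d+1)\tr(\Lambda)/\eps$.

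There is no genuine obstacle here; the entire corollary is an assembly step, and all the nontrivial work is done by Theorems~\ref{thm:mainresult} and~\ref{thm:general_risk_maintext}. The only minor care needed is in verifying that the cancellation of the scaling factors in $(\WKQ_*, \WPV_*)$ is exact (it is, since the prediction is bilinear in $(\WPV, \WKQ)$) and in tracking the constants in the $\testw$-expectation to arrive at the clean form $(1 + 2d + d^2\kappa)\tr(\Lambda)$.
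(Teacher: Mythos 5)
Your proposal is correct and follows exactly the paper's own route: Theorem~\ref{thm:mainresult} for convergence, substitution of $(\WKQ_*,\WPV_*)$ into~\eqref{eqn:simple_expression_prediction} with the scalar prefactors canceling to get the displayed prediction, and then Theorem~\ref{thm:general_risk_maintext} specialized to the noiseless linear case with $\Sigma = \snorm{\testw}_\Lambda^2\Lambda + \Lambda\testw\testw^\top\Lambda$, the bound via $\Gamma\succ\Lambda$, expectation over $\testw\sim\normal(0,I_d)$, and the estimate $\tr(\Lambda^{-1})\tr(\Lambda)\le d^2\kappa$. All the constant-tracking matches the computation in the text preceding the corollary, so nothing is missing.
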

It is worth emphasizing that the transformer $f_\lsa(\cdot\ ; \WKQ_*, \WPV_*)$ only learns the function class up to error $\eta=O(1/N^2)$ in the sense of Definition~\ref{def:icl.function.class}.  In particular, training on finite-length prompts leads to prediction error bounded away from zero.

\subsection{Behavior of trained transformer under distribution shifts}\label{sec:main.distshift}
Using the identity~\eqref{eq:prediction.trained.transformer}, it is straightforward to characterize the behavior of the trained transformer under a variety of distribution shifts.  In this section, we shall examine a number of shifts that were first explored empirically for transformer architectures by~\citet{garg2022can}.  Although their experiments were for transformers trained by gradient descent, we find that (in the case of linear models) many of the behaviors of the trained transformers under distribution shift are identical to those predicted by our theoretical characterizations of the performance of transformers with a single linear self-attention layer trained by gradient flow on the population. 

Following~\citet{garg2022can}, for training prompts of the form $(x_1, h(x_1), \dots, x_N, h(x_N), x_\query)$, let us assume $x_i, x_\query \iid \dtrainx$ and $h\sim \dtrainh$, while for test prompts let us assume $x_i\iid \dtestx$, $x_\query \sim \dtestquery$, and $h\sim \dtesth$.  We will consider the following distinct categories of shifts:

\begin{itemize}
\item Task shifts: $\dtrainh\neq \dtesth$.
\item Query shifts: $\dtestquery \neq \dtestx$.  
\item Covariate shifts: $\dtrainx \neq \dtestx$.
\end{itemize} 

In the following, we shall fix $\dtrainx = \normal(0,\Lambda)$ and vary the other distributions.  Recall from~\eqref{eq:prediction.trained.transformer} that the prediction for a test prompt $(x_1, y_1, \dots, x_N, y_N, x_\query)$ is given by (for $N$ large),
\begin{equation}\label{eq:hatyquery.nlarge}
\hat y_\query = x_\query^\top\Gamma^{-1} \l( \f 1 M \summ i M y_i x_i \r) \approx x_\query^\top \Lambda^{-1} \l( \f 1 M \summ i M y_i x_i \r).
\end{equation}

\paragraph{Task shifts.}  These shifts are tolerated easily by the trained transformer.  As Theorem~\ref{thm:general_risk_maintext} shows, the trained transformer is competitive with the best linear model provided the prompt length during training and at test time is large enough.  In particular, even if the prompt is such that the labels $y_i$ are not given by $\sip{w}{x_i}$ for some $w\sim \normal(0,I_d)$, the trained transformer will compute a prediction which has error competitive with the best linear model that fits the test prompt. 

For example, consider a prompt corresponding to a noisy linear model, so that the prompt consists of a sequence of $(x_i, y_i)$ pairs where $y_i = \sip{w}{x_i}+\eps_i$ for some arbitrary vector $w\in \R^d$ and independent sub-Gaussian noise $\eps_i$.  Then from~\eqref{eq:hatyquery.nlarge}, the prediction of the transformer on query examples is
\[ \hat y_\query \approx x_\query^\top \Lambda^{-1} \l( \f 1 M \summ i M y_i x_i \r) = x_\query^\top \Lambda^{-1} \l( \f 1 M \summ i M x_i x_i^\top\r) w + x_\query^\top \Lambda^{-1} \l( \f 1 M \summ i M \eps_i x_i\r).\]
Since $\eps_i$ is mean zero and independent of $x_i$, this is approximately $x_\query^\top w$ when $M$ is large.  And note that this calculation holds for an \textit{arbitrary} vector $w$, not just those which are sampled from an isotropic Gaussian or those with a particular norm.  This behavior coincides with that of the trained transformers observed by~\citet{garg2022can}. 

\paragraph{Query shifts.}  Continuing from~\eqref{eq:hatyquery.nlarge}, since $y_i = \sip{w}{x_i}$,
\[ \hat y_\query \approx x_\query^\top \Lambda^{-1} \l( \f 1 M \summ i M x_i x_i^\top \r) w.\]
From this we see that whether query shifts can be tolerated hinges upon
the distribution of the $x_i$'s.  Since $\dtrainx = \dtestx$, if $M$ is large then
\begin{equation}\label{eqn:approx-pred}
\hat y_\query \approx x_\query^\top \Lambda^{-1} \Lambda w = x_\query^\top w.
\end{equation}
Thus, very general shifts in the query distribution can be tolerated. 
On the other hand, very different behavior can be expected if 
$M$ is not large and the query example depends on the training data.  For example, if the query example is orthogonal to the subspace spanned by the $x_i$'s, the prediction will be zero,
as was observed with transformer architectures by~\citet{garg2022can}. 

\paragraph{Covariate shifts.}  In contrast to task and query shifts, covariate shifts cannot be fully tolerated in the transformer.  This can be easily seen due to the identity~\eqref{eq:prediction.trained.transformer}: when $\dtrainx\neq \dtestx$, then the approximation in~\eqref{eqn:approx-pred} does not hold as $\f 1 M \summ i M x_i x_i^\top$ will not cancel $\Gamma^{-1}$ when $M$ and $N$ are large.  For instance, if we consider test prompts where the covariates are scaled by a constant $c\neq 1$, then 
\[ \hat y_\query \approx x_\query^\top \Lambda^{-1} \l( \f 1 M \summ i M x_i x_i^\top \r) \approx x_\query^\top \Lambda^{-1} c^2 \Lambda w = c^2 x_\query^\top w \neq x_\query^\top w.\]
This failure mode of the trained transformer with linear self-attention was also observed in the trained transformer architectures by~\citet{garg2022can}.  This suggests that although the predictions of the transformer may look similar to those of ordinary least squares in some settings, the algorithm implemented by the transformer is not the same since ordinary least squares is robust to scaling of the features by a constant.  

It may seem surprising that a transformer trained on linear regression tasks fails in settings where ordinary least squares performs well.  However, both the linear self-attention transformer we consider and the transformers considered by~\citet{garg2022can} were trained on instances of linear regression when the covariate distribution $\calDx$ over the features was fixed across instances.  This leads to the natural question of what happens if the transformers instead are trained on prompts where the covariate distribution varies across instances, which we explore in the following section.

\subsection{Transformers trained on prompts with random covariate distributions}\label{sec:main.randomcov}
In this section, we will consider a variant of training on in-context examples (in the sense of Definition~\ref{def:trained.on.incontext}) where the distibution $\calD_x$ is itself sampled randomly from a distribution, and training prompts are of the form $(x_1, h(x_1), \dots, x_N, h(x_N), x_\query)$ where $x_i, x_\query \iid \calD_x$ and $h\sim \calDH$.   More formally, we can generalize Definition~\ref{def:trained.on.incontext} as follows.
\begin{definition}[Trained on in-context examples with random covariate distributions]
\label{def:trained.on.incontext.random}
Let $\Delta$ be a distribution over distributions $\calDx$ defined on an input space $\calX$, $\calH\subset \calY^ \calX$ a set of functions $\calX\to \calY$, and $\calDH$ a distribution over functions in $\calH$.  Let $\ell:\calY\times \calY \to \R$ be a loss function.   Let $\seqspace = \cup_{n\in \N} \{ (x_1, y_1, \dots, x_n, y_n): x_i \in \calX, y_i\in \calY \}$ be the set of finite-length sequences of $(x, y)$ pairs and let 
\[ \calF_\Theta = \{f_\theta : \seqspace\times \calX \to \calY,\, \theta\in \Theta\}\]
be a class of functions parameterized by some set $\Theta$.  We say that a model $f:\seqspace\times \calX \to \calY$ is \emph{trained on in-context examples of functions in $\calH$ under loss $\ell$ w.r.t. $\calDH$ and distribution over covariate distributions $\Delta$} if $f = f_{\theta^*}$ where $\theta^*\in \Theta$ satisfies
\begin{equation} \label{eq:icl.stochastic.opt.def.random}
\theta^* \in \mathrm{argmin}_{\theta \in \Theta} \E_{\prompt = (x_1, h(x_1), \dots, x_N, h(x_{N}), x_\query)} \l[ \ell\l(f_\theta(\prompt), h(x_\query) \r)\r],
\end{equation}
where $\calDx \sim \Delta$, $x_{i},x_\query \iid \calDx$ and $h\sim \calDH$.
\end{definition}
We recover the previous definition of training on in-context examples by taking $\Delta$ to be concentrated on a singleton, $\supp(\Delta) = \{ \calD_x\}$.  The natural question is then, if a model $f$ is trained on in-context examples from a function class $\calH$ w.r.t. $\calDH$ and a \textit{distribution} $\Delta$ over covariate distributions, and if one then samples some covariate distribution $\calDx\sim \Delta$, does $f$ in-context learn $\calH$ w.r.t. $(\calDH, \calDx)$ for that $\calDx$ (cf. Definition~\ref{def:icl.function.class}) with small prediction error? Since $\calDx$ is random, we can hope that this may hold in expectation or with high probability over the sampling of the covariate distribution.  In the remainder of this section, we will explore this question for transformers with a linear self-attention layer trained by gradient flow on the population loss.

We shall again consider the case where the covariates have Gaussian marginals,  $x_i \sim \normal(0, \Lambda)$, but we shall now assume that within each prompt we first sample a random covariance matrix $\Lambda$.  For simplicity, we will restrict our attention to the case where $\Lambda$ is diagonal.  More formally, we shall assume training prompts are sampled as follows.  For each independent task indexed by $\tau\in [B]$, we first sample $w_\tau \sim \normal(0, I_d)$.  Then, for each task $\tau$ and coordinate $i\in [d]$, we sample $\lambda_{\tau, i}$ independently such that the distribution of each $\lambda_{\tau,i}$ is fixed and has finite third moments and is strictly positive almost surely.  We then form a diagonal matrix
\[ \Lambda_\tau = \diag(\lambda_{\tau, 1}, \dots, \lambda_{\tau, d}).\]
Thus the diagonal entries of $\Lambda_\tau$ are independent but could have different distributions, and $\Lambda_\tau$ is identically distributed for $\tau =1, \dots, B$.  
Then, conditional on $\Lambda_\tau$, we sample independent and identically distributed $x_{\tau,1}, \dots, x_{\tau, N}, x_{\tau,\query}\sim \normal(0, \Lambda_\tau)$.  A training prompt is then given by $P_\tau = (x_{\tau, 1}, \sip{w_\tau}{x_{\tau,1}}, \dots, x_{\tau, N}, \sip{w_\tau}{x_{\tau, N}}, x_{\tau, \query})$  Notice that here, $x_{\tau,i}, x_{\tau,\query}$ are conditionally independent given the covariance matrix $\Lambda_\tau$, but not independent in general.  We consider the same token embedding matrix as \eqref{eq:embedding.matrix.prompt} and linear self-attention network, which forms the prediction $\hat y_{\query,\tau}$ as in~\eqref{eqn:simple_expression_prediction}.  
The empirical risk is the same as before (see~\eqref{eqn:emp_loss_maintext}), and as in~\eqref{eqn:population_loss}, we then take $B\to\infty$ and consider the gradient flow on the population loss.   The population loss now includes an expectation over the distribution of the covariance matrices in addition to the task weight $w_\tau$ and covariate distributions, and is given by
\begin{equation}\label{eqn:loss_random_cov}
    L(\params) = \f 12 \E_{w_\tau, \Lambda_{\tau}, x_{\tau,1}, \cdots, x_{\tau,N}, x_{\tau,\query}} \l[ (\widehat y_{\tau,\query} - \sip{w_\tau}{x_{\tau,\query}})^2 \r].
\end{equation}

\

In the main result for this section, we show that gradient flow with a suitable initialization converges to a global minimum, and we characterize the limiting solution.  The proof will be deferred to Appendix \ref{appendix_proof_random}.

\begin{theorem}[Global convergence in random covariance case]\label{thm:convergence_random}
    Consider gradient flow of the linear self-attention network $f_\lsa$ defined in \eqref{eq:lsa} over the population loss \eqref{eqn:loss_random_cov}, where  $\Lambda_\tau$ are diagonal with independent diagonal entries which are strictly positive a.s. and have finite third moments. Suppose the initialization satisfies Assumption \ref{assume_init}, $\left\|\E \Lambda_\tau \Theta\right\|_F \neq 0$, with initialization scale $\sigma>0$ satisfying
\begin{equation}\label{eqn:init_cond_random}
     \sigma^2 < \frac{2\left\|\E \Lambda_\tau \Theta\right\|_F^2}{\sqrt{d} \left[\E \left\|\Gamma_\tau\right\|_{op} \left\|\Lambda_\tau\right\|_F^2 \right]}.
    \end{equation}
    Then gradient flow converges to a global minimum of the population loss \eqref{eqn:loss_random_cov}. Moreover, $\WPV$ and $\WKQ$ converge to $\WPV_*$ and $\WKQ_*$ respectively, where
    \begin{equation}\label{eq:limit_expression_random}
    \begin{aligned}
    \WKQ_*
    &= \left\|\left[\E \Gamma_\tau \Lambda_\tau^2\right]^{-1} \E \left[\Lambda_\tau^2\right]\right\|_F^{-\frac{1}{2}} \cdot  
    \begin{pmatrix}[1.5]
        \left[\E \Gamma_\tau \Lambda_\tau^2\right]^{-1} \left[\E \Lambda_\tau^2\right] & 0_d \\
        0_d^\top & 0
    \end{pmatrix}, \\
    \WPV_*
    &= \left\|\left[\E \Gamma_\tau \Lambda_\tau^2\right]^{-1} \E \left[\Lambda_\tau^2\right]\right\|_F^{\frac{1}{2}} \cdot
    \begin{pmatrix}[1.5]
    0_{d \times d} & 0_d \\
    0_d^\top & 1
    \end{pmatrix},
\end{aligned}
\end{equation}
where $\Gamma_\tau = \frac{N+1}{N} \Lambda_\tau + \frac{1}{N}\tr(\Lambda_\tau)I_d \in \R^{d \times d}$ and the expectations above are over the distribution of $\Lambda_\tau.$ 
\end{theorem}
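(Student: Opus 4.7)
The plan is to adapt the proof strategy of Theorem~\ref{thm:mainresult} to accommodate the additional expectation over the random covariances $\Lambda_\tau$. First, I would perform the structural reduction: by the prediction formula~\eqref{eqn:simple_expression_prediction}, only the product $vU$ of $U := \WKQ_{11}$ and $v := w_{22}^{PV}$ enters the loss, and all other blocks of $\WPV, \WKQ$ vanish at initialization under Assumption~\ref{assume_init}. Direct computation of $\nabla L$ at such parameter configurations shows the gradient in those remaining coordinates vanishes identically (the integrands involve odd Gaussian moments of $x_{\tau,i}$), so those blocks remain zero along gradient flow. This reduces the problem to a two-block flow on $(U, v) \in \R^{d\times d} \times \R$.

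Second, I would compute the population loss explicitly. Writing $\hat y_{\tau,\query} = v \langle w_\tau, \hat \Sigma_\tau U x_{\tau,\query} \rangle$ with $\hat \Sigma_\tau := \tfrac{1}{N}\sum_{i=1}^N x_{\tau,i} x_{\tau,i}^\top$, integrating out $w_\tau \sim \normal(0, I_d)$, then $x_{\tau,i}, x_{\tau,\query}$ conditional on $\Lambda_\tau$ (using the key moment identity $\E[\hat \Sigma_\tau^2 \mid \Lambda_\tau] = \Lambda_\tau \Gamma_\tau$ that already underlies Theorem~\ref{thm:mainresult}), and finally taking expectation over $\Lambda_\tau$ gives
\begin{equation*}
2L(U, v) \;=\; v^2 \,\E_{\Lambda_\tau}\tr\bigl(\Lambda_\tau U^\top \Lambda_\tau \Gamma_\tau U\bigr) \;-\; 2v \,\E_{\Lambda_\tau}\tr\bigl(\Lambda_\tau^2 U\bigr) \;+\; \E_{\Lambda_\tau}\tr(\Lambda_\tau).
\end{equation*}
This is a convex quadratic in $A := vU$. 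Setting $\nabla_A L = 0$ yields $\E_{\Lambda_\tau}[\Lambda_\tau \Gamma_\tau A \Lambda_\tau] = \E_{\Lambda_\tau}[\Lambda_\tau^2]$, and using diagonality of $\Lambda_\tau$ together with independence of its entries, a componentwise argument identifies the unique minimizer as the diagonal matrix $A_* = [\E\Gamma_\tau \Lambda_\tau^2]^{-1}\E[\Lambda_\tau^2]$ (the off-diagonal coefficients of the quadratic form are strictly positive in expectation, forcing off-diagonals of $A_*$ to vanish).

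Third, I would establish the balancedness invariant. The chain rule gives $\dot U = -v\,\nabla_A L(vU)$ and $\dot v = -\langle U, \nabla_A L(vU)\rangle$, so $\tfrac{d}{dt}(v^2 - \|U\|_F^2) = 0$. Since $v(0)^2 = \sigma^2$ and $\|U(0)\|_F^2 = \sigma^2 \|\Theta \Theta^\top\|_F^2 = \sigma^2$, the invariant $v(t)^2 = \|U(t)\|_F^2$ holds throughout. At any critical point with $v \neq 0$, one must have $vU = A_*$, and balancedness then pins down $v_* = \|A_*\|_F^{1/2}$ and $U_* = A_* / \|A_*\|_F^{1/2}$, matching~\eqref{eq:limit_expression_random} exactly.

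The main obstacle is to certify that gradient flow converges to this global minimum rather than to the only other critical family, $(U,v) = (0,0)$ (which is a strict saddle: the Hessian of $L$ has a negative direction of the form $(\epsilon U', \epsilon)$ with $\langle U', \E[\Lambda_\tau^2]\rangle > 0$). My plan is to combine a Lojasiewicz-type argument, using real-analyticity of $L$ together with monotone loss descent along the bounded sublevel sets of $L$, to obtain convergence of the trajectory to \emph{some} critical point; with a quantitative estimate ruling out the saddle at the origin. For the latter, I would use the initialization bound~\eqref{eqn:init_cond_random} to show that during an initial phase, the first-order signal coming from $\|\E\Lambda_\tau \Theta\|_F^2$ dominates the higher-order perturbation controlled by $\sigma^2 \E[\|\Gamma_\tau\|_{op}\|\Lambda_\tau\|_F^2]$, so that $L$ drops strictly below $L(0,0) = \tfrac{1}{2}\E\tr(\Lambda_\tau)$ by a definite amount. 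Monotonicity of $L$ under gradient flow then forces the limit to satisfy $v_* \neq 0$, hence $vU = A_*$, yielding the claimed limiting parameters. I expect this saddle-escape step to be the most delicate component, since it must carefully track how the cross-task randomness in $\Lambda_\tau$ interacts with the two-layer product structure $A = vU$ throughout the early dynamics.
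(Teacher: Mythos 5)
Your proposal correctly identifies the key structural steps: the reduction to a two-block flow $(U,v)$ with other blocks of $\WPV,\WKQ$ remaining zero; the explicit population loss $2L = v^2\,\E\tr(\Gamma_\tau\Lambda_\tau U\Lambda_\tau U^\top) - 2v\,\E\tr(\Lambda_\tau^2 U^\top) + \E\tr(\Lambda_\tau)$; the observation that it depends on $(U,v)$ only through $A=vU$ and is a strictly convex quadratic in $A$ with unique minimizer $A_* = [\E\Gamma_\tau\Lambda_\tau^2]^{-1}\E[\Lambda_\tau^2]$ (diagonal because for diagonal $\Lambda_\tau$ the linear term only sees $A_{ii}$ while the off-diagonal quadratic coefficients $\zeta_{ij}$ are strictly positive); and the balancedness invariant $v^2=\|U\|_F^2$ pinning down the factorization of $A_*$ into $(U_*,v_*)$. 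This matches the paper's skeleton (Lemma~\ref{lem:dyn_random}, Corollary~\ref{coro:dyn_diag_random}, Lemmas~\ref{lem:loss_random_cov}, \ref{lem:min_loss_random}, \ref{lem:balanced_random}).

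Where you diverge is the convergence argument, and your route there is both vaguer and somewhat more elaborate than necessary. The paper does not invoke Lojasiewicz/real-analyticity; instead it observes that the initialization bound~\eqref{eqn:init_cond_random} already makes $\ell_\rdm(\bUtl(0),\bUlast(0))<0$ at time zero — no ``initial phase'' or transient dynamics analysis is required — and combined with monotone descent and balancedness, $\bUlast(t)$ is bounded below by an explicit positive constant for all $t$ (Lemmas~\ref{lem:non_zero_random}, \ref{lem:lower_bound_u_random}). This lower bound on $\bUlast$ immediately yields a quantitative PL inequality with explicit constant $\nu$ (Lemma~\ref{lemma:proxypl_random}), from which Gr\"onwall gives exponential convergence to the global minimum. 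This is cleaner than your plan, which would additionally need to establish that the trajectory stays bounded (true here, but requires an argument using balancedness: with $|v|=\|U\|_F$ the loss is coercive) before a Lojasiewicz argument applies.

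Two smaller inaccuracies worth flagging. First, the critical set of $L$ at $v=0$ is not just the origin: any $(U,0)$ with $\E\tr(\Lambda_\tau^2 U^\top)=0$ is critical, so it is a hyperplane through the origin — all with loss value $0$, so your argument still excludes them, but the statement about ``the only other critical family $(U,v)=(0,0)$'' is imprecise. Second, the vanishing of the gradient in the off-block coordinates comes from $w_\tau$ being mean-zero and independent of the covariates (the integrands are linear in $w_\tau$), not primarily from odd Gaussian moments of $x_{\tau,i}$; and in any case the independence of the diagonal entries of $\Lambda_\tau$ stated in the theorem is not actually needed for the componentwise minimization argument — diagonality alone suffices.
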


\ 
\def\new{\mathsf{new}}
From this result, we can see why the trained transformer fails in the random covariance case. Suppose we have a new prompt corresponding to a weight matrix $w \in \R^d$ and covariance matrix $\Lambda_\new,$ sampled from the same distribution as the covariance matrices for training prompts, so that conditionally on $\Lambda_\new$ we have $x_i, x_\query \iid \normal(0, \Lambda_\new)$.  The ground-truth labels are given by $y_i = \sip{w}{x_i}, i \in [M]$ and $y_\query = \sip{w}{x_\query}.$ At convergence, the prediction by the trained transformer on the new task will be
\begin{align}
    \widehat y_{\query} 
    &= \begin{pmatrix}[1.5]
    0_d^\top & 1
\end{pmatrix}
\begin{pmatrix}[1.5]
    \frac{1}{M}\sum_{i=1}^{M} x_i x_i^\top + \frac{1}{M} x_{\query} x_{\query}^\top & \frac{1}{M}\sum_{i=1}^{M} x_i y_i \\
    \frac{1}{M}\sum_{i=1}^{M} x_i^\top \testy_i &
    \frac{1}{M}\sum_{i=1}^{M} y_i^2
\end{pmatrix}
\begin{pmatrix}[1.5]
    \left[\E \Gamma_\tau \Lambda_\tau^2\right]^{-1} \l[ \E \Lambda_\tau^2\r] & 0_d\\
    0_d^\top & 0
\end{pmatrix}
\begin{pmatrix}[1.5]
x_{\query} \\
0
\end{pmatrix}
\notag\\
    &= x_{\query}^\top \cdot \left[\E \Lambda_\tau^2\right]  \left[\E \Gamma_\tau \Lambda_\tau^2\right]^{-1} \cdot \left[\frac{1}{M}\sum_{i=1}^{M} x_i x_i^\top \right] w \notag\\
    & \to x_{\query}^\top \cdot \left[\E \Lambda_\tau^2\right]  \left[\E \Gamma_\tau \Lambda_\tau^2\right]^{-1} \cdot \Lambda_\new w \quad \text{ almost surely when } M \to \infty.\label{eq:yhatquery.randomcov.Minfinity}
\end{align}
The last line comes from the strong law of large numbers.  Thus, in order for the prediction on the query example to be close to the ground-truth $x_\query^\top w$, we need $\left[\E \Lambda_\tau^2\right]  \left[\E \Gamma_\tau \Lambda_\tau^2\right]^{-1} \cdot \Lambda_\new$ to be close to the identity.  When $\Lambda_\tau\equiv \Lambda_\new$ is deterministic, this indeed is the case as we know from Theorem~\ref{thm:general_risk_maintext}.  However, this clearly does not hold in general when $\Lambda_\tau$ is random. 

To make things concrete, let us assume for simplicity that $M,N \to \infty$ so that $\Gamma_\tau \to \Lambda_\tau$ and the identity~\eqref{eq:yhatquery.randomcov.Minfinity} holds (conditionally on $\Lambda_\new)$.   Then, taking expectation over $\Lambda_\new$ in~\eqref{eq:yhatquery.randomcov.Minfinity}, we obtain
\begin{equation*}
    \E \left[\left.\widehat y_{\query}\right |
    x_{\query},w\right] \to x_{\query}^\top \cdot \left[\E \Lambda_\tau^2\right]  \left[\E \Lambda_\tau^3 \right]^{-1} \cdot \left[\E \Lambda_\tau \right] w.
\end{equation*}

If we consider the case $\lambda_{\tau, i} \iid \mathsf{Exponential}(1)$, so that $\E[\Lambda_\tau]=I_d$, $\E[\Lambda_\tau^2]= 2 I_d$, and $\E[\Lambda_\tau^3] = 6 I_d$, we get
\begin{equation*}
    \E \widehat y_{\query}  \to \frac{1}{3} \sip{w}{x_\query}.
\end{equation*}
This shows that for transformers with a single linear self-attention layer, training on in-context examples with random covariate distributions does not allow for in-context learning of a hypothesis class with varying covariate distributions.

\paragraph{Experiments with large, nonlinear transformers.} 
We have shown that even when trained on prompts with random covariance matrices, transformers with a single linear self-attention layer fail to in-context learn linear models with random covariance matrices.  We now investigate the behavior of more complex transformer architectures that are trained on in-context examples of linear models, both in the fixed-covariance case and in the random-covariance case.

We examine the performance of transformers with a GPT2 architecture~\citep{radford2019language} that are trained on linear regression tasks with mean-zero Gaussian features with either a fixed covariance matrix or random covariance matrices.  For the fixed covariance case, the covariance matrix is fixed to the identity matrix across prompts. For the random covariance case, covariates are drawn from $x\sim \normal(0,c\Lambda)$ where $\Lambda$ is diagonal with $\lambda_{i}\iid \mathsf{Exponential}(1)$ and $c> 0$ is a scaling factor. We set $c = 1$ during training and vary this value at test time. The transformer is trained using the procedure of~\citet{garg2022can} (see Appendix~\ref{appendix_experiments} for more details).  We consider linear models in $d=20$ dimensions and we train on prompt lengths of $N = 40, 70, 100$ with either fixed or random covariance matrices.  The performance of these trained models, when tested on new data with fixed covariance or random covariance matrices ($c = 1, 4, 9$), is represented in six curves in Figure~\ref{figure}. Using the calculation~\eqref{eq:yhatquery.randomcov.Minfinity}, we can compare the prediction error for the linear self-attention networks in the $M\to \infty$, $N\to \infty$ limit (the black dash line) to those of GPT2 architectures.  We additionally compare these models to the ordinary least-squares solution which is optimal for this task.  

\begin{figure}[t!]
    \centering
    \includegraphics[width=1.0 \textwidth]{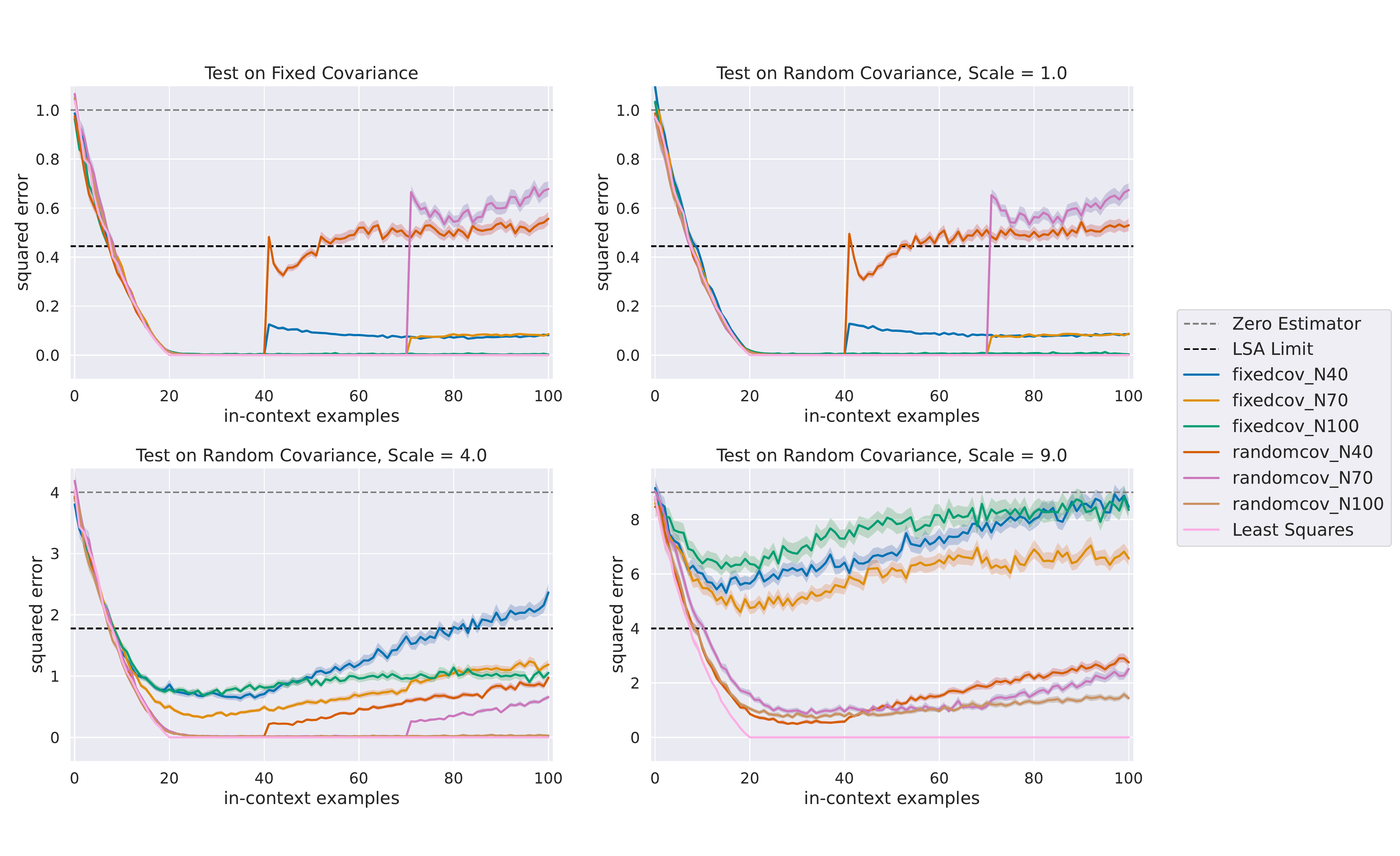}
    \caption{Normalized prediction error for transformers with GPT2 architectures as a function of the number of in-context test examples $M$ when trained on in-context examples of linear models in $d=20$ dimensions.  Colored lines correspond to different training context lengths $(N\in \{40,70,100\})$ and different training procedures (either a fixed identity covariance matrix or random diagonal covariance matrices with each diagonal element sampled i.i.d. from the standard exponential distribution).  The four figures correspond to evaluating on either fixed covariance or random covariance matrices of different scales.  The gray dashed line shows the prediction error of zero estimator and the black dashed line the prediction error of LSA model when $M,N \to \infty.$  The GPT2 models achieve smaller error when they are trained on random covariance matrices with larger contexts, but their prediction error spikes when evaluated on contexts larger than those they were trained on. }
    \label{figure}
\end{figure}

From the figure, we can see that the GPT2 model trained on fixed covariance succeeds in the random covariance setting if the variance is not too large, which shows that the larger nonlinear model is able to generalize better than the model with a single linear self-attention layer.  However, when the variance is large ($c = 4, 9$ for the bottom two figures), the GPT2 model trained with fixed covariance is unsuccessful.  When trained on random covariance, the model performs better for test prompts from higher-variance random covariance matrices, but still fails to match least squares when the scaling is largest ($c=9$). 

Furthermore, we notice some surprising behaviors when the test prompt length exceeds the training prompt length (i.e., $M > N$): there is an evident spike in prediction error, regardless of whether training and testing were performed on fixed or random covariance, and the spike appears to decrease when evaluated on prompts with higher variance.  Although we are unsure of why the spike should decrease with higher-variance prompts, the failure of large language models to generalize to larger contexts than they were trained on is a well-known problem~\citep{dai2019transformerxl,anil2022lengthgeneralization}.  In our setting, we conjecture that this spike in error comes from the absolute positional encodings in the GPT2 architecture.  The positional encodings are randomly-initialized and are learnable parameters but the encoding for position $i$ is only updated if the transformer encounters a prompt which has a context of length $i$.  Thus, when evaluating on prompts of length $M>N$, the model is relying upon random positional encodings for $M-N$ samples.  We note that a concurrent work has explored the performance of transformers with GPT2 architectures for in-context learning of linear models and found that removing positional encoders improves performance when evaluating on larger contexts~\citep{ahuja2023context}.  We leave further investigation of this behavior for future work.

\section{Proof ideas}
\label{sec:proof}
In this section, we briefly outline the proof sketch of Theorem \ref{thm:mainresult}.  The full proof of this theorem is left for Appendix~\ref{appendix:proof_main}.  
\subsection{Equivalence to a quadratic optimization problem}
We recall each task $\tau$ corresponds to a weight vector $w_\tau \sim \normal(0,I_d).$ The prompt inputs for this task are $x_{\tau,j} \iid \normal(0,\Lambda),$ which are also independent of $w_\tau$. The corresponding labels are $y_{\tau,j} = \sip{w_{\tau}}{x_{\tau,j}}.$ For each task $\tau,$ we can form the prompt into a token matrix $E_\tau \in \mathbb{R}^{(d+1) \times (N+1)}$ as in \eqref{eq:embedding.matrix.prompt}, with the right-bottom entry being zero.

The first key step in our proof is to recognize that the prediction $\widehat y_{\query}(E_\tau;\params)$ in the linear self-attention model can be written as the output of a quadratic function $u^\top H_\tau u$ for some matrix $H_\tau$ depending on the token embedding matrix $E_\tau$ and for some vector $u$ depending on $\params = (\WKQ, \WPV)$.  This is shown in the following lemma, the proof of which is provided in Appendix~\ref{sec_pf_quadratic}. 

\begin{restatable}{lemma}{lsaisquadratic}\label{lemma:lsa.is.quadratic}
Let $E_\tau \in \R^{(d+1)\times (N+1)}$ be an embedding matrix corresponding to a prompt of length $N$ and weight $w_\tau$.  Then the prediction $\widehat y_{\query}(E_\tau;\params)$ for the query covariate can be written as the output of a quadratic function,
    \begin{equation*}
        \widehat y_{\query}(E_\tau; \params) = \param^\top H_\tau \param,
    \end{equation*}
    where the matrix $H_\tau$ is defined as,
    \begin{equation}\label{eqn:def_H_tau}
        H_\tau = \frac{1}{2} X_\tau \otimes \left(\frac{E_\tau E_\tau^\top}{N}\right) \in \mathbb{R}^{(d+1)^2 \times (d+1)^2},\quad
        X_\tau = \begin{pmatrix}[1.5]
        0_{d \times d} & x_{\tau,\query}\\
        \left(x_{\tau,\query}\right)^\top & 0
        \end{pmatrix} \in \mathbb{R}^{(d+1) \times (d+1)}
    \end{equation}
    and
    \begin{equation*}
        \param = \vector(\bU) \in \mathbb{R}^{(d+1)^2}, \quad
        \bU = \begin{pmatrix}[1.5]
                \bUtl & \bUtr \\
                (\bUbl)^\top & \bUlast
        \end{pmatrix} \in \mathbb{R}^{(d+1) \times (d+1)},
    \end{equation*}
    where $\bUtl = \WKQ_{11} \in \mathbb{R}^{d \times d}, \bUtr = w_{21}^{PV} \in \mathbb{R}^{d \times 1},\bUbl = w_{21}^{KQ} \in \mathbb{R}^{d \times 1},\bUlast = w_{22}^{PV} \in \mathbb{R}$ correspond to particular components of 
    $\WPV$ and $\WKQ$, defined in \eqref{eqn:block_matrix}. 
\end{restatable}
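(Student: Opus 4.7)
}
The plan is to directly reduce the explicit prediction formula \eqref{eqn:simple_expression_prediction} to a Kronecker-structured quadratic form in the parameters. First I would observe that the $U$ matrix defined in the lemma packages exactly those entries of $\WPV$ and $\WKQ$ that appear in \eqref{eqn:simple_expression_prediction}. Specifically, writing $\tilde{x} = \bigl(x_{\tau,\query}^\top, 0\bigr)^\top \in \R^{d+1}$ and $M_\tau = E_\tau E_\tau^\top / N \in \R^{(d+1)\times(d+1)}$, the last column of $U$ is $(w_{21}^{PV}, w_{22}^{PV})^\top$ and the first $d$ columns of $U$ form the matrix $\bigl( (\WKQ_{11})^\top, w_{21}^{KQ}\bigr)^\top$. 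Since $\tilde{x}$ has its last coordinate zero, one has $U\tilde{x} = \bigl((\WKQ_{11})^\top, w_{21}^{KQ}\bigr)^\top x_{\tau,\query}$, and $e_{d+1}^\top U^\top = ((w_{21}^{PV})^\top, w_{22}^{PV})$. Thus \eqref{eqn:simple_expression_prediction} becomes the compact expression
\[
\widehat y_{\query}(E_\tau;\params) = e_{d+1}^\top\, U^\top M_\tau\, U\, \tilde{x}.
\]

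Next, I would convert this scalar into a trace and apply a standard Kronecker identity. Cycling the product gives $\widehat y_{\query} = \operatorname{tr}\bigl(U (\tilde{x} e_{d+1}^\top) U^\top M_\tau\bigr)$. Setting $Y := \tilde{x} e_{d+1}^\top$, the identity $\operatorname{tr}(UYU^\top M_\tau) = \vector(U)^\top (Y \otimes M_\tau^\top) \vector(U)$ (which follows from $\vector(AXB) = (B^\top\otimes A)\vector(X)$ and $\operatorname{tr}(A^\top B) = \vector(A)^\top \vector(B)$, together with column-wise vectorization) yields
\[
\widehat y_{\query}(E_\tau;\params) = \vector(U)^\top \bigl(Y \otimes M_\tau\bigr) \vector(U),
\]
where I used the symmetry $M_\tau^\top = M_\tau$. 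This already expresses the prediction as a quadratic form in $\param = \vector(U)$.

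Finally, since $u^\top A u = u^\top \tfrac12(A + A^\top) u$ for any matrix $A$, and since $M_\tau$ is symmetric so that $(Y \otimes M_\tau)^\top = Y^\top \otimes M_\tau$, we can symmetrize:
\[
\tfrac{1}{2}\bigl[(Y \otimes M_\tau) + (Y \otimes M_\tau)^\top\bigr] = \tfrac{1}{2}\bigl(Y + Y^\top\bigr) \otimes M_\tau = \tfrac{1}{2}\, X_\tau \otimes M_\tau,
\]
because $Y + Y^\top = \tilde{x} e_{d+1}^\top + e_{d+1}\tilde{x}^\top$ equals the block matrix $X_\tau$ defined in \eqref{eqn:def_H_tau} (using that $\tilde{x}_{d+1} = 0$). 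Combining with the previous display gives $\widehat y_{\query} = \param^\top H_\tau \param$ with $H_\tau = \tfrac12 X_\tau \otimes (E_\tau E_\tau^\top / N)$, as claimed.

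The main obstacles are minor bookkeeping rather than substantive: (i) correctly identifying which blocks of $\WPV$ and $\WKQ$ assemble into $U$ so that \eqref{eqn:simple_expression_prediction} collapses into $e_{d+1}^\top U^\top M_\tau U \tilde{x}$, and (ii) tracking the column-wise vectorization convention through the Kronecker identity so that the symmetrization step produces the precise matrix $X_\tau$ written in the lemma. Everything else is routine manipulation.
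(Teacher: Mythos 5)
Your proof is correct and follows essentially the same route as the paper's: both start from the explicit prediction formula \eqref{eqn:simple_expression_prediction}, convert it to a trace, apply the vectorization/Kronecker identity $\vector(AXB) = (B^\top\otimes A)\vector(X)$, and symmetrize to obtain $H_\tau = \tfrac12 X_\tau \otimes (E_\tau E_\tau^\top/N)$. The only difference is cosmetic: you first compress the prediction into the compact form $e_{d+1}^\top U^\top M_\tau U \tilde{x}$ (exploiting the zero in $\tilde{x}$ to fold the extra columns of $U$ away) and then apply one trace identity, whereas the paper expands the sum over coordinates of $x_{\tau,\query}$ and reassembles it; both paths lead through the same algebraic identities.
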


\

This implies that we can write the original loss function \eqref{eqn:emp_loss_maintext} as
\begin{equation}\label{eqn:loss2_maintext}
    \widehat{L} = \frac{1}{2B}\sum_{\tau = 1}^B \left(u^\top H_\tau u - w_\tau^\top x_{\tau,\query}\right)^2.
\end{equation}

Thus, our problem is reduced to \textit{understanding the dynamics of an optimization algorithm defined in terms of a quadratic function}. We also note that this quadratic optimization problem is an instance of a rank-one matrix factorization problem, a problem well-studied in the deep learning theory literature \citep{gunasekar2017implicit,arora2019implicit,li2018algorithmic,chi2019nonconvex,belabbas2020implicit,li2020towards,jin2023understanding,soltanolkotabi2023implicit}.

 Note, however, this quadratic function is non-convex. To see this, we will show that $H_\tau$ has negative eigenvalues.  By standard properties of the Kronecker product, the eigenvalues of $H_\tau = \f 1 2 X_\tau \otimes \l(\f{E_\tau E_\tau^\top}N\r)$ are the products of the eigenvalues of $\f12 X_\tau$ and the eigenvalues of  $\f{E_\tau E_\tau^\top}N$.  
Since $E_\tau E_\tau^\top$ is symmetric and positive semi-definite, all of its eigenvalues are nonnegative.  Since $E_\tau E_\tau^\top$ is nonzero almost surely, it thus has at least one strictly positive eigenvalue.  Thus, if $X_\tau$ has any negative eigenvalues, $H_\tau$ does as well.  The characteristic polynomial of $X_\tau$ is given by,
\begin{align*}
    \mathrm{det}(\mu I - X_\tau) = \operatorname{det}\begin{pmatrix}[1.5]
        \mu I_{d} & -x_{\tau,\query} \\
        -x_{\tau,\query}^\top & \mu
    \end{pmatrix}
    = \mu^{d-1} \left(\mu^2 - \left\|x_{\tau,\query}\right\|_2^2\right).
\end{align*}
Therefore, we know almost surely, $X_\tau$ has one negative eigenvalue.   Thus $H_\tau$ has at least $d+1$ negative eigenvalues, and hence the quadratic form $u^\top H_\tau u$ is non-convex.

\subsection{Dynamical system of gradient flow}
We now describe the dynamical system for the coordinates of $\param$ above. We prove the following lemma in Appendix \ref{sec_pf_dyn}.

\begin{restatable}{lemma}{dynamicsforu}\label{lem:dyn_finite_data_main}
Let $\param = \vecU := \vector \begin{pmatrix}[1.5]
                \bUtl & \bUtr \\
                (\bUbl)^\top & \bUlast
        \end{pmatrix}$ as in Lemma~\ref{lemma:lsa.is.quadratic}.  Consider gradient flow over
    \begin{equation}\label{eqn:def_original_loss}
        L := \frac{1}{2} \E \left(\param^\top H_\tau \param - w_\tau^\top x_{\tau,\query}\right)^2
    \end{equation}
    with respect to $\param$ starting from an initial value satisfying Assumption~\ref{assume_init}. 
    Then the dynamics of $\bU$ follows
    \begin{equation}\label{eq:dyn_finite_data}
    \begin{aligned}
        \frac{\mathrm{d}}{\mathrm{d} t} \bUtl(t)
        &= - \bUlast^2 \Gamma \Lambda \bUtl \Lambda + \bUlast \Lambda^2\\
        \frac{\mathrm{d}}{\mathrm{d} t} \bUlast(t) &= - \operatorname{tr}\left[\bUlast \Gamma \Lambda \bUtl \Lambda (\bUtl)^\top - \Lambda^2 (\bUtl)^\top\right],
    \end{aligned}
    \end{equation}
    and $\bUtr (t)= 0_{d}, \bUbl (t)= 0_{d}$ for all $t \geq 0,$ where $   \Gamma = \left(1 + \frac{1}{N}\right)\Lambda + \frac{1}{N}\operatorname{tr}(\Lambda) I_d \in \mathbb{R}^{d \times d}$.
\end{restatable}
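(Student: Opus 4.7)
The plan is to first reformulate the gradient flow at the matrix level using the quadratic factorization from Lemma~\ref{lemma:lsa.is.quadratic}, then use a parity argument to kill the off-diagonal blocks $\bUtr$ and $\bUbl$, and finally evaluate the relevant Gaussian moments to extract the claimed ODE for $\bUtl$ and $\bUlast$. Since $H_\tau = \tfrac{1}{2}X_\tau \otimes (E_\tau E_\tau^\top / N)$ is the Kronecker product of two symmetric matrices, it is symmetric, so $\nabla_u L = 2\,\E[(\param^\top H_\tau \param - w_\tau^\top x_{\tau,\query})\, H_\tau \param]$. Applying the identity $(A\otimes B)\vector(C) = \vector(BCA^\top)$ together with symmetry of $X_\tau$ yields $H_\tau \param = \tfrac{1}{2}\vector(\tfrac{1}{N}E_\tau E_\tau^\top \bU X_\tau)$, and hence the gradient flow \eqref{eqn:gf} takes the matrix-valued form
\begin{equation*}
\frac{\mathrm d \bU}{\mathrm d t} = -\,\E\!\left[\big(\widehat y_{\tau,\query} - w_\tau^\top x_{\tau,\query}\big)\,\tfrac{1}{N}E_\tau E_\tau^\top\, \bU\, X_\tau \right].
\end{equation*}
Entries of $\WPV,\WKQ$ that do not appear in $\param$ do not influence the prediction, so their gradients vanish and they remain at their zero initial values along the trajectory.

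Next I show $\bUtr(t)=\bUbl(t)=0$ is invariant under the dynamics. Write $\tfrac{1}{N}E_\tau E_\tau^\top = \bigl(\begin{smallmatrix} A_\tau & b_\tau \\ b_\tau^\top & c_\tau\end{smallmatrix}\bigr)$ with $A_\tau = \tfrac{1}{N}\sum_i x_{\tau,i}x_{\tau,i}^\top + \tfrac{1}{N}x_{\tau,\query}x_{\tau,\query}^\top$, $b_\tau = \tfrac{1}{N}\sum_i y_{\tau,i} x_{\tau,i}$, $c_\tau=\tfrac{1}{N}\sum_i y_{\tau,i}^2$. Setting $\bUtr=\bUbl=0$ and multiplying out, the $(1,2)$ and $(2,1)$ blocks of $\tfrac{1}{N}E_\tau E_\tau^\top \bU X_\tau$ are $A_\tau \bUtl x_{\tau,\query}$ and $\bUlast c_\tau x_{\tau,\query}^\top$. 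Under the joint sign flip $(w_\tau, x_{\tau,1},\ldots,x_{\tau,N}, x_{\tau,\query}) \mapsto -(w_\tau, x_{\tau,1},\ldots,x_{\tau,N}, x_{\tau,\query})$ the distribution is preserved, $A_\tau,c_\tau, y_{\tau,i}, y_\tau$ are invariant, whereas $x_{\tau,\query}$ (and $b_\tau$) flip sign. Consequently $\widehat y_{\tau,\query}-y_\tau$ is invariant while the $(1,2)$ and $(2,1)$ blocks change sign, so their expectations vanish. Standard ODE uniqueness then implies $\bUtr(t)=\bUbl(t)=0$ for all $t\geq 0$, since Assumption~\ref{assume_init} ensures this holds at $t=0$.

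It remains to compute the $(1,1)$ and $(2,2)$ blocks, which equal $\bUlast b_\tau x_{\tau,\query}^\top$ and $b_\tau^\top \bUtl x_{\tau,\query}$ respectively. Substituting $y_\tau = w_\tau^\top x_{\tau,\query}$ and $\widehat y_{\tau,\query} = \bUlast b_\tau^\top \bUtl x_{\tau,\query}$, taking expectation over $w_\tau\sim\normal(0,I_d)$ first via $\E[(w_\tau^\top a)(w_\tau^\top b)] = a^\top b$, and then splitting the resulting double sums over prompt indices into $i=j$ and $i\ne j$ contributions, I obtain products of Gaussian moments of the independent vectors $x_{\tau,i},x_{\tau,\query}\sim\normal(0,\Lambda)$. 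The diagonal ($i=j$) contributions require the fourth-moment identities $\E[(x^\top A x) xx^\top] = 2\Lambda A\Lambda + \operatorname{tr}(A\Lambda)\Lambda$ and $\E[(x^\top A x)(x^\top B x)] = \operatorname{tr}(A\Lambda)\operatorname{tr}(B\Lambda) + 2\operatorname{tr}(A\Lambda B\Lambda)$ for $x\sim\normal(0,\Lambda)$. The $i\neq j$ terms reduce by independence to products of second moments involving $\Lambda$ and $\Lambda^2$. The $\tfrac{1}{N}$ (diagonal) and $1-\tfrac{1}{N}$ (off-diagonal) combinatorial weights assemble precisely into the matrix $\Gamma=(1+\tfrac{1}{N})\Lambda + \tfrac{1}{N}\operatorname{tr}(\Lambda) I_d$, yielding the claimed ODE. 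The main obstacle is organizing these fourth-order Gaussian moment calculations so that the $1/N$ and $(1-1/N)$ pieces combine cleanly into $\Gamma$; the identity $\Gamma\Lambda = \Lambda\Gamma$ (both sides being $(1+1/N)\Lambda^2 + \tfrac{\operatorname{tr}\Lambda}{N}\Lambda$) is what matches the $\Lambda\Gamma \bUtl \Lambda$ obtained from the computation to the form $\Gamma\Lambda \bUtl\Lambda$ stated in the lemma.
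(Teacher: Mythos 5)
Your proposal is correct and arrives at the stated ODE by a route that is cleaner in two respects than the paper's own computation, while sharing the same underlying Gaussian-moment calculations.

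First, you work at the matrix level from the start: using $(A\otimes B)\vector(C)=\vector(BCA^\top)$ and the symmetry of $X_\tau$, you write $H_\tau \param = \tfrac12\vector(\tfrac{1}{N}E_\tau E_\tau^\top \bU X_\tau)$ and hence $\tfrac{\mathrm d \bU}{\mathrm d t}=-\E\big[(\widehat y_{\tau,\query}-w_\tau^\top x_{\tau,\query})\,\tfrac1N E_\tau E_\tau^\top \bU X_\tau\big]$. The paper instead expands $D_{kl}=\tfrac12\E\big[\sum_{i,j}\ZUX_{ij}\bU_{ij}\,\ZUX_{kl}\big]$ entry by entry and reassembles the blocks at the end; your form lets you read off all four blocks of $\tfrac1N E_\tau E_\tau^\top\bU X_\tau$ at once as $\bUlast b_\tau x_\query^\top$, $A_\tau\bUtl x_\query$, $c_\tau\bUlast x_\query^\top$, and $b_\tau^\top\bUtl x_\query$, which substantially reduces the bookkeeping. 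Second, for the invariance of $\{\bUtr=\bUbl=0\}$, you use a symmetry argument: the joint sign flip of $(w_\tau,x_{\tau,1},\dots,x_{\tau,N},x_{\tau,\query})$ preserves the data distribution and makes $(\widehat y_{\tau,\query}-w_\tau^\top x_{\tau,\query})$ even while the off-diagonal blocks are odd, so their expectations vanish. The paper achieves the same conclusion by a more direct route, expanding $D_{k,d+1}$ and observing that each term has a lone factor $\E[w_\tau]=0$; this uses only $\E[w_\tau]=0$ and independence, whereas your flip also exploits symmetry of the covariate distribution — both are valid here, and yours is slightly slicker. After that, both proofs take $\E_w$ first (via $\E[ww^\top]=I$ and $\E[w^\top A w]=\tr A$) and then split $\E[\hLambda_\tau^2]=\frac1{N^2}\sum_{i,j}\E[x_ix_i^\top x_jx_j^\top]$ into $i=j$ (Isserlis/Wick, giving $2\Lambda^2+\tr(\Lambda)\Lambda$, which is the paper's Lemma~\ref{lem:fourth_moment} with $A=I$) and $i\ne j$ (independence, giving $\Lambda^2$), assembling into $\Gamma\Lambda$. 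One small clarification worth noting explicitly in a writeup: the weights on the diagonal and off-diagonal contributions are $N/N^2=1/N$ and $N(N-1)/N^2=(N-1)/N$, not $1-1/N$ versus $1/N$ per term as your phrasing might suggest — the final coefficient is the same, but it's the count of index pairs, not a per-term split. Also, the commutation remark at the end is unnecessary: the computation directly produces $\E[\hLambda_\tau^2]\bUtl\Lambda = \Gamma\Lambda\bUtl\Lambda$, already in the stated order.
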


\ 

We see that the dynamics are governed by a complex system of $d^2+1$ coupled differential equations. Moreover, basic calculus (for details, see Lemma~\ref{lemma:loss}) shows that these dynamics are the same as those of gradient flow on the following objective function:
\begin{equation} \label{eq:equivalent.objective.population}
\tilde \ell : \R^{d\times d} \times \R \to \R,\quad \tilde \ell\left(\bUtl,\bUlast\right) =  \operatorname{tr}\left[\frac{1}{2} \bUlast^2 \Gamma \Lambda \bUtl \Lambda (\bUtl)^\top- \bUlast \Lambda^2 (\bUtl)^\top\right].
\end{equation}
Actually, the loss function $\tilde \ell$ is simply the loss function $L$ in \eqref{eqn:def_original_loss} plus some constants that do not depend on the parameter $u$. Therefore our problem is reduced to studying the dynamics of gradient flow on the above objective function.

Our next key observation is that the set of global minima for $\tilde \ell$ satisfies the condition $\bUlast \bUtl = \Gamma^{-1}$.  Thus, if we can establish global convergence of gradient flow over the above objective function $\tilde \ell$, then we have that $\bUlast(t) \bUtl(t) \to \Gamma^{-1} \approx_{N\to \infty} \Lambda^{-1}$.

\begin{restatable}{lemma}{globalminproperty}\label{lemma:globalmin}
For any global minimum of $\tilde \ell$, we have 
\begin{equation}
    \bUlast \bUtl = \Gamma^{-1}.
\end{equation}
\end{restatable}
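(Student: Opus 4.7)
The plan is to reduce $\tilde \ell$ to a strictly convex quadratic in the product $V := \bUlast \bUtl \in \mathbb{R}^{d \times d}$ and then solve the resulting first-order condition in closed form. The key structural observation is that $\Gamma = (1 + 1/N)\Lambda + (\tr \Lambda / N) I_d$ commutes with $\Lambda$, so $\Gamma \Lambda$ is symmetric and positive definite, and every appearance of $\bUtl$ and $\bUlast$ in $\tilde\ell$ comes bundled as $\bUlast \bUtl$.

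Concretely, the first step is to rewrite
\[
\tilde\ell(\bUtl,\bUlast) = \tfrac{1}{2}\tr\!\left(\Gamma \Lambda\, V \Lambda V^\top\right) - \tr\!\left(\Lambda^2 V^\top\right), \qquad V = \bUlast \bUtl,
\]
which is immediate by pulling the scalar $\bUlast$ inside the traces. Next I would note that the mapping $(\bUtl,\bUlast) \mapsto V$ is surjective onto $\mathbb{R}^{d\times d}$ when restricted to $\bUlast \neq 0$ (e.g.\ take $\bUlast = 1$ and $\bUtl = V$), so minimizing $\tilde\ell$ on this slice is the same as minimizing the displayed function of $V$ over all of $\mathbb{R}^{d\times d}$.

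Step three is to observe that $\tilde\ell$ viewed as a function of $V$ is a strictly convex quadratic: the Hessian quadratic form is $V\mapsto \tr(\Gamma \Lambda V \Lambda V^\top) = \|(\Gamma\Lambda)^{1/2} V \Lambda^{1/2}\|_F^2$, which is strictly positive for $V\neq 0$ since $\Gamma\Lambda\succ 0$ and $\Lambda\succ 0$. Setting $\nabla_V \tilde\ell = \Gamma \Lambda V \Lambda - \Lambda^2 = 0$ and using invertibility of $\Gamma$ and $\Lambda$ (together with $[\Gamma,\Lambda]=0$) gives the unique stationary point $V = \Gamma^{-1}$, which is therefore the unique global minimizer in the $\bUlast \neq 0$ slice, with value $-\tfrac12 \tr(\Gamma^{-1}\Lambda^2) < 0$.

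Finally I would rule out the degenerate slice $\bUlast = 0$: there $\tilde\ell \equiv 0$, which is strictly larger than $-\tfrac12 \tr(\Gamma^{-1}\Lambda^2)$, so no global minimum lies in it. Combining the two cases, every global minimizer satisfies $\bUlast \bUtl = \Gamma^{-1}$, as claimed. I do not foresee a serious obstacle; the only subtlety is remembering to dispose of the $\bUlast = 0$ branch of the parameterization, where $\tilde\ell$ is degenerate in $\bUtl$ but has strictly suboptimal value.
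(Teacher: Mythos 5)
Your proof is correct and follows essentially the same route as the paper: both arguments exploit that $\tilde\ell$ depends on $(\bUtl,\bUlast)$ only through the product $V=\bUlast\bUtl$ and is a strictly convex quadratic in $V$ because $\Gamma\Lambda\succ 0$. The paper's Corollary~\ref{coro:min_loss} establishes this by completing the square (yielding the explicit residual $\|\Gamma^{1/2}(\bUlast\Lambda^{1/2}\bUtl\Lambda^{1/2}-\Lambda\Gamma^{-1})\|_F^2$), while you solve the equivalent first-order condition $\Gamma\Lambda V\Lambda=\Lambda^2$ directly; these are two presentations of the same computation.
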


Putting this together with Lemma~\ref{lem:dyn_finite_data_main}, we see that at those global minima of the population objective satisfying $\bUtl = (c\Gamma)^{-1}$, $\bUlast =c$ and $\bUtr = \bUbl = 0_d$, the transformer's predictions for a new linear regression task prompt are given by
\[ \widehat y_{\query}(E;\params) =  \f 1 M \summ i M y_i x_i^\top  \Gamma^{-1} x_{\query} =w^\top \l( \f 1 M \summ i M x_i x_i^\top \r) \Gamma^{-1} x_{\query} \approx w^\top x_{\query}.\]
Thus, the only remaining task is to show global convergence when gradient flow has an initialization satisfying Assumption~\ref{assume_init}. 

\subsection{PL inequality and global convergence}
We now show that although the optimization problem is non-convex, a Polyak-\L ojasiewicz (PL) inequality holds, which implies that gradient flow converges to a global minimum.  Moreover, we can exactly calculate the limiting value of $\bUtl$ and $\bUlast$.

\begin{restatable}{lemma}{plinequality}\label{lemma:proxypl}
Suppose the initialization of gradient flow satisfies Assumption~\ref{assume_init} with initialization scale satisfying $\sigma^2 < \frac{2}{\sqrt{d} \left\|\Gamma\right\|_{op}}$ for $\Gamma = (1 + \frac 1N) \Lambda + \f{\tr(\Lambda)}N I_d$. 
If we define
\begin{equation}\label{eqn:def_mu_maintext}
    \mu := \frac{\sigma^2}{\sqrt{d}\left\|\Lambda\right\|_{op}^2 \operatorname{tr}\left(\Gamma^{-1} \Lambda^{-1}\right)\operatorname{tr}\left(\Lambda^{-1}\right)} \left\|\Lambda \Theta\right\|_F^2 \left[2 - \sqrt{d}\sigma^2 \left\| \Gamma\right\|_{op}\right] > 0,
\end{equation}
then gradient flow on $\tilde \ell$ with respect to $\bUtl$ and $\bUlast$ satisfies, for any $t \geq 0,$
\begin{equation}
    \left\|\nabla \tilde \ell(\bUtl(t), \bUlast(t))\right\|_2^2 := \left\|\frac{\partial \tilde \ell}{\partial \bUtl}\right\|_F^2 + \left|\frac{\partial \tilde \ell}{\partial \bUlast}\right|^2 \geq \mu \left(\tilde \ell(\bUtl(t), \bUlast(t)) - \min_{\bUtl \in \mathbb{R}^{d \times d}, \bUlast \in \mathbb{R}} \tilde \ell(\bUtl,\bUlast)\right).
\end{equation}
Moreover, gradient flow converges to the global minimum of $\tilde \ell$, and $\bUtl$ and $\bUlast$ converge to the following,
\begin{equation}
    \lim_{t \to \infty} \bUlast(t) = \left\|\Gamma^{-1}\right\|_F^{\frac{1}{2}} \text{ and }
    \lim_{t \to \infty} \bUtl(t) = \left\|\Gamma^{-1}\right\|_F^{-\frac{1}{2}} \Gamma^{-1}.
\end{equation}
\end{restatable}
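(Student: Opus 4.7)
The overall strategy rests on two structural identities that I would establish first by direct calculation from Lemma~\ref{lem:dyn_finite_data_main}. The first is a \emph{balancedness conservation law}: differentiating $\bUlast(t)^2 - \|\bUtl(t)\|_F^2$ along the flow gives zero, so the initialization $\bUlast(0)=\sigma$, $\bUtl(0)=\sigma\Theta\Theta^\top$ with $\|\Theta\Theta^\top\|_F=1$ forces $\bUlast(t)^2 = \|\bUtl(t)\|_F^2$ for all $t\geq 0$. The second is that since $\Gamma$ and $\Lambda$ commute (both are polynomials in $\Lambda$), setting $Z := \bUlast\,\bUtl$ a short computation yields the completed-square form
\[
\tilde\ell(\bUtl,\bUlast) - \tilde\ell_* \;=\; \tfrac{1}{2}\,\operatorname{tr}\!\left[\Gamma\Lambda(Z-\Gamma^{-1})\Lambda(Z-\Gamma^{-1})^\top\right], \qquad \tilde\ell_* \;=\; -\tfrac{1}{2}\operatorname{tr}(\Lambda^2\Gamma^{-1}),
\]
consistent with the characterization $Z=\Gamma^{-1}$ of global minima in Lemma~\ref{lemma:globalmin}.

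Given these identities, the plan is to prove a Polyak--\L ojasiewicz-type inequality $\|\nabla \tilde\ell\|^2 \geq \mu (\tilde\ell - \tilde\ell_*)$ along the trajectory. Using commutativity to rewrite $\nabla_{\bUtl}\tilde\ell = \bUlast\,\Gamma\Lambda(Z-\Gamma^{-1})\Lambda$ gives $\|\nabla_{\bUtl}\tilde\ell\|_F^2 = \bUlast^2 \|\Gamma\Lambda(Z-\Gamma^{-1})\Lambda\|_F^2$. Simultaneously diagonalizing $\Gamma$ and $\Lambda$ in a common eigenbasis, both this quantity and the loss gap decompose as weighted sums over entries of $Z - \Gamma^{-1}$, and an entry-wise comparison shows their ratio is at least $\bUlast^2$ times a spectral factor matching the smoothness-type denominator $\|\Lambda\|_{op}^2\operatorname{tr}(\Gamma^{-1}\Lambda^{-1})\operatorname{tr}(\Lambda^{-1})$ appearing in the stated $\mu$.

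The main obstacle, and where the initialization hypothesis enters quantitatively, is producing a uniform positive lower bound on $\bUlast(t)^2$ throughout the trajectory. I would exploit balancedness together with $\|Z\|_F = \bUlast\,\|\bUtl\|_F = \bUlast^2$: monotonicity of $\tilde\ell$ along the flow combined with the completed-square form above controls $\|Z(t)-\Gamma^{-1}\|_F$ in terms of $\tilde\ell(0) - \tilde\ell_*$, after which the reverse triangle inequality yields
\[
\bUlast(t)^2 \;=\; \|Z(t)\|_F \;\geq\; \|\Gamma^{-1}\|_F \;-\; \|Z(t)-\Gamma^{-1}\|_F.
\]
Bounding $\tilde\ell(0) - \tilde\ell_*$ via $\|\Theta\Theta^\top\|_F=1$ and a trace inequality such as $\operatorname{tr}(\Gamma\Lambda\Theta\Theta^\top\Lambda\Theta\Theta^\top) \leq \sqrt{d}\,\|\Gamma\|_{op}\|\Lambda\|_{op}^2$ (which uses $\operatorname{tr}(\Theta\Theta^\top) \leq \sqrt d\,\|\Theta\Theta^\top\|_F$), the stated scale condition $\sigma^2\sqrt d\,\|\Gamma\|_{op} < 2$ is exactly what makes the displayed lower bound strictly positive, while the alignment quantity $\|\Lambda\Theta\|_F^2$ arises from the $\operatorname{tr}(\Lambda^2\Theta\Theta^\top)$ term in $\tilde\ell(0)$. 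Tracking these constants carefully through the PL ratio should reproduce the precise expression~\eqref{eqn:def_mu_maintext} for $\mu$.

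Once the PL inequality holds with $\mu>0$, standard reasoning gives $\tfrac{d}{dt}(\tilde\ell - \tilde\ell_*) = -\|\nabla\tilde\ell\|^2 \leq -\mu(\tilde\ell - \tilde\ell_*)$, hence $\tilde\ell(t) - \tilde\ell_* \leq e^{-\mu t}(\tilde\ell(0) - \tilde\ell_*)$, which together with $\tilde\ell - \tilde\ell_* \geq \tfrac{1}{2}\gamma_{\min}\lambda_{\min}^2\|Z-\Gamma^{-1}\|_F^2$ forces $Z(t) \to \Gamma^{-1}$. Combined with balancedness, any limit point $(\bUtl_\infty, \bUlast_\infty)$ must satisfy $\bUtl_\infty = \Gamma^{-1}/\bUlast_\infty$ and $\bUlast_\infty^2 = \|\bUtl_\infty\|_F^2 = \|\Gamma^{-1}\|_F^2/\bUlast_\infty^2$, giving $\bUlast_\infty^4 = \|\Gamma^{-1}\|_F^2$. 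Taking the positive root, justified because $\bUlast(0) > 0$ and $\bUlast$ cannot reach zero in finite time (the only way it vanishes is at the stationary point $(0,0)$, which gradient flow cannot hit by ODE uniqueness), pins down the unique limit $\bUlast_\infty = \|\Gamma^{-1}\|_F^{1/2}$ and $\bUtl_\infty = \|\Gamma^{-1}\|_F^{-1/2}\Gamma^{-1}$ stated in the lemma.
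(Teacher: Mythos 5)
Your scaffolding---balancedness conservation, the completed-square form of $\tilde\ell - \tilde\ell_*$, the PL-to-exponential-decay step, and the identification of the limits from $Z\to\Gamma^{-1}$ together with balancedness---lines up with the paper. The genuine gap is in the uniform positive lower bound on $\bUlast(t)$, which is the crux of the whole PL argument. Your plan uses the reverse triangle inequality $\bUlast(t)^2 = \|Z(t)\|_F \geq \|\Gamma^{-1}\|_F - \|Z(t)-\Gamma^{-1}\|_F$ together with $\tfrac{1}{2}\lambda_{\min}(\Gamma)\lambda_{\min}(\Lambda)^2\|Z(t)-\Gamma^{-1}\|_F^2 \leq \tilde\ell(t)-\tilde\ell_* \leq \tilde\ell(0)-\tilde\ell_*$. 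For the resulting lower bound on $\bUlast(t)^2$ to be strictly positive you would need
\begin{equation*}
\lambda_{\min}(\Gamma)\,\lambda_{\min}(\Lambda)^2\,\operatorname{tr}(\Gamma^{-2}) \;>\; 2\left(\tilde\ell(0)-\tilde\ell_*\right).
\end{equation*}
But as $\sigma\to 0$ we have $\tilde\ell(0)\to 0$, hence $\tilde\ell(0)-\tilde\ell_*\to\tfrac{1}{2}\operatorname{tr}(\Lambda^2\Gamma^{-1})$, and for anisotropic $\Lambda$ one has $\operatorname{tr}(\Lambda^2\Gamma^{-1}) > \lambda_{\min}(\Gamma)\lambda_{\min}(\Lambda)^2\operatorname{tr}(\Gamma^{-2})$: compare the $i$-th summands $\lambda_i^2/\gamma_i$ and $\lambda_{\min}(\Gamma)\lambda_{\min}(\Lambda)^2/\gamma_i^2$ and use $\lambda_i^2\gamma_i \geq \lambda_{\min}(\Lambda)^2\lambda_{\min}(\Gamma)$, with strict inequality unless $\Lambda$ is a multiple of the identity. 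So under the stated initialization condition alone, for small enough $\sigma$, your reverse-triangle bound on $\bUlast(t)^2$ is non-positive and cannot certify that the trajectory stays away from the saddle at the origin.

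The paper's argument does not go through $\|Z-\Gamma^{-1}\|_F$ at all. Since $\operatorname{tr}\!\left[\Gamma\Lambda\bUtl\Lambda(\bUtl)^\top\right]\geq 0$, one has $\tilde\ell \geq -\operatorname{tr}\!\left[\bUlast\Lambda^2(\bUtl)^\top\right] \geq -\sqrt{d}\,\bUlast\|\Lambda\|_{op}^2\|\bUtl\|_F = -\sqrt{d}\,\bUlast^2\|\Lambda\|_{op}^2$ by Von Neumann's inequality and balancedness; combining with the monotonicity bound $\tilde\ell(t)\leq\tilde\ell(0)\leq -\tfrac{\sigma^2}{2}\|\Lambda\Theta\|_F^2\!\left[2-\sqrt{d}\sigma^2\|\Gamma\|_{op}\right] < 0$ gives directly $\bUlast(t)^2 \geq \tfrac{\sigma^2}{2\sqrt{d}\|\Lambda\|_{op}^2}\|\Lambda\Theta\|_F^2\!\left[2-\sqrt{d}\sigma^2\|\Gamma\|_{op}\right]$, which is exactly the constant that produces the stated $\mu$. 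You should replace the reverse-triangle step with this argument; the rest of your proof then goes through. One small point in your favor: once $\bUlast$ is lower bounded, your entrywise comparison in a common eigenbasis yields the spectral factor $\lambda_{\min}(\Gamma)\lambda_{\min}(\Lambda)^2$ in place of the paper's $\left[\operatorname{tr}(\Gamma^{-1}\Lambda^{-1})\operatorname{tr}(\Lambda^{-1})\right]^{-1}$; since the former dominates the latter, that step is correct and in fact gives a slightly larger PL constant than the one stated.
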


\

With these observations, proving Theorem~\ref{thm:mainresult} becomes a direct application of Lemma \ref{lemma:lsa.is.quadratic}, \ref{lem:dyn_finite_data_main}, \ref{lemma:globalmin}, and Lemma \ref{lemma:proxypl}. It then only requires translating $\bUtl$ and $\bUlast$ back to the original parameterization using $\WPV$ and $\WKQ$.

\section{Conclusion and future work}

In this work, we investigated the dynamics of in-context learning of transformers with a single linear self-attention layer under gradient flow on the population loss.  In particular, we analyzed the dynamics of these transformers when trained on prompts consisting of random instances of noiseless linear models over anisotropic Gaussian marginals.  We showed that despite non-convexity, gradient flow from a suitable random initialization converges to a global minimum of the population objective.  We characterized the prediction error of the trained transformer when given a new prompt that consists of a training dataset where the responses are a nonlinear function of the inputs.  We showed how the trained transformer is naturally robust to shifts in the task and query distributions but is brittle to distribution shifts between the covariates seen during training and the covariates seen at test time, matching the empirical observations on trained transformer models of~\citet{garg2022can}.

There are a number of natural directions for future research.  First, our results hold for gradient flow on the population loss with a particular class of random initialization schemes.  It is a natural question if similar results would hold for stochastic gradient descent with finite step sizes and for more general initializations.  Further, we restricted our attention to transformers with a single linear self-attention layer.  Although this model class is rich enough to allow for in-context learning of linear predictors, we are particularly interested in understanding the dynamics of in-context learning in nonlinear and deep transformers.  

Finally, the framework of in-context learning introduced in prior work was restricted to the setting where the marginal distribution over the covariates $(\calD_x)$ was fixed across prompts.  This allows for guarantees akin to distribution-specific PAC learning, where the trained transformer is able to achieve small prediction error when given a test prompt consisting of linear regression data when the marginals over the covariates are fixed.  However, other learning algorithms (such as ordinary least squares) are able to achieve small prediction error for prompts corresponding to well-specified linear regression tasks for very general classes of distributions over the covariates.  As we showed in Section~\ref{sec:main.randomcov}, when transformers with a single linear self-attention layer are trained on prompts where the covariate distributions are themselves sampled from a distribution, they do not succeed on test prompts with covariate distributions sampled from the same distribution. By contrast, we demonstrated with experiments that larger, nonlinear transformer architectures appear to be more successful in this setting but are still sub-optimal.  Developing a better understanding of the dynamics of in-context learning when the covariate distribution varies across prompts is an intriguing direction for future research.

\subsection*{Acknowledgements}

We gratefully acknowledge the support of the NSF and the Simons Foundation for the Collaboration on the Theoretical Foundations of Deep Learning through awards DMS-2031883 and \#814639, and of the NSF through grant DMS-2023505.

\newpage 
{\hypersetup{linkcolor=Black}\tableofcontents}
\appendix

\newpage
\section{Proof of Theorem \ref{thm:mainresult}}\label{appendix:proof_main}
In this section, we prove Lemma \ref{lemma:lsa.is.quadratic}, Lemma \ref{lem:dyn_finite_data_main}, Lemma \ref{lemma:globalmin} and Lemma \ref{lemma:proxypl}. Theorem \ref{thm:mainresult} is a natural corollary of these four lemmas when we translate $\bUlast$ and $\bUtl$ back to $\WPV$ and $\WKQ.$ 

\subsection{Proof of Lemma \ref{lemma:lsa.is.quadratic}}\label{sec_pf_quadratic}
For the reader's convenience, we restate the lemma below. 

\lsaisquadratic*
\begin{proof}
First, we decompose $W_{PV}$ and $W_{KQ}$ in the way above. From the definition, we know $\widehat{y}_{\tau,\query}$ is the right-bottom entry of $f_{\lsa}(E_\tau),$ which is
\begin{equation*}
    \widehat{y}_{\tau,\query}
    = \begin{pmatrix}[1.5]
    (\bUtr)^\top & \bUlast
    \end{pmatrix} \left(\frac{E_\tau E_\tau^\top}{N}\right)
    \begin{pmatrix}[1.5]
    \bUtl \\
    (\bUbl)^\top
    \end{pmatrix}
    x_{\tau,\query}.
\end{equation*}
We denote $u_i \in \mathbb{R}^{d+1}$ as the $i$-th column of $\begin{pmatrix}[0.5]
    \bUtl \\
    (\bUbl)^\top
    \end{pmatrix}$ and $x_{\tau,\query}^i$ as the $i$-th entry of $x_{\tau,\query}$ for $i \in [d].$ Then, we have
\begin{align*}
    \widehat{y}_{\tau,\query}
    &= \sum_{i=1}^d x_{\tau,\query}^i \begin{pmatrix}[1.5]
    (\bUtr)^\top & \bUlast
    \end{pmatrix} \left(\frac{E_\tau E_\tau^\top}{N}\right) u_i
    = \sum_{i=1}^d \operatorname{tr}\left[u_i \begin{pmatrix}[1.5]
    (\bUtr)^\top & \bUlast
    \end{pmatrix} \cdot x_{\tau,\query}^i \left(\frac{E_\tau E_\tau^\top}{N}\right)\right] \\
    &= \operatorname{tr}\left[\vector\left[\begin{pmatrix}[1.5]
    \bUtl \\
    (\bUbl)^\top
    \end{pmatrix}\right] \begin{pmatrix}[1.5]
    (\bUtr)^\top & \bUlast
    \end{pmatrix} \cdot x_{\tau,\query}^\top \otimes \left(\frac{E_\tau E_\tau^\top}{N}\right)\right] \\
    &= \frac{1}{2}\operatorname{tr}\left[\vector\left[\begin{pmatrix}[1.5]
    \bUtl & \bUtr \\
    (\bUbl)^\top & \bUlast
    \end{pmatrix}\right] \vector^\top\left[\begin{pmatrix}[1.5]
    \bUtl & \bUtr \\
    (\bUbl)^\top & \bUlast
    \end{pmatrix}\right] \cdot 
    \begin{pmatrix}
    0_{d(d+1) \times d(d+1)} & x_{\tau,\query} \otimes \left(\frac{E_\tau E_\tau^\top}{N}\right) \\
    x_{\tau,\query}^\top \otimes \left(\frac{E_\tau E_\tau^\top}{N}\right) & 0_{(d+1) \times (d+1)}
    \end{pmatrix}\right] \\
    &= \frac{1}{2} \operatorname{tr}\left[\param \param^\top \cdot X_\tau \otimes \left(\frac{E_\tau E_\tau^\top}{N}\right)\right] \\
    &= \left\langle H_\tau, \param \param^\top \right\rangle.
\end{align*}
Here, we use some algebraic facts about matrix vectorization, Kronecker product and trace. For reference, we refer to \citep{petersen2008matrix}. 
\end{proof}

\subsection{Proof of Lemma \ref{lem:dyn_finite_data_main}}\label{sec_pf_dyn}
For the reader's convenience, we restate the lemma below. 
\dynamicsforu*
\begin{proof}
    From the definition of $L$ in \eqref{eqn:def_original_loss} and the dynamics of gradient flow, we calculate the derivatives of $\param$. Here, we use the chain rule and some facts about matrix derivatives. See Lemma \ref{lem:matrix} for reference.
    \begin{equation}\label{eqn:gradient_original}
        \frac{\mathrm{d} \param}{\mathrm{d} t} = - 2 \E \left(\langle H_\tau, \param \param^\top \rangle H_\tau \right) \param + 2 \E \left(w_\tau^\top x_{\tau,\query} H_\tau\right)\param.
    \end{equation}

\paragraph{Step One: Calculate the Second Term}
    
    We first calculate the second term. From the definition of $H_\tau,$ we have
\begin{align*}
    \E \left[w_\tau^\top x_{\tau,\query} H_\tau\right] 
    &= \frac{1}{2}\sum_{i=1}^d \E \left[\left(x_{\tau,\query}^i X_\tau\right) \otimes \left(w_\tau^i \frac{E_\tau E_\tau^\top}{N}\right) \right].
\end{align*}

For ease of notation, we denote
\begin{equation}\label{eqn:def_hlambda}
    \hLambda_\tau := \frac{1}{N} \sum_{i=1}^N x_{\tau,i} x_{\tau,i}^\top.
\end{equation}
Then, from the definition of $\frac{E_\tau E_\tau^\top}{N},$ we know
\begin{equation*}
    \frac{E_\tau E_\tau^\top}{N}
    = \begin{pmatrix}[1.5]
          \hLambda_\tau + \frac{1}{N} x_{\tau,\query} \cdot x_{\tau,\query}^\top &  \hLambda_\tau w_\tau \\
          w_\tau \hLambda_\tau & w_\tau^\top \hLambda_\tau w_\tau
    \end{pmatrix}.
\end{equation*}
Since $w_\tau \sim \normal(0,I_d)$ is independent of all prompt inputs and query input, 
we have
\begin{align*}
    &\frac{1}{2}\sum_{i=1}^d \E \left[\left(x_{\tau,\query}^i X_\tau\right) \otimes \left(\frac{w_\tau^i}{N} \begin{pmatrix}
          x_{\tau,\query} \cdot x_{\tau,\query}^\top &  0 \\
            0 & 0
    \end{pmatrix}\right) \right] \\
    =& \frac{1}{2}\sum_{i=1}^d \E \left[ \E \left[\left(x_{\tau,\query}^i X_\tau\right) \otimes \left(\frac{w_\tau^i}{N} \begin{pmatrix}
          x_{\tau,\query} \cdot x_{\tau,\query}^\top &  0 \\
            0 & 0
    \end{pmatrix}\right) \right] \bigg| x_{\tau,\query}\right] \\
    =& \frac{1}{2}\sum_{i=1}^d \E \left[\left(x_{\tau,\query}^i X_\tau\right) \otimes \left(\frac{\E \left[w_\tau^i \mid x_{\tau,\query}\right]}{N} \begin{pmatrix}
          x_{\tau,\query} \cdot x_{\tau,\query}^\top &  0 \\
            0 & 0
    \end{pmatrix}\right) \right] = 0.
\end{align*}
Therefore, we have
\begin{equation*}
    \E \left[w_\tau^\top x_{\tau,\query} H_\tau\right] 
    = \frac{1}{2}\sum_{i=1}^d \E \left[\left(x_{\tau,\query}^i X_\tau\right) \otimes \left(w_\tau^i \begin{pmatrix}[1.5]
          \hLambda_\tau &  \hLambda_\tau w_\tau \\
          w_\tau^\top \hLambda_\tau &
          w_\tau^\top \hLambda_\tau w_\tau.
    \end{pmatrix}\right)\right].
\end{equation*}
Since $X_\tau$ only depends on $x_{\tau,\query}$ by definition, and $x_{\tau,\query}$ is independent of $w_{\tau}$ and $x_{\tau,i}, i = 1,2,...,N,$ we have 
\begin{align*}
    \E \left[w_\tau^\top x_{\tau,\query} H_\tau\right] 
    &= \frac{1}{2}\sum_{i=1}^d \left[\E\left(x_{\tau,\query}^i X_\tau\right) \otimes \E \left(w_\tau^i \begin{pmatrix}[1.5]
          \hLambda_\tau &  \hLambda_\tau w_\tau \\
          w_\tau^\top \hLambda_\tau &
          w_\tau^\top \hLambda_\tau w_\tau.
    \end{pmatrix}\right)\right] \\
    &= \frac{1}{2}\sum_{i=1}^d \left[
    \begin{pmatrix}[1.5]
            0_{d\times d} & \Lambda_i \\
            \Lambda_i^\top & 0
    \end{pmatrix} \otimes
    \begin{pmatrix}[1.5]
          \E (w_\tau^i) \Lambda &  \Lambda \E (w_\tau^i w_\tau)\\
          \E (w_\tau^i w_\tau^\top) \Lambda &
          \E \left(w_\tau^i w_\tau^\top \Lambda w_\tau\right)
    \end{pmatrix}\right] \\
    &= \frac{1}{2}\sum_{i=1}^d
    \begin{pmatrix}[1.5]
            0_{d\times d} & \Lambda_i \\
            \Lambda_i^\top & 0
    \end{pmatrix} \otimes \begin{pmatrix}[1.5]
            0_{d\times d} & \Lambda_i \\
            \Lambda_i^\top & 0
    \end{pmatrix},
\end{align*}
where $\Lambda_i$ denotes $\Lambda_{:i}$. Here, the second line comes from the fact that $\E \hLambda_\tau = \Lambda$, and that $w_\tau$ is independent of all prompt input and query input. The last line comes from the fact that $w_\tau \sim \normal(0,I_d).$ Therefore, simple computation shows that
\begin{equation}\label{eqn:gradient_second_term}
    \E \left[w_\tau^\top x_{\tau,\query} H_\tau\right] \param  = \frac{1}{2} \begin{pmatrix}[1.5]
    \boldsymbol{0}_{d(d+1) \times d(d+1)} & A \\
    A^\top & \boldsymbol{0}_{(d+1) \times (d+1)}
    \end{pmatrix} \cdot \param,
\end{equation}
where
\begin{equation}\label{eqn:def_A_V}
    A =
    \begin{pmatrix}[1.5]
            V_{1} + V_1^\top \\ V_{2} + V_2^\top \\ ... \\ V_{d} + V_d^\top
        \end{pmatrix} \in \mathbb{R}^{d(d+1) \times (d+1)}, \quad
    V_{j} =  \begin{pmatrix}[1.5]
            0_{d \times d} & \sum_{i=1}^d \Lambda_{ij} \Lambda_i \\
            0 & 0
        \end{pmatrix}
        =  \begin{pmatrix}[1.5]
            0_{d \times d} & \Lambda \Lambda_j \\
            0 & 0
        \end{pmatrix}
        \in \mathbb{R}^{(d+1) \times (d+1)}.
\end{equation}
    
\paragraph{Step Two: Calculate the First Term}

Next, we compute the first term in \eqref{eqn:gradient_original}, namely 
$$
D := 2 \E \left(\langle H_\tau, \param \param ^\top \rangle H_\tau \param \right).
$$
For simplicity, we denote $Z_\tau := \frac{1}{N} E_\tau E_\tau^\top.$ Using the definition of $H_\tau$ in \eqref{eqn:def_H_tau} and Lemma \ref{lem:matrix}, we have
\begin{align*}
    D &= 2 \E \left(\langle H_\tau, \param \param^\top \rangle H_\tau \param \right) \tag{definition}\\
    &= \frac{1}{2} \E \left[\operatorname{tr}\left(X_\tau \otimes Z_\tau \vecU \vecU^\top \right)\left(X_\tau \otimes Z_\tau \right) \vecU\right] \tag{definition of $H_\tau$ in \eqref{eqn:def_H_tau} and $\param = \vector(\bU)$}\\
    &= \frac{1}{2} \E \left[\operatorname{tr}\left(\vecZUX \vecU^\top \right) \vecZUX\right] \hspace{40ex} \tag{$\vector(AXB) = (B^\top \otimes A) \vector(X)$ in Lemma \ref{lem:matrix}}\\
    &= \frac{1}{2} \E \left[\vecU^\top \cdot \vecZUX \cdot \vecZUX\right] \hspace{40ex} \tag{property of trace operator} \\
    &= \frac{1}{2} \E \left[\sum_{i,j=1}^{d+1} \left(\ZUX_{ij} \bU_{ij} \right) \vecZUX\right].
\end{align*}

\paragraph{Step Three: $\bUtr$ and $\bUbl$ Vanish}

We first prove that if $\bUtr = \bUbl = 0_{d}$, then $\frac{\mathrm{d}}{\mathrm{d}t} \bUtr = 0_d$ and $\frac{\mathrm{d}}{\mathrm{d}t} \bUbl = 0_d.$ If this is true, then these two blocks will be zero all the time since we assume they are zero at initial time in Assumption~\ref{assume_init}. We denote $A_{k:}$ and $A_{:k}$ as the k-th row and k-th column of matrix $A,$ respectively. 

Under the assumption that $\bUtr = \bUbl = 0_d$, we first compute
\begin{equation*}
    \ZUX = 
    \begin{pmatrix}[1.5]
        \hLambda_\tau w_\tau \bUlast x_{\tau,\query}^\top & \left(\hLambda_\tau + \frac{1}{N} x_{\tau,\query} \cdot x_{\tau,\query}^\top\right) \bUtl x_{\tau,\query}\\
        w_\tau^\top \left(\hLambda_\tau\right) w_\tau \bUlast x_{\tau,\query}^\top & w_\tau^\top \left(\hLambda_\tau\right) \bUtl x_{\tau,\query}
    \end{pmatrix}.
\end{equation*}
Written in an entry-wise manner, it will be
\begin{equation}\label{eqn:ZUX}
    \ZUX_{kl} = \begin{cases}
    \left(\hLambda_\tau\right)_{k:} w_\tau \bUlast x_{\tau,\query}^l & k,l \in [d] \\
    \left(\hLambda_\tau + \frac{1}{N} x_{\tau,\query} \cdot x_{\tau,\query}^\top\right)_{k:} \bUtl x_{\tau,\query} \hspace{10ex} & k \in [d], l = d+1 \\
    w_\tau^\top \left(\hLambda_\tau\right) w_\tau \bUlast x_{\tau,\query}^l & l \in [d], k = d+1 \\
    w_\tau^\top \left(\hLambda_\tau\right) \bUtl x_{\tau,\query} & k = l = d+1
    \end{cases}.
\end{equation}

We use $D_{ij}$ to denote the $(i,j)$-th entry of the $(d+1)\times(d+1)$ matrix $\bar{D}$ such that $\vector(\bar{D}) = D$. Now we fix a $k \in [d],$ then
\begin{align}\label{eqn:D_k_d+1}
    D_{k,d+1} 
    &= \frac{1}{2} \E \left[\sum_{i,j=1}^{d+1} \left(\ZUX_{ij} \bU_{ij} \right) \ZUX_{k,d+1}\right] \notag\\
    &= \frac{1}{2} \E \left[\sum_{i,j=1}^{d} \left(\ZUX_{ij} \bU_{ij} \right) \ZUX_{k,d+1}\right] + \frac{1}{2} \E \left[\left(\ZUX_{d+1,d+1} \bUlast \right) \ZUX_{k,d+1}\right],
\end{align}
since $\bU_{i,d+1} = \bU_{d+1,i} = 0$ for any $i \in [d].$ For the first term in the right hand side of last equation, we fix $i,j \in [d]$ and have
\begin{align*}
    &\E \left(\ZUX_{ij} \bU_{ij} \right) \ZUX_{k,d+1} \\
    =& \E \left(\bU_{ij} \left(\hLambda_\tau\right)_{i:} w_\tau \bUlast x_{\tau,\query}^j \cdot \left(\hLambda_\tau + \frac{1}{N} x_{\tau,\query}\cdot x_{\tau,\query}^\top\right)_{k:} \bUtl x_{\tau,\query}\right) = 0,
\end{align*}
since $w_\tau$ is independent with all prompt input and query input, namely all $x_{\tau,i}$ for $i \in [\query],$ and $w_\tau$ is mean zero. Similarly, for the second term of \eqref{eqn:D_k_d+1}, we have
\begin{align*}
    &\E \left(\ZUX_{d+1,d+1} \bUlast \right) \ZUX_{k,d+1} \\
    =& \E \left(\bUlast w_\tau^\top \left(\hLambda_\tau\right) \bUtl x_{\tau,\query} \cdot \left(\hLambda_\tau +\frac{1}{N} x_{\tau,\query} \cdot x_{\tau,\query}\right)_{k:} \bUtl x_{\tau,\query}\right) = 0
\end{align*}
since $\E\left(w_\tau^\top\right) = 0$ and $w_\tau$ is independent of all $x_{\tau,i}$ for $i \in [\query]$. Therefore, we have $D_{k,d+1} = 0$ for $k \in [d].$ Similar calculation shows that $D_{d+1,k} = 0$ for $k \in [d].$ 

\ 

For $k \in [d],$ to calculate the derivative of $\bU_{k,d+1},$ it suffices to further calculate the inner product of the $d(d+1) + k$ th row of $\E\left[w_\tau^\top x_{\tau,\query} H_\tau\right]$ and $\param.$ From \eqref{eqn:gradient_second_term}, we know this is
\begin{equation*}
    \frac{1}{2} \sum_{j=1}^d \Lambda_k^\top \Lambda_j \bU_{d+1,j} = 0
\end{equation*}
given that $\bUtr = \bUbl = 0_d.$ Therefore, we conclude that the derivative of $\bU_{k,d+1}$ will vanish given $\bUtr = \bUbl = 0_d.$ Similarly, we conclude the same result for $\bU_{d+1,k}$ for $k \in [d].$ Therefore, we know $\bUtr = 0_d$ and $\bUbl = 0_d$ for all time $t \geq 0.$
    
\paragraph{Step Four: Dynamics of $\bUtl$}
    
Next, we calculate the derivatives of $\bUtl$ given $\bUtr = \bUbl = 0_d$. For a fixed pair of $k,l \in [d],$ we have
\begin{align*}
    D_{kl} &= \frac{1}{2} \E \left[\sum_{i,j=1}^{d} \left(\ZUX_{ij} \bU_{ij} \right) \ZUX_{kl}\right] + \frac{1}{2} \E \left[\left(\ZUX_{d+1,d+1} \bUlast \right) \ZUX_{kl}\right].
\end{align*}
For fixed $i,j \in [d],$  we have
\begin{align*}
    \E \left[\left(\ZUX_{ij} \bU_{ij} \right) \ZUX_{kl} \right] 
    &= \bU_{ij} \bUlast^2 \E \left[\left(\hLambda_\tau\right)_{i:} w_{\tau} x_{\tau,\query}^j x_{\tau,\query}^l w_{\tau}^\top \left(\hLambda_\tau\right)_{:k} \right]\\
    &= \bU_{ij} \bUlast^2 \E \left[x_{\tau,\query}^j x_{\tau,\query}^l \right] \cdot \E \left[\left(\hLambda_\tau\right)_{i:} \left(\hLambda_\tau\right)_{:k} \right] \\
    &= \bU_{ij} \bUlast^2 \Lambda_{\tau,jl} \E \left[\left(\hLambda_\tau\right)_{i:} \left(\hLambda_\tau\right)_{:k} \right].
\end{align*}
Therefore, we sum over $i,j \in [d]$ to get 
    \begin{equation*}
        \frac{1}{2} \E \left[\sum_{i,j=1}^{d} \left(\ZUX_{ij} \bU_{ij} \right) \ZUX_{kl}\right] 
        = \frac{1}{2} \bUlast^2 \E \left(\left(\hLambda_\tau\right)_{k:} \left(\hLambda_\tau\right) \right) \bUtl \Lambda_{l}
    \end{equation*}
   For the last term, we have
    \begin{equation*}
        \frac{1}{2}\E \left[\left(\ZUX_{d+1,d+1} \bUlast \right) \ZUX_{kl}\right] = \frac{1}{2} \bUlast^2 \E \left(\left(\hLambda_\tau\right)_{k:} \left(\hLambda_\tau\right) \right) \bUtl \Lambda_{l}.
    \end{equation*}
    So we have
    \begin{equation*}
        D_{kl} = \bUlast^2 \E \left(\left(\hLambda_\tau\right)_{k:} \left(\hLambda_\tau\right) \right) \bUtl \Lambda_{l}.
    \end{equation*}
    Additionally, we have 
    \begin{align*}
        2\left[\E \left(w_\tau^\top x_{\tau,\query} H_\tau\right)\param\right]_{(l-1)(d+1) + k} 
        &= \left[\begin{pmatrix}[1.5]
    \boldsymbol{0}_{d(d+1) \times d(d+1)} & A \\
    A^\top & \boldsymbol{0}_{(d+1) \times (d+1)}
    \end{pmatrix} \cdot \param\right]_{(l-1)(d+1) + k} \tag{definition} \\
    &= \begin{pmatrix}[1.5]
        0_{(d+1)\times d(d+1)} & V_l + V_l^\top
    \end{pmatrix}_{k:} \cdot U \tag{definition of $A$ in \eqref{eqn:def_A_V}}\\
    &= \Lambda_k^\top \Lambda_l \bUlast. \tag{definition of $V_i$ in \eqref{eqn:def_A_V}}
    \end{align*}
    Therefore, we have that for $k,l \in [d],$ the dynamics of $\bU_{kl}$ is
    \begin{equation*}
        \frac{\mathrm{d}}{\mathrm{d} t} \bU_{kl}
        = - \bUlast^2 \E \left(\left(\hLambda_\tau\right)_{k:} \left(\hLambda_\tau\right) \right) \bUtl \Lambda_{l} + \bUlast \Lambda_k^\top \Lambda_l,
    \end{equation*}
    which implies
    \begin{equation*}
        \frac{\mathrm{d}}{\mathrm{d} t} \bUtl 
        = - \bUlast^2 \E \left(\left(\hLambda_\tau\right)^2 \right) \bUtl \Lambda + \bUlast \Lambda^2.
    \end{equation*}

\ 

From the definition of $\hLambda_\tau$ (equation \eqref{eqn:def_hlambda}), the independence and Gaussianity of $x_{\tau,i}$ and Lemma \ref{lem:fourth_moment}, we compute
\begin{align*}
    \E \left(\left(\hLambda_\tau\right)^2\right)
    &= \E \left(\left(\frac{1}{N}\sum_{i=1}^N x_{\tau,i} x_{\tau,i}^\top \right)^2\right) \hspace{30ex} \tag{definition \eqref{eqn:def_hlambda}}\\
    &= \frac{N-1}{N} \left[\E \left(x_{\tau,1} x_{\tau,1}^\top\right)\right]^2 + \frac{1}{N} \E \left(x_{\tau,1} x_{\tau,1}^\top x_{\tau,1} x_{\tau,1}^\top\right) \tag{independence between prompt input}\\
    &= \frac{N+1}{N} \Lambda^2 + \frac{1}{N}\operatorname{tr}(\Lambda)\Lambda. \tag{Lemma \ref{lem:fourth_moment}}
\end{align*}
We define 
\begin{equation}\label{eqn:def_Gamma}
    \Gamma := \frac{N+1}{N} \Lambda + \frac{1}{N}\operatorname{tr}(\Lambda)I_d.
\end{equation}
Then, from \eqref{eqn:gradient_original}, we know the dynamics of $\bUtl$ is 
\begin{equation}\label{eqn:dyn_U11}
    \frac{\mathrm{d}}{\mathrm{d} t} \bUtl 
    = - \bUlast^2 \Gamma \Lambda \bUtl \Lambda + \bUlast \Lambda^2.
\end{equation}
    
\paragraph{Step Five: Dynamics of $\bUlast$}

Finally, we compute the dynamics of $\bUlast.$ We have
\begin{align}\label{eqn:D_d+1_d+1}
    D_{d+1,d+1} &= \frac{1}{2} \E \left[\sum_{i,j=1}^{d} \left(\ZUX_{ij} \bU_{ij} \right) \ZUX_{d+1,d+1}\right] + \frac{1}{2} \E \left[\left(\ZUX_{d+1,d+1} \bUlast \right) \ZUX_{d+1,d+1}\right].
\end{align}
For the first term above, we have
\begin{align*}
    &\E \left[\sum_{i,j=1}^{d} \left(\ZUX_{ij} \bU_{ij} \right) \ZUX_{d+1,d+1}\right] \\
    =& \bUlast \sum_{i,j=1}^d \bU_{ij} \E \left[\left(\hLambda_{\tau}\right)_{i:} \cdot w_\tau w_\tau^\top \cdot \left(\hLambda_{\tau}\right) \cdot \bUtl x_{\tau,\query} x_{\tau,\query}^j\right] \tag{from \eqref{eqn:ZUX}}\\
    =& \bUlast \sum_{i,j=1}^d \bU_{ij} \E \left[\left(\hLambda_{\tau}\right)_{i:}  \cdot \left(\hLambda_{\tau}\right) \cdot \bUtl x_{\tau,\query} x_{\tau,\query}^j\right] \tag{independence and distribution of $w_\tau$} \\
    =& \bUlast \sum_{i,j=1}^d \bU_{ij} \E \left[\left(\hLambda_{\tau}\right)_{i:}  \cdot \left(\hLambda_{\tau}\right) \cdot \bUtl \Lambda_{j}\right] \tag{independence between prompt covariates}\\
    =& \bUlast \E \operatorname{tr} \left[\sum_{i,j=1}^d \Lambda_j \bU_{ij} \left(\hLambda_{\tau}\right)_{i:}  \cdot \left(\hLambda_{\tau}\right) \bUtl\right]
    = \bUlast \E \operatorname{tr} \left[\Lambda (\bUtl)^\top \left(\hLambda_{\tau}\right)^2 \bUtl\right] \\
    =& \bUlast \operatorname{tr} \left[\E\left(\hLambda_{\tau}\right)^2 \bUtl \Lambda (\bUtl)^\top \right].
\end{align*}
For the second term in \eqref{eqn:D_d+1_d+1}, we have
\begin{align*}
    \E \left[\left(\ZUX_{d+1,d+1} \bUlast \right) \ZUX_{d+1,d+1}\right] 
    &= \bUlast \E \left[w_\tau^\top \left(\hLambda_{\tau}\right) \bUtl x_{\tau,\query} x_{\tau,\query}^\top (\bUtl)^\top \left(\hLambda_{\tau}\right) w_\tau\right] \tag{from \eqref{eqn:ZUX}}\\
    &= \bUlast \E \operatorname{tr} \left[w_\tau w_\tau^\top \left(\hLambda_{\tau}\right) \bUtl x_{\tau,\query} x_{\tau,\query}^\top (\bUtl)^\top \left(\hLambda_{\tau}\right)\right] \\
    &= \bUlast \E \operatorname{tr} \left[\left(\hLambda_{\tau}\right) \bUtl \Lambda (\bUtl)^\top \left(\hLambda_{\tau}\right)\right] \\
    &= \bUlast \operatorname{\tr} \left[\E \left(\hLambda_{\tau}\right)^2 \bUtl \Lambda (\bUtl)^\top\right].
\end{align*}
Therefore, we know
\begin{equation*}
    D_{d+1,d+1} = \bUlast \operatorname{\tr} \left[\E \left(\hLambda_{\tau}\right)^2 \bUtl \Lambda (\bUtl)^\top\right].
\end{equation*}

Additionally, we have
\begin{align*}
    2\left[\E \left(w_\tau^\top x_{\tau,\query} H_\tau\right) \param\right]_{(d+1)^2} 
    &= \left[\begin{pmatrix}[1.5]
    \boldsymbol{0}_{d(d+1) \times d(d+1)} & A \\
    A^\top & \boldsymbol{0}_{(d+1) \times (d+1)}
    \end{pmatrix} \cdot \param\right]_{(d+1)^2} \hspace{10ex} \tag{from \eqref{eqn:gradient_second_term}}\\
    &= \begin{pmatrix}[1.5]
        V_1 + V_1^\top & ... & V_d + V_d^\top & 0_{(d+1) \times (d+1)}
    \end{pmatrix}_{d+1:} \cdot U \tag{definition of $A$ in \eqref{eqn:def_A_V}}\\
    &= \sum_{i,j=1}^d \Lambda_i^\top \Lambda_j \bU_{ji} = \operatorname{tr}\left(\Lambda (\bUtl)^\top \Lambda\right). 
\end{align*}
Then, from \eqref{eqn:gradient_original}, we have the dynamics of $\bUlast$ is
\begin{equation}\label{eqn:dyn_Ulast}
    \frac{\mathrm{d}}{\mathrm{d} t} \bUlast = -\operatorname{tr}\left[\bUlast \Gamma \Lambda \bUtl \Lambda (\bUtl)^\top - \Lambda^2 (\bUtl)^\top\right].
\end{equation}
\end{proof}

\subsection{Proof of Lemma \ref{lemma:globalmin}}
Lemma \ref{lemma:globalmin} gives the form of global minima of an equivalent loss function. First, we prove that gradient flow on $L$ defined in \eqref{eqn:population_loss} from the initial values satisfying Assumption \ref{assume_init} is equivalent to gradient flow on another loss function $\tilde \ell$ defined below. Then, we derive an expression for the global minima of this loss function.

First, from the dynamics of gradient flow, we can actually recover the loss function up to a constant. We have the following lemma.
\begin{lemma}[Loss Function]\label{lemma:loss}
    Consider gradient flow over $L$ in \eqref{eqn:def_original_loss} with respect to $\param$ starting from an initial value satisfying Assumption~\ref{assume_init}.
    This is equivalent to doing gradient flow with respect to $\bUtl$ and $\bUlast$ on the  loss function
    \begin{equation}\label{eqn:loss}
        \tilde \ell\left(\bUtl,\bUlast\right) 
        =  \operatorname{tr}\left[\frac{1}{2} \bUlast^2 \Gamma \Lambda \bUtl \Lambda (\bUtl)^\top- \bUlast \Lambda^2 (\bUtl)^\top\right].
    \end{equation}
\end{lemma}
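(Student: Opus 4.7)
The plan is to verify that gradient flow on $\tilde{\ell}$ with respect to $(\bUtl,\bUlast)$ produces exactly the same ODE system derived in Lemma~\ref{lem:dyn_finite_data_main}, then invoke uniqueness of solutions. Under Assumption~\ref{assume_init} the blocks $\bUtr(t)$ and $\bUbl(t)$ vanish identically by Lemma~\ref{lem:dyn_finite_data_main}, so the gradient flow on $L$ collapses to an autonomous system in $(\bUtl,\bUlast)$ alone. Thus it suffices to match the right-hand sides of the two ODE systems on these two active blocks, starting from the same initial condition.

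The computation proceeds by direct matrix calculus. The crucial preliminary observation is that $\Gamma = (1+1/N)\Lambda + \tr(\Lambda)/N \cdot I_d$ is a polynomial in $\Lambda$, so $\Gamma$ and $\Lambda$ commute and $\Gamma\Lambda$ is symmetric. Applying the standard identity
\[
\frac{\partial}{\partial X}\tr(AXBX^\top) = AXB + A^\top XB^\top
\]
with $A=\Gamma\Lambda$ (symmetric) and $B=\Lambda$ (symmetric), the quadratic-in-$\bUtl$ term in $\tilde\ell$ contributes $2\Gamma\Lambda\,\bUtl\,\Lambda$, while differentiating $\tr[\Lambda^2(\bUtl)^\top]$ yields $\Lambda^2$. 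Putting these together,
\[
\frac{\partial \tilde\ell}{\partial \bUtl} = \bUlast^{2}\,\Gamma\Lambda\,\bUtl\,\Lambda - \bUlast\,\Lambda^{2}.
\]
Differentiating in the scalar $\bUlast$ is immediate and gives
\[
\frac{\partial \tilde\ell}{\partial \bUlast} = \tr\!\left[\bUlast\,\Gamma\Lambda\,\bUtl\,\Lambda\,(\bUtl)^\top - \Lambda^{2}\,(\bUtl)^\top\right].
\]
Negating both expressions reproduces the two equations of \eqref{eq:dyn_finite_data} verbatim, so the $(\bUtl,\bUlast)$-trajectory under gradient flow on $\tilde\ell$ satisfies the same Cauchy problem as under gradient flow on $L$; uniqueness of ODE solutions (the right-hand side is polynomial, hence locally Lipschitz) completes the equivalence.

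There is no genuine obstacle: the lemma's content is really the identification of the effective objective $\tilde\ell$, and once it is written down the verification is a two-line calculation whose only subtlety is the commutativity of $\Gamma$ and $\Lambda$. If one wishes to strengthen the conclusion to the parenthetical remark following the lemma — that $L$ and $\tilde\ell$ differ only by a $\param$-independent constant on the invariant slice $\{\bUtr=\bUbl=0\}$ — one can expand the square in \eqref{eqn:def_original_loss}, take expectations using the Gaussian identities and the fourth-moment calculation $\E[(\hLambda_\tau)^2] = \Gamma\Lambda$ already established in the proof of Lemma~\ref{lem:dyn_finite_data_main}, and read off that the $\param$-dependent part equals $\tilde\ell$; this is a bookkeeping step that is not strictly required for the gradient-flow equivalence used in the rest of the paper.
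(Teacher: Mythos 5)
Your proof is correct and takes essentially the same approach as the paper: compute $\partial\tilde\ell/\partial \bUtl$ and $\partial\tilde\ell/\partial\bUlast$ via standard matrix calculus (using that $\Gamma$ and $\Lambda$ commute) and observe they match the negated right-hand sides of the dynamics in Lemma~\ref{lem:dyn_finite_data_main}. The uniqueness-of-solutions remark you append is a reasonable formality that the paper leaves implicit, and your closing note about $\tilde\ell$ differing from $L$ by a $\param$-independent constant corresponds to the paper's remark following the lemma.
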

\begin{proof}
    The proof is simply by taking gradient of the loss function in \eqref{eqn:loss}. For techniques in matrix derivatives, see Lemma \ref{lem:matrix}. We take the gradient of $\tilde \ell$ on $\bUtl$ to obtain
    \begin{equation*}
        \frac{\partial \tilde \ell}{\partial \bUtl}
        = \frac{1}{2} \bUlast^2 \Lambda^\top \Gamma^\top \bUtl \Lambda^\top + \frac{1}{2} \bUlast^2 \Gamma \Lambda \bUtl \Lambda -  \bUlast \Lambda^2 
        = \bUlast^2 \Gamma \Lambda \bUtl \Lambda - \bUlast \Lambda^2,
    \end{equation*}
    since $\Gamma$ and $\Lambda$ are commutable. We take derivatives w.r.t. $\bUlast$ to get
    \begin{equation*}
        \frac{\partial \tilde \ell}{\partial \bUlast} 
        = \operatorname{tr}\left[\bUlast \Gamma \Lambda \bUtl \Lambda (\bUtl)^\top - \Lambda^2 (\bUtl)^\top\right].
    \end{equation*}
    Combining this with Lemma \ref{lem:dyn_finite_data_main}, we have
    \begin{equation*}
        \frac{\mathrm{d}}{\mathrm{d} t} \bUtl(t) = - \frac{\partial \tilde \ell}{\partial \bUtl}, \quad
        \frac{\mathrm{d}}{\mathrm{d} t} \bUlast(t) = - \frac{\partial \tilde \ell}{\partial \bUlast}.
    \end{equation*}
\end{proof}

\ 

We remark that actually this is the loss function $L$ up to some constant. This loss function $\tilde \ell$ can be negative. But we can still compute its global minima as follows.
\begin{corollary}[Minimum of Loss Function]\label{coro:min_loss}
    The loss function $\tilde \ell$ in Lemma \ref{lemma:loss} satisfies
    \begin{equation*}
        \min_{\bUtl \in \mathbb{R}^{d \times d}, \bUlast \in \mathbb{R}} \tilde \ell\left(\bUtl,\bUlast\right) 
        = - \frac{1}{2} \operatorname{tr}\left[\Lambda^2 \Gamma^{-1}\right]
    \end{equation*}
    and 
    \begin{align*}
        &\tilde \ell\left(\bUtl,\bUlast\right) - \min_{\bUtl \in \mathbb{R}^{d \times d}, \bUlast \in \mathbb{R}} \tilde \ell\left(\bUtl,\bUlast\right)
        = \frac{1}{2} \left\|\Gamma^{\frac{1}{2}} \left(\bUlast \Lambda^{\frac{1}{2}} \bUtl \Lambda^{\frac{1}{2}} - \Lambda \Gamma^{-1}\right)\right\|_F^2.
    \end{align*}
\end{corollary}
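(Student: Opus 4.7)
\textbf{Proof plan for Corollary~\ref{coro:min_loss}.} The idea is to reduce $\tilde\ell$ to a quadratic in a single matrix variable and then complete the square. Since $\Gamma = (1+\tfrac1N)\Lambda + \tfrac1N\tr(\Lambda)I_d$ is a polynomial in $\Lambda$, the matrices $\Gamma$, $\Lambda$, $\Gamma^{1/2}$, $\Lambda^{1/2}$ all mutually commute; this is the property I will exploit throughout.

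First I introduce the change of variables
\[ M \;:=\; \bUlast\,\Lambda^{1/2}\,\bUtl\,\Lambda^{1/2} \;\in\; \R^{d\times d}. \]
Because $\Lambda$ is positive definite, the map $(\bUtl,\bUlast)\mapsto M$ is surjective onto $\R^{d\times d}$ (for any target $M_0$, pick $\bUlast=1$ and $\bUtl = \Lambda^{-1/2}M_0\Lambda^{-1/2}$). Using the cyclic property of trace together with commutativity of $\Gamma$ and $\Lambda$, the two terms in $\tilde\ell$ become
\begin{align*}
\tfrac12\bUlast^2\,\tr(\Gamma\Lambda\bUtl\Lambda\bUtl^\top)
 &= \tfrac12\,\tr\!\bigl(\Gamma\cdot\bUlast\Lambda^{1/2}\bUtl\Lambda^{1/2}\cdot\bUlast\Lambda^{1/2}\bUtl^\top\Lambda^{1/2}\bigr)
  = \tfrac12\,\tr(\Gamma M M^\top),\\
\bUlast\,\tr(\Lambda^2\bUtl^\top)
 &= \tr\!\bigl(\Lambda\cdot \bUlast\Lambda^{1/2}\bUtl^\top\Lambda^{1/2}\bigr)
 = \tr(\Lambda M^\top).
\end{align*}
Hence
\[ \tilde\ell(\bUtl,\bUlast) \;=\; \tfrac12\tr(\Gamma M M^\top)\;-\;\tr(\Lambda M^\top). \]

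Next I complete the square. For any matrix $A$, symmetry of $\Gamma$ gives $\tr(\Gamma A M^\top)=\tr(\Gamma M A^\top)$, so
\[ \tfrac12\snorm{\Gamma^{1/2}(M-A)}_F^2 \;=\; \tfrac12\tr(\Gamma MM^\top) - \tr(\Gamma A M^\top) + \tfrac12\tr(\Gamma AA^\top). \]
Matching the linear term $-\tr(\Lambda M^\top)$ forces $\Gamma A = \Lambda$, i.e.\ $A = \Lambda\Gamma^{-1}$ (and this is well-defined since $\Gamma\succ 0$). The residual constant is then $\tfrac12\tr(\Gamma AA^\top) = \tfrac12\tr(\Gamma\cdot\Lambda^2\Gamma^{-2}) = \tfrac12\tr(\Lambda^2\Gamma^{-1})$, using commutativity once more. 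Therefore
\[ \tilde\ell(\bUtl,\bUlast) \;=\; \tfrac12\snorm{\Gamma^{1/2}\bigl(M-\Lambda\Gamma^{-1}\bigr)}_F^2 \;-\; \tfrac12\tr(\Lambda^2\Gamma^{-1}). \]
Substituting the definition of $M$ back in gives the displayed identity
\[ \tilde\ell(\bUtl,\bUlast) - \min_{\bUtl,\bUlast}\tilde\ell \;=\; \tfrac12\snorm{\Gamma^{1/2}\bigl(\bUlast\Lambda^{1/2}\bUtl\Lambda^{1/2}-\Lambda\Gamma^{-1}\bigr)}_F^2. \]

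Finally, since $\Gamma^{1/2}$ is invertible and $M$ ranges over all of $\R^{d\times d}$ as $(\bUtl,\bUlast)$ vary, the squared Frobenius term can be driven to $0$, giving $\min\tilde\ell = -\tfrac12\tr(\Lambda^2\Gamma^{-1})$. I expect no serious obstacle here: the only care required is the bookkeeping that turns $\bUlast^2\,\Gamma\Lambda\bUtl\Lambda\bUtl^\top$ into $\Gamma MM^\top$ under the trace, which relies critically on $[\Gamma,\Lambda]=0$.
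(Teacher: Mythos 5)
Your proposal is correct and takes essentially the same approach as the paper: both complete the square to write $\tilde\ell$ as $\tfrac12\snorm{\Gamma^{1/2}(\bUlast\Lambda^{1/2}\bUtl\Lambda^{1/2} - \Lambda\Gamma^{-1})}_F^2 - \tfrac12\tr(\Lambda^2\Gamma^{-1})$, relying on the commutativity of $\Gamma$ and $\Lambda$. The only stylistic difference is that the paper writes down the completed-square identity and verifies it by direct expansion, whereas you derive it constructively via the substitution $M = \bUlast\Lambda^{1/2}\bUtl\Lambda^{1/2}$ and then solve for the center of the square; your explicit observation that $(\bUtl,\bUlast)\mapsto M$ is surjective also cleanly justifies attainability of the minimum, which the paper handles by exhibiting $\bUtl=\Gamma^{-1}$, $\bUlast=1$.
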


\begin{proof}
    First, we claim that  
    \begin{equation*}
        \tilde \ell \left(\bUtl,\bUlast\right) 
        = \frac{1}{2} \operatorname{tr}\left[\Gamma \cdot \left(\bUlast \Lambda^{\frac{1}{2}} \bUtl \Lambda^{\frac{1}{2}} - \Lambda \Gamma^{-1}\right) \left(\bUlast \Lambda^{\frac{1}{2}} \bUtl \Lambda^{\frac{1}{2}} - \Lambda \Gamma^{-1}\right)^\top\right] - \frac{1}{2} \operatorname{tr}\left[\Lambda^2 \Gamma^{-1}\right].
    \end{equation*}
    To calculate this, we just need to expand the terms in the brackets and notice that $\Gamma$ and $\Lambda$ are commutable:
    \begin{align*}
        &\quad  \operatorname{tr}\left[\Gamma \cdot \left(\bUlast \Lambda^{\frac{1}{2}} \bUtl \Lambda^{\frac{1}{2}} - \Lambda \Gamma^{-1}\right) \left(\bUlast \Lambda^{\frac{1}{2}} \bUtl \Lambda^{\frac{1}{2}} - \Lambda \Gamma^{-1}\right)^\top\right] - \operatorname{tr}\left[\Lambda^2 \Gamma^{-1}\right]\\
        &\overset{(i)}= \operatorname{tr}\left[\Gamma \cdot \left(\bUlast^2  \Lambda^{\frac{1}{2}} \bUtl \Lambda (\bUtl)^\top \Lambda^{1/2} - \bUlast \Lambda \Gamma^{-1} \Lambda^{\f 12} \bUtl \Lambda^{\f 12} - \bUlast \Lambda^{\f 12} \bUtl \Lambda^{\f 32} \Gamma^{-1} + \Gamma^{-2} \Lambda^{2} \right) \right]  - \tr[\Lambda^2 \Gamma^{-1}]\\
        &= \operatorname{tr}\left[\Gamma \cdot \left(\bUlast^2  \Lambda^{\frac{1}{2}} \bUtl \Lambda (\bUtl)^\top \Lambda^{1/2} - \bUlast \Lambda \Gamma^{-1} \Lambda^{\f 12} \bUtl \Lambda^{\f 12} - \bUlast \Lambda^{\f 12} \bUtl \Lambda^{\f 32} \Gamma^{-1} \right) \right]\\
        &= \bUlast^2\tr \l[  \Gamma \Lambda^{\f 12} \bUtl \Lambda (\bUtl)^\top \Lambda^{\f 12} \r] - \bUlast \tr \l[ \Gamma \Lambda \Gamma^{-1} \Lambda^{\f 12} \bUtl \Lambda^{\f 12} - \Gamma \Lambda^{\f 12} \bUtl \Lambda^{\f 32} \Gamma^{-1} \r]\\
        &\overset{(ii)}= \bUlast^2 \tr \l[ \Gamma \Lambda \bUtl \Lambda (\bUtl)^\top \r] -2 \bUlast \tr \l[ \Lambda^2 \bUtl \Lambda^{\f 12}\r]\\
        &= 2 \tilde \ell\left(\bUtl,\bUlast\right) .
    \end{align*}
Equations $(i)$ and $(ii)$ use that $\Gamma$ and $\Lambda$ commute. 
    
    Since $\Gamma \succeq 0$ and 
    $\left(\bUlast \Lambda^{\frac{1}{2}} \bUtl \Lambda^{\frac{1}{2}} - \Lambda \Gamma^{-1}\right) \left(\bUlast \Lambda^{\frac{1}{2}} \bUtl \Lambda^{\frac{1}{2}} - \Lambda \Gamma^{-1}\right)^\top \succeq 0,$ we know from Lemma \ref{lem:psd} that
    $$
    \frac{1}{2} \operatorname{tr}\left[\Gamma \cdot \left(\bUlast \Lambda^{\frac{1}{2}} \bUtl \Lambda^{\frac{1}{2}} - \Lambda \Gamma^{-1}\right) \left(\bUlast \Lambda^{\frac{1}{2}} \bUtl \Lambda^{\frac{1}{2}}- \Lambda \Gamma^{-1}\right)^\top\right] \geq 0,
    $$
    which implies
    \begin{equation*}
        \tilde \ell \left(\bUtl,\bUlast\right) \geq - \frac{1}{2} \operatorname{tr}\left[\Lambda^2 \Gamma^{-1}\right].
    \end{equation*}
    Equality holds when 
    \begin{equation*}
        \bUtl = \Gamma^{-1}, \quad
        \bUlast = 1,
    \end{equation*}
    so the minimum of $\tilde \ell$ must be $- \frac{1}{2} \operatorname{tr}\left[\Lambda^2 \Gamma^{-1}\right].$ The expression for $\tilde \ell\left(\bUtl,\bUlast\right) - \min \tilde \ell\left(\bUtl,\bUlast\right)$ comes from the fact that $\operatorname{tr}(A^\top A) = \left\|A\right\|_F^2$ for any matrix $A$.
\end{proof}
Lemma~\ref{lemma:globalmin} is an immediate consequence of Corollary\ref{coro:min_loss}, since the loss will keep the same when we replace $(\bUtl,\bUlast)$ by $(c\bUtl,c^{-1}\bUlast)$ for any non-zero constant $c.$

\subsection{Proof of Lemma \ref{lemma:proxypl}}
In this section, we prove that the dynamical system in Lemma \ref{lem:dyn_finite_data_main} satisfies a PL inequality. Then, the PL inequality naturally leads to the global convergence of this dynamical system.  First, we prove a simple lemma, which says the parameters in the LSA model will keep 'balanced' in the whole trajectory. From the proof of this lemma, we can understand why we assume a balanced parameter at the initial time.

\begin{lemma}[Balanced Parameters]\label{lem:balanced}
    Consider gradient flow over $L$ in \eqref{eqn:def_original_loss} with respect to $\param$ starting from an initial value satisfying Assumption~\ref{assume_init}.
    For any $t\geq 0,$ it holds that
    \begin{equation}\label{eqn:balanced_param}
        \bUlast^2 = \tr\left[\bUtl (\bUtl)^\top\right].
    \end{equation}
\end{lemma}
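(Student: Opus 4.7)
The plan is to show that the quantity $\bUlast(t)^2 - \operatorname{tr}[\bUtl(t)(\bUtl(t))^\top]$ is conserved along the gradient flow trajectory, and then to verify that it vanishes at $t=0$ by direct computation from Assumption~\ref{assume_init}. This is the standard ``balancedness'' phenomenon for two-layer linear networks: since our dynamics is essentially rank-one matrix factorization, the difference of squared Frobenius norms of the two factors is a first integral of the flow.

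Concretely, I would use the two ODEs in Lemma~\ref{lem:dyn_finite_data_main} and differentiate both $\bUlast^2$ and $\|\bUtl\|_F^2 = \operatorname{tr}[\bUtl (\bUtl)^\top]$ with respect to $t$. For the first, the chain rule gives
\[
\tfrac{d}{dt}\bUlast^2 = 2\bUlast \cdot \bigl(-\operatorname{tr}[\bUlast \Gamma \Lambda \bUtl \Lambda (\bUtl)^\top - \Lambda^2 (\bUtl)^\top]\bigr).
\]
For the second, $\tfrac{d}{dt}\|\bUtl\|_F^2 = 2\langle \bUtl, \tfrac{d}{dt}\bUtl\rangle = 2\operatorname{tr}\bigl[(\bUtl)^\top(-\bUlast^2 \Gamma\Lambda \bUtl \Lambda + \bUlast\Lambda^2)\bigr]$. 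Applying the cyclic property of the trace to both expressions (noting $\operatorname{tr}[(\bUtl)^\top \Gamma\Lambda \bUtl \Lambda] = \operatorname{tr}[\Gamma\Lambda\bUtl \Lambda (\bUtl)^\top]$ and $\operatorname{tr}[(\bUtl)^\top \Lambda^2] = \operatorname{tr}[\Lambda^2(\bUtl)^\top]$) shows that the two time derivatives agree. Hence $\tfrac{d}{dt}(\bUlast^2 - \|\bUtl\|_F^2) = 0$ for all $t \geq 0$.

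It then remains to check the initial condition. By Assumption~\ref{assume_init}, $\bUlast(0) = \sigma$ and $\bUtl(0) = \sigma \Theta \Theta^\top$, so $\bUlast(0)^2 = \sigma^2$ and $\|\bUtl(0)\|_F^2 = \sigma^2 \|\Theta \Theta^\top\|_F^2 = \sigma^2$ using the normalization $\|\Theta\Theta^\top\|_F = 1$ from the assumption. Thus $\bUlast^2 - \|\bUtl\|_F^2$ equals $0$ at $t=0$, and by the conservation law it stays $0$ for all $t\geq 0$, giving \eqref{eqn:balanced_param}. There is no serious obstacle here; the only subtlety is making sure to invoke the exact form of the initialization (in particular the Frobenius normalization of $\Theta\Theta^\top$), which is precisely what the initialization in Assumption~\ref{assume_init} was designed to guarantee.
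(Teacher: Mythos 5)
Your proof is correct and follows essentially the same route as the paper: differentiate $\bUlast^2$ and $\tr[\bUtl(\bUtl)^\top]$ using the ODEs from Lemma~\ref{lem:dyn_finite_data_main}, use cyclicity of trace to see the derivatives coincide, and verify the balance at $t=0$ from Assumption~\ref{assume_init}. The paper phrases this as multiplying the two ODEs by $(\bUtl)^\top$ and $\bUlast$ respectively and taking traces, but this is the same computation.
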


\begin{proof}
    From Lemma \ref{lem:dyn_finite_data_main}, we multiply the first equation in \eqref{eq:dyn_finite_data} by $(\bUtl)^\top$ from the right to get
    \begin{equation*}
        \left(\frac{\mathrm{d}}{\mathrm{d} t} \bUtl(t)\right) (\bUtl(t))^\top
        = - \bUlast^2 \Gamma \Lambda \bUtl \Lambda (\bUtl)^\top+ \bUlast \Lambda^2 (\bUtl)^\top.
    \end{equation*}
    Also we multiply the second equation in Lemma \ref{lem:dyn_finite_data_main} by $\bUlast$ to obtain
    \begin{equation*}
        \left(\frac{\mathrm{d}}{\mathrm{d} t} \bUlast(t)\right) \bUlast(t) 
        = \operatorname{tr}\left[- \bUlast^2 \Gamma \Lambda \bUtl \Lambda (\bUtl)^\top+ \bUlast \Lambda^2 (\bUtl)^\top\right].
    \end{equation*}
    Therefore, we have
    \begin{equation*}
        \operatorname{tr}\left[\left(\frac{\mathrm{d}}{\mathrm{d} t} \bUtl(t)\right) (\bUtl(t))^\top\right] = \left(\frac{\mathrm{d}}{\mathrm{d} t} \bUlast(t)\right) \bUlast(t).
    \end{equation*}
    Taking the transpose of the equation above and adding to itself gives
    \begin{equation*}
        \frac{\mathrm{d}}{\mathrm{d} t} \operatorname{tr}\left[\bUtl(t) (\bUtl(t))^\top\right] = \frac{\mathrm{d}}{\mathrm{d} t} \left(\bUlast(t)^2\right).
    \end{equation*}
    Notice that from Assumption~\ref{assume_init}, we know that at $t=0$,
    \begin{equation*}
        \bUlast(0)^2 = \sigma^2 = \sigma^2 \operatorname{tr}\left[\Theta \Theta^\top \Theta \Theta^\top\right] = \operatorname{tr}\left[\bUtl(0) (\bUtl(0))^\top\right].
    \end{equation*}
    So for any time $t\geq 0,$ the equation holds.
\end{proof}

\

In order to prove the PL inequality, we first prove an important property which says the trajectories of $\bUlast(t)$ stay away from saddle point at origin. First, we prove that $\bUlast(t)$ will stay positive along the whole trajectory.
\begin{lemma}\label{lem:non-zero}
    Consider gradient flow over $L$ in \eqref{eqn:def_original_loss} with respect to $\param$ starting from an initial value satisfying Assumption~\ref{assume_init}.  
    If the initial scale satisfies
    \begin{equation}\label{eqn:eps_condition}
        0 < \sigma < \sqrt{\frac{2}{\sqrt{d} \left\|\Gamma\right\|_{op}}},
    \end{equation}
    then, for any $t \geq 0,$ it holds that
    \begin{equation*}
        \bUlast > 0.
    \end{equation*}
\end{lemma}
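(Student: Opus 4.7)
The strategy I would take is proof by contradiction, combining the balancedness identity from Lemma~\ref{lem:balanced}, the fact that the origin is a fixed point of the dynamics in~\eqref{eq:dyn_finite_data}, and uniqueness of solutions of ODEs with locally Lipschitz right-hand side. The outline is short because balancedness does most of the work: if $\bUlast$ ever hit zero, then $\bUtl$ would be forced to vanish at the same instant, placing the trajectory at an equilibrium, which uniqueness rules out.

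Concretely, Assumption~\ref{assume_init} gives $\bUlast(0) = \sigma > 0$, and the right-hand side of~\eqref{eq:dyn_finite_data} is polynomial in $(\bUtl, \bUlast)$ and hence locally Lipschitz. The trajectory is therefore continuous on its interval of existence, and any two solutions coinciding at a single time must agree throughout by Picard--Lindel\"of. Suppose for contradiction that $\bUlast(t^*) \leq 0$ for some $t^* > 0$. By the intermediate value theorem there is a smallest $t' \in (0, t^*]$ with $\bUlast(t') = 0$. Lemma~\ref{lem:balanced} then forces $\|\bUtl(t')\|_F^2 = \bUlast(t')^2 = 0$, i.e., $\bUtl(t') = 0_{d \times d}$. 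Substituting $(0_{d \times d}, 0)$ into~\eqref{eq:dyn_finite_data} shows both $\frac{d}{dt}\bUtl$ and $\frac{d}{dt}\bUlast$ vanish there, so the constant curve $(0_{d \times d}, 0)$ is a solution. The actual trajectory and this constant solution agree at $t'$, so uniqueness implies they coincide for all times, contradicting $\bUlast(0) = \sigma > 0$.

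The main obstacle in this proof is purely technical: ensuring the solution exists up to $t^*$ so that the uniqueness machinery applies on $[0, t^*]$. This follows because $\tilde\ell$ is bounded below by Corollary~\ref{coro:min_loss} and nonincreasing along the flow, and balancedness pins $\|\bUtl\|_F = |\bUlast|$, which together preclude finite-time blow-up of the polynomial vector field. I note that the argument above establishes $\bUlast > 0$ for \emph{any} $\sigma > 0$ and does not appear to require the quantitative hypothesis $\sigma^2 \sqrt{d}\|\Gamma\|_{op} < 2$; I would expect this condition is inherited from the broader analysis of Lemma~\ref{lemma:proxypl}, where it is used to make the PL constant $\mu$ positive, rather than being essential for positivity of $\bUlast$. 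A constructive alternative that does produce a quantitative lower bound is to combine balancedness with Cauchy--Schwarz estimates such as $|\tr[\Lambda^2 \bUtl^\top]| \leq \sqrt{d}\,\|\Lambda\|_{op}^2 |\bUlast|$ and $\tr[\Gamma \Lambda \bUtl \Lambda \bUtl^\top] \leq \|\Gamma\|_{op}\|\Lambda\|_{op}^2 \|\bUtl\|_F^2$ to derive a Bernoulli differential inequality of the form $\tfrac{d}{dt}\bUlast^2 \geq -2\|\Gamma\|_{op}\|\Lambda\|_{op}^2 \bUlast^4 - 2\sqrt{d}\|\Lambda\|_{op}^2 \bUlast^2$, integrate explicitly, and read off strict positivity from the initial condition $\bUlast(0)^2 = \sigma^2 > 0$.
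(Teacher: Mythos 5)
Your proof is correct, but it takes a genuinely different route from the paper's. The paper argues via loss monotonicity: it observes that $\bUlast = 0$ forces $\tilde\ell = 0$, shows that the initialization condition $\sigma^2\sqrt{d}\|\Gamma\|_{op} < 2$ guarantees $\tilde\ell(\bUtl(0),\bUlast(0)) < 0$, and concludes that since $\tilde\ell$ is nonincreasing along the flow it stays strictly negative, so $\bUlast(t) \neq 0$ and hence (by continuity and $\bUlast(0) > 0$) stays positive. You instead invoke balancedness (Lemma~\ref{lem:balanced}) to see that $\bUlast(t') = 0$ forces $\bUtl(t') = 0$, identify the origin as an equilibrium of~\eqref{eq:dyn_finite_data}, and appeal to backward uniqueness for the locally Lipschitz vector field to derive a contradiction with $\bUlast(0) = \sigma > 0$. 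Your argument buys generality: as you observe, it establishes $\bUlast(t) > 0$ for \emph{every} $\sigma > 0$, without the quantitative hypothesis, which is indeed only needed downstream (in Lemma~\ref{lem:lower_bound_W} for a positive lower bound and in Lemma~\ref{lemma:proxypl} for the PL constant $\mu > 0$). The paper's approach is more elementary — it needs no ODE uniqueness machinery — and the inequality $\tilde\ell(0) < 0$ it establishes is recycled directly in the proof of Lemma~\ref{lem:lower_bound_W}, so the hypothesis is doing double duty there. Your treatment of non-blow-up (loss bounded below, balancedness controlling $\|\bUtl\|_F$ by $|\bUlast|$) is sketched but sound: Corollary~\ref{coro:min_loss} shows $\tilde\ell - \min\tilde\ell$ is a squared Frobenius norm controlling $\bUlast\bUtl$, and balancedness then bounds $\bUlast^2$.
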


\begin{proof}
    From Lemma \ref{lemma:loss}, we are actually doing gradient flow on the loss $\tilde \ell.$ The loss function is non-increasing, because
    \begin{align*}
        \frac{\mathrm{d}\tilde \ell}{\mathrm{d}t}
        &= \left\langle \frac{\mathrm{d} \bUtl}{\mathrm{d}t}, \frac{\partial \tilde \ell}{\partial \bUtl} \right\rangle 
        + \left\langle \frac{\mathrm{d} \bUlast}{\mathrm{d}t}, \frac{\partial \tilde \ell}{\partial \bUlast} \right\rangle 
        = - \left\|\frac{\mathrm{d} \bUtl}{\mathrm{d}t}\right\|_F^2 - \left\|\frac{\mathrm{d} \bUlast}{\mathrm{d}t}\right\|_F^2 \leq 0.
    \end{align*}
    We notice that when $\bUlast = 0,$ the loss function $\tilde \ell = 0.$ Therefore, as long as $\tilde \ell(\bUtl(0),\bUlast(0)) < 0,$ then for any time, $\bUlast$ will be non-zero. Further, since $\bUlast(0) > 0$ and the trajectory of $\bUlast(t)$ must be continuous, we know $\bUlast(t) > 0$ for any $t \geq 0.$ 
    
    Then, it suffices to prove when $0 < \sigma < \sqrt{\frac{2}{\sqrt{d} \left\|\Gamma\right\|_{op}}}$, it holds that  $\tilde \ell(\bUtl(0),\bUlast(0)) < 0.$ From Assumption \ref{assume_init}, we can calculate the loss function at the initial time:
    \begin{align*}
        \tilde \ell(\bUtl(0),\bUlast(0))
        &= \frac{\sigma^4}{2}\operatorname{tr}\left[\Gamma \Lambda \Theta \Theta^\top \Lambda \Theta \Theta^\top\right] - \sigma^2 \operatorname{tr}\left[\Lambda^2 \Theta \Theta^\top\right].
    \end{align*}
    From the property of trace, we know
    \begin{equation*}
        \operatorname{tr}\left[\Lambda^2 \Theta \Theta^\top\right] = \operatorname{tr}\left[\Lambda \Theta \Theta^\top \Lambda^\top\right] = \left\|\Lambda \Theta\right\|_F^2.
    \end{equation*}
    From Von-Neumann's trace inequality (Lemma \ref{lem:trace}) and the fact that $\left\|\Theta \Theta^\top \right\|_F = 1$, we know
    \begin{equation*}
        \operatorname{tr}\left[\Gamma \Lambda \Theta \Theta^\top \Lambda \Theta \Theta^\top\right] 
        \leq \sqrt{d}\left\|\Lambda \Theta \Theta^\top \Lambda \Theta \Theta^\top\right\|_F \cdot \left\| \Gamma\right\|_{op} 
        \leq \sqrt{d}\left\|\Lambda \Theta \right\|_F^2 \left\|\Theta \Theta^\top \right\|_F \left\| \Gamma\right\|_{op}
        = \sqrt{d}\left\|\Lambda \Theta \right\|_F^2 \left\| \Gamma\right\|_{op}.
    \end{equation*}
    Therefore, we have
    \begin{align*}
        \tilde \ell(\bUtl(0),\bUlast(0))
        &\leq \frac{\sqrt{d}\sigma^4}{2}\left\|\Lambda \Theta \right\|_F^2 \left\| \Gamma\right\|_{op} - \sigma^2  \left\|\Lambda \Theta\right\|_F^2 \\
        &= \frac{\sigma^2}{2}  \left\|\Lambda \Theta\right\|_F^2 \left[\sqrt{d}\sigma^2 \left\| \Gamma\right\|_{op} - 2\right].
    \end{align*}
    From Assumption \ref{assume_init}, we know $\left\|\Lambda \Theta\right\|_F \neq 0$. From \eqref{eqn:def_Gamma}, we know $\left\| \Gamma\right\|_{op} > 0.$ Therefore, when
    \begin{equation*}
        0 < \sigma < \sqrt{\frac{2}{\sqrt{d} \left\|\Gamma\right\|_{op}}},
    \end{equation*}
    we have
    \begin{equation*}
        \tilde \ell(\bUtl(0),\bUlast(0)) < 0.
    \end{equation*}
\end{proof}

\ 

From the lemma above, we can actually further prove that the $\bUlast(t)$ can be lower bounded by a positive constant for any $t \geq 0.$ This will be a critical property to prove the PL inequality. We have the following lemma.

\begin{lemma}\label{lem:lower_bound_W}
    Consider gradient flow over $L$ in \eqref{eqn:def_original_loss} with respect to $\param$ starting from an initial value satisfying Assumption~\ref{assume_init} with initial scale $0 < \sigma < \sqrt{\frac{2}{\sqrt{d} \left\|\Gamma\right\|_{op}}}.$ 
    For any $t \geq 0,$ it holds that
    \begin{equation}\label{eqn:lower_bound_Ulast}
        \bUlast \geq \sqrt{\frac{\sigma^2}{2\sqrt{d} \left\|\Lambda\right\|_{op}^2} \left\|\Lambda \Theta\right\|_F^2 \left[2 - \sqrt{d}\sigma^2 \left\| \Gamma\right\|_{op}\right]} > 0.
    \end{equation}
\end{lemma}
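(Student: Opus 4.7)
The plan is to combine three ingredients that are already either proved or easily derived: (i) the balancedness identity $\bUlast^2 = \tr[\bUtl (\bUtl)^\top]$ from Lemma~\ref{lem:balanced}, (ii) the monotonicity of $\tilde\ell$ along the gradient flow trajectory, and (iii) the strictly negative upper bound on $\tilde\ell(\bUtl(0),\bUlast(0))$ already computed inside the proof of Lemma~\ref{lem:non-zero}. From Lemma~\ref{lem:non-zero} we already know $\bUlast(t) > 0$ for all $t \geq 0$, so the task is only to turn this into a quantitative lower bound.

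The first step is to lower bound $\tilde \ell(\bUtl,\bUlast)$ in terms of $\bUlast$ and $\|\bUtl\|_F$. Since $\Gamma\Lambda \succeq 0$ and $\bUtl \Lambda (\bUtl)^\top \succeq 0$, the first term
\[
\tfrac{1}{2}\bUlast^{2}\operatorname{tr}\!\bigl[\Gamma\Lambda\,\bUtl\Lambda(\bUtl)^{\top}\bigr]
\]
is nonnegative by Lemma~\ref{lem:psd}, so
\[
\tilde\ell(\bUtl,\bUlast) \;\geq\; -\,\bUlast \,\operatorname{tr}\!\bigl[\Lambda^{2}(\bUtl)^{\top}\bigr].
\]
Next I would apply Von Neumann's trace inequality (Lemma~\ref{lem:trace}) to obtain
\[
\operatorname{tr}\!\bigl[\Lambda^{2}(\bUtl)^{\top}\bigr] \;\leq\; \sum_{i}\sigma_i(\Lambda^2)\sigma_i(\bUtl) \;\leq\; \|\Lambda\|_{op}^{2}\sum_i \sigma_i(\bUtl) \;\leq\; \sqrt{d}\,\|\Lambda\|_{op}^{2}\,\|\bUtl\|_{F},
\]
where the last step uses Cauchy--Schwarz on the singular values. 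Invoking the balancedness $\|\bUtl\|_F = \bUlast$ (valid throughout the trajectory, by Lemma~\ref{lem:balanced} together with $\bUlast > 0$ from Lemma~\ref{lem:non-zero}), this yields the clean bound
\[
\tilde\ell(\bUtl(t),\bUlast(t)) \;\geq\; -\sqrt{d}\,\|\Lambda\|_{op}^{2}\,\bUlast(t)^{2}.
\]

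The second step is to feed in monotonicity. Because gradient flow is descent on $\tilde\ell$ (this calculation was already done at the start of the proof of Lemma~\ref{lem:non-zero}),
\[
\tilde\ell(\bUtl(t),\bUlast(t)) \;\leq\; \tilde\ell(\bUtl(0),\bUlast(0))
\;\leq\; \tfrac{\sigma^{2}}{2}\,\|\Lambda\Theta\|_{F}^{2}\bigl[\sqrt{d}\sigma^{2}\|\Gamma\|_{op}-2\bigr],
\]
where the right-hand bound is exactly the estimate on the initial loss already established inside Lemma~\ref{lem:non-zero} via Von Neumann's inequality and $\|\Theta\Theta^\top\|_F = 1$. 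Under the hypothesis $\sigma^2\sqrt{d}\|\Gamma\|_{op} < 2$ this quantity is strictly negative, so its absolute value equals $\tfrac{\sigma^{2}}{2}\|\Lambda\Theta\|_{F}^{2}\bigl[2-\sqrt{d}\sigma^{2}\|\Gamma\|_{op}\bigr]$.

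Chaining the two displayed inequalities gives
\[
\tfrac{\sigma^{2}}{2}\,\|\Lambda\Theta\|_{F}^{2}\bigl[2-\sqrt{d}\sigma^{2}\|\Gamma\|_{op}\bigr]
\;\leq\; \sqrt{d}\,\|\Lambda\|_{op}^{2}\,\bUlast(t)^{2},
\]
and solving for $\bUlast(t)$, which is positive by Lemma~\ref{lem:non-zero}, produces exactly the bound~\eqref{eqn:lower_bound_Ulast}. The main (very mild) obstacle is the choice of the right matrix-norm inequality to control the cross term $\operatorname{tr}[\Lambda^2 (\bUtl)^\top]$ tightly enough that the resulting bound matches the lemma statement; the asymmetric Von Neumann plus Cauchy--Schwarz bound above produces the $\sqrt d\,\|\Lambda\|_{op}^2$ factor that appears in the denominator of~\eqref{eqn:lower_bound_Ulast}, so no stronger inequality is required.
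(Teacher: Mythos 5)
Your proof is correct and uses exactly the same ingredients and chain of inequalities as the paper's: balancedness from Lemma~\ref{lem:balanced}, positivity of $\bUlast$ from Lemma~\ref{lem:non-zero}, the PSD bound on the first term of $\tilde\ell$, Von--Neumann's trace inequality (Lemma~\ref{lem:trace}), monotonicity of $\tilde\ell$ along gradient flow, and the upper bound on $\tilde\ell(\bUtl(0),\bUlast(0))$ already established in the proof of Lemma~\ref{lem:non-zero}. The only cosmetic difference is that the paper phrases the argument as a proof by contradiction while you chain the inequalities directly, which is an immaterial presentational choice.
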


\begin{proof}
    We prove by contradiction. Suppose the claim does not hold. From Lemma \ref{lem:balanced}, we know $\bUlast^2 = \operatorname{tr}\left[\bUtl (\bUtl)^\top\right] = \left\|\bUtl\right\|_F^2.$ From Lemma \ref{lem:non-zero}, we know $\bUlast = \left\|\bUtl\right\|_F.$ Recall the definition of loss function:
    \begin{align*}
        \tilde \ell(\bUtl,\bUlast) = \operatorname{tr}\left[\frac{1}{2} \bUlast^2 \Gamma \Lambda \bUtl \Lambda (\bUtl)^\top- \bUlast \Lambda^2 (\bUtl)^\top\right].
    \end{align*}
    Since $\Gamma \succeq 0, \Lambda \succeq 0,$ and they commute, we know from Lemma \ref{lem:psd} that $\Gamma \Lambda \succeq 0.$ Again, since  $\bUtl \Lambda (\bUtl)^\top = \left(\bUtl \Lambda^{\frac{1}{2}}\right)\left(\bUtl \Lambda^{\frac{1}{2}}\right)^\top \succeq 0,$ from Lemma \ref{lem:psd} we have $\operatorname{tr}\left[\frac{1}{2} \bUlast^2 \Gamma \Lambda \bUtl \Lambda (\bUtl)^\top \right] \geq 0.$ So
    \begin{equation*}
        \tilde \ell(\bUtl,\bUlast) \geq - \operatorname{tr}\left[\bUlast \Lambda^2 (\bUtl)^\top\right].
    \end{equation*}
    From Von-Neumann's trace inequality,
    we know for any $t \geq 0,$
    \begin{equation*}
        - \operatorname{tr}\left[\bUlast \Lambda^2 (\bUtl)^\top\right] \geq - \sqrt{d} \bUlast \left\|\Lambda^2\right\|_{op} \left\|\bUtl\right\|_F = - \sqrt{d} \bUlast^2 \left\|\Lambda\right\|_{op}^2.
    \end{equation*}
    Therefore, under our assumption that the claim does not hold, we have
    \begin{equation*}
        \tilde \ell(\bUtl,\bUlast) \geq - \sqrt{d} \bUlast^2 \left\|\Lambda\right\|_{op}^2 
        >  -\frac{\sigma^2}{2} \left\|\Lambda \Theta\right\|_F^2 \left[2 - \sqrt{d}\sigma^2 \left\| \Gamma\right\|_{op}\right] \geq \tilde \ell(\bUtl(0),\bUlast(0)).
    \end{equation*}
    Here, the last inequality comes from the proof of Lemma \ref{lem:non-zero}. This contradicts the non-increasing property of the loss function in gradient flow.
\end{proof}

\ 

Finally, let's prove the PL inequality and further, the global convergence of gradent flow on the loss function $\tilde \ell$. We recall the stated lemma from the main text.

\plinequality*
\begin{proof}
    From the definition and Lemma \ref{lem:lower_bound_W}, we have
    \begin{align}
        \left\|\nabla \ell(\bUtl,\bUlast)\right\|_2^2 
        &\geq\left\|\frac{\partial \ell}{\partial \bUtl}\right\|_F^2 
        = \left\| \bUlast^2 \Gamma \Lambda \bUtl \Lambda - \bUlast \Lambda^2\right\|_F^2 \notag\\
        &= \bUlast^2 \left\| \Gamma \Lambda^{\frac{1}{2}} \left(\bUlast \Lambda^{\frac{1}{2}} \bUtl \Lambda^{\frac{1}{2}} - \Lambda \Gamma^{-1}\right) \Lambda^{\frac{1}{2}} \right\|_F^2 \notag\\
        &\geq \frac{\sigma^2}{2\sqrt{d}\left\|\Lambda\right\|_{op}^2} \left\|\Lambda \Theta\right\|_F^2 \left[2 - \sqrt{d}\sigma^2 \left\| \Gamma\right\|_{op}\right] \left\| \Gamma \Lambda^{\frac{1}{2}} \left(\bUlast \Lambda^{\frac{1}{2}} \bUtl \Lambda^{\frac{1}{2}} - \Lambda \Gamma^{-1}\right) \Lambda^{\frac{1}{2}} \right\|_F^2. \label{eqn:PL_eq1}
    \end{align}
    To see why the second line is true, recall that $\bUlast \in \mathbb{R}$ and $\Gamma$ and $\Lambda$ commute. The last line comes from the lower bound of $\bUlast$ in Lemma \ref{lem:lower_bound_W}. From Corollary \ref{coro:min_loss}, we know
    \begin{align*}
        \ell - \min_{\bUtl \in \mathbb{R}^{d \times d}, \bUlast \in \mathbb{R}} \ell(\bUtl,\bUlast) 
        &= \frac{1}{2}\operatorname{tr}\left[\Gamma \left(\bUlast \Lambda^{\frac{1}{2}} \bUtl \Lambda^{\frac{1}{2}} - \Lambda \Gamma^{-1}\right) \left(\bUlast \Lambda^{\frac{1}{2}} \bUtl \Lambda^{\frac{1}{2}} - \Lambda \Gamma^{-1}\right)^\top\right] \\
        &= \frac{1}{2} \left\|\Gamma^{\frac{1}{2}} \left(\bUlast \Lambda^{\frac{1}{2}} \bUtl \Lambda^{\frac{1}{2}} - \Lambda \Gamma^{-1}\right)\right\|_F^2.
    \end{align*}
    Therefore, we know that 
    \begin{align}
        \ell - \min_{\bUtl \in \mathbb{R}^{d \times d}, \bUlast \in \mathbb{R}} \ell(\bUtl,\bUlast) 
        &\leq \frac{1}{2} \left\|\Gamma \Lambda^{\frac{1}{2}} \left(\bUlast \Lambda^{\frac{1}{2}} \bUtl \Lambda^{\frac{1}{2}} - \Lambda \Gamma^{-1}\right) \Lambda^{\frac{1}{2}}\right\|_F^2  \cdot \left\|\Gamma^{-\frac{1}{2}} \Lambda^{-\frac{1}{2}}\right\|_F^2 \left\| \Lambda^{-\frac{1}{2}}\right\|_F^2 \notag \\
        &= \frac{1}{2} \left\|\Gamma \Lambda^{\frac{1}{2}} \left(\bUlast \Lambda^{\frac{1}{2}} \bUtl \Lambda^{\frac{1}{2}} - \Lambda \Gamma^{-1}\right) \Lambda^{\frac{1}{2}}\right\|_F^2  \cdot \operatorname{tr}\left(\Gamma^{-1} \Lambda^{-1}\right)\operatorname{tr}\left(\Lambda^{-1}\right) \label{eqn:PL_eq2}
    \end{align}
    We compare \eqref{eqn:PL_eq1} and \eqref{eqn:PL_eq2} to obtain that in order to make the PL condition hold, one needs to let
    \begin{equation*}
        \mu := \frac{\sigma^2}{\sqrt{d}\left\|\Lambda\right\|_{op}^2 \operatorname{tr}\left(\Gamma^{-1} \Lambda^{-1}\right)\operatorname{tr}\left(\Lambda^{-1}\right)} \left\|\Lambda \Theta\right\|_F^2 \left[2 - \sqrt{d}\sigma^2 \left\| \Gamma\right\|_{op}\right] > 0.
    \end{equation*}
    Once we set this $\mu,$ we get the PL inequality. The $\mu$ is positive due to the assumption for $\sigma$ in the lemma.
    
    From the dynamics of gradient flow and the PL condition, we know
    \begin{align*}
        \frac{\mathrm{d}}{\mathrm{d}t} \left(\tilde \ell - \min_{\bUtl \in \mathbb{R}^{d \times d}, \bUlast \in \mathbb{R}} \tilde \ell(\bUtl,\bUlast)\right)
        &= \left\langle \frac{\mathrm{d} \bUtl}{\mathrm{d}t}, \frac{\partial \tilde \ell}{\partial \bUtl} \right\rangle 
        + \left\langle \frac{\mathrm{d} \bUlast}{\mathrm{d}t}, \frac{\partial \tilde \ell}{\partial \bUlast} \right\rangle 
        = - \left\|\frac{\mathrm{d} \bUtl}{\mathrm{d}t}\right\|_F^2 - \left|\frac{\mathrm{d} \bUlast}{\mathrm{d}t}\right|^2 \\
        &\leq -\mu \left(\tilde \ell - \min_{\bUtl \in \mathbb{R}^{d \times d}, \bUlast \in \mathbb{R}} \tilde \ell(\bUtl,\bUlast)\right).
    \end{align*}
    Therefore, we have when $t \to \infty,$
    \begin{equation*}
        0 \leq \tilde \ell - \min_{\bUtl \in \mathbb{R}^{d \times d}, \bUlast \in \mathbb{R}} \tilde \ell(\bUtl,\bUlast) 
        \leq \exp\left(-\mu t\right) \left[\tilde \ell(\bUtl(0),\bUlast(0)) - \min_{\bUtl \in \mathbb{R}^{d \times d}, \bUlast \in \mathbb{R}} \tilde \ell(\bUtl,\bUlast)\right] \to 0,
    \end{equation*}
    which implies
    \begin{equation*}
        \lim_{t \to \infty} \left[\tilde \ell - \min_{\bUtl \in \mathbb{R}^{d \times d}, \bUlast \in \mathbb{R}} \tilde \ell(\bUtl,\bUlast)\right]  = 0.
    \end{equation*}
From Corollary \ref{coro:min_loss}, we know this is
    \begin{equation*}
        \left\|\Gamma^{\frac{1}{2}} \left(\bUlast \Lambda^{\frac{1}{2}} \bUtl \Lambda^{\frac{1}{2}} - \Lambda \Gamma^{-1}\right)\right\|_F^2 \to 0.
    \end{equation*}
    Since $\Gamma$ and $\Lambda$ are non-singular and positive definite, and they commute, we know
    \begin{equation*}
        \left\|\bUlast  \bUtl - \Gamma^{-1}\right\|_F^2 
        \leq  \left\|\Gamma^{-\frac{1}{2}} \Lambda^{-\frac{1}{2}} \right\|_F^2 \left\|\Gamma^{\frac{1}{2}} \left(\bUlast \Lambda^{\frac{1}{2}} \bUtl \Lambda^{\frac{1}{2}} - \Lambda \Gamma^{-1}\right)\right\|_F^2 \left\|\Lambda^{-\frac{1}{2}} \right\|_F^2
        \to 0.
    \end{equation*}
    This implies
    $\bUlast  \bUtl - \Gamma^{-1} \to 0_{d \times d}$ entry-wise. Since $\bUlast = \left\|\bUtl\right\|_F,$ we know 
    \begin{equation*}
         \bUlast^2 = \left\|\bUlast  \bUtl\right\|_F \to \left\|\Gamma^{-1}\right\|_F.
    \end{equation*}
    Therefore, we know
    \begin{align*}
    \lim_{t \to \infty} \bUlast(t) = \left\|\Gamma^{-1}\right\|_F^{\frac{1}{2}} \text{ and }
        \lim_{t \to \infty} \bUtl(t) = \left\|\Gamma^{-1}\right\|_F^{-\frac{1}{2}} \Gamma^{-1}.
    \end{align*}
\end{proof}

\section{Proof of Theorem \ref{thm:general_risk_maintext}}\label{appendix_proof_risk_general}

In this section, we prove Theorem \ref{thm:general_risk_maintext}, which characterizes the excess risk of the prediction of a trained LSA layer with respect to the risk of best linear predictor, on a new task which is possibly non-linear. First, we restate the theorem.

\generalrisk*
\begin{proof}
    Unless otherwise specified, we denote $\E$ as the expectation over $(\testx_i,\testy_i), (\testx_\query,\testy_\query) \iid \calD.$ Since when $(x,y) \sim \calD,$ we assume $\E[x], \E[y], \E[xy], \E[xx^\top], \E[y^2 xx^\top]$ exist, we know that $\E \left(\sip{\testw}{\testx_\query} - \testy_\query\right)^2$ exists for each $\testw \in \mathbb{R}^d.$ We denote
    \begin{equation*}
        a := \mathop{\arg \min}_{\testw \in \mathbb{R}^d} \E \left(\sip{\testw}{\testx_\query} - \testy_\query\right)^2
    \end{equation*}
    as the weight of the best linear approximator. Actually, if we denote the function inside the minimum above as $R(\testw),$ we can write it as
    \begin{equation*}
        R(\testw) = \testw^\top \Lambda \testw - 2 \E \left(\testy_\query \cdot \testx_\query^\top \right) \testw + \E \testy_\query^2.
    \end{equation*}
    Since the Hessian matrix $\frac{\partial^2}{\partial \testw \partial \testw^\top} R(\testw)$ is $\Lambda$, which is positive definitive, we know that this function is strictly convex and hence, the global minimum can be achieved at the unique first-order stationary point. This is
    \begin{equation}
        a = \Lambda^{-1} \E \left(\testy_\query \cdot \testx_\query \right).
    \end{equation}
    We also define a similar vector for ease of computation:
    \begin{equation}
        b = \Gamma^{-1} \E \left(\testy_\query \cdot \testx_\query \right).
    \end{equation}
    Therefore, we can decompose the error as
    \begin{align*}
        &\E\left(\widehat{y}_{\query} - \testy_\query\right)^2
         = \underbrace{\E \left(\sip{a}{\testx_\query} - \testy_\query\right)^2}_{\text{I}}
        + \underbrace{\E \left(\widehat{y}_{\query} - \sip{b}{\testx_\query}\right)^2}_{\text{II}} \\
        &\hspace{5ex} + \underbrace{\E \left(\sip{b}{\testx_\query} - \sip{a}{\testx_\query}\right)^2}_{\text{III}}
        + \underbrace{2\E \left(\widehat{y}_{\query} - \sip{b}{\testx_\query}\right)\left(\sip{a}{\testx_\query} - \testy_\query\right)}_{\text{IV}} \\
        &\hspace{5ex} + \underbrace{2\E \left(\widehat{y}_{\query} - \sip{b}{\testx_\query}\right)\left(\sip{b}{\testx_\query} - \sip{a}{\testx_\query} \right)}_{\text{V}}
        + \underbrace{2\E \left(\sip{b}{\testx_\query} - \sip{a}{\testx_\query} \right)\left(\sip{a}{\testx_\query} - \testy_\query\right)}_{\text{VI}}
    \end{align*}
    The term I is the first term on the right hand side of \eqref{eqn:risk_nonlinear}. So it suffices to calculate II to VI. 
    
    \ 
    
    First, from the tower property of conditional expectation, we have
    \begin{align*}
        \text{V}
        &= 2\E \left[ \E \left(\left(\widehat{y}_{\query} - \sip{b}{\testx_\query}\right)\left(\sip{b}{\testx_\query} - \sip{a}{\testx_\query} \right) \bigg| \testx_\query\right) \right] \\
        &= 2\E \left[ \E \left(\widehat{y}_{\query} - \sip{b}{\testx_\query} \bigg| \testx_\query \right) \left(\sip{b}{\testx_\query} - \sip{a}{\testx_\query} \right) \right] = 0,
    \end{align*}
    since
    \begin{equation*}
        \E \left(\widehat{y}_{\query} - \sip{b}{\testx_\query} \bigg| \testx_\query \right)
        = \left(\E \frac{1}{M}\sum_{i=1}^M \testy_i \Gamma^{-1} \testx_i  - b\right)^\top \testx_\query = 0.
    \end{equation*}
    
    \ 
    
    Similarly, for IV, we have
    \begin{align*}
        \text{IV}
        &= 2\E \left(\widehat{y}_{\query} - \sip{b}{\testx_\query}\right)\left(\sip{a}{\testx_\query} - \testy_\query\right) \\
        &= 2\E \left[ \E \left(\left(\widehat{y}_{\query} - \sip{b}{\testx_\query}\right)\left(\sip{a}{\testx_\query} - \testy_\query\right) \bigg| \testx_\query, \testy_\query\right)\right] \\
        &= 2\E \left[ \E \left(\widehat{y}_{\query} - \sip{b}{\testx_\query}\bigg| \testx_\query, \testy_\query\right) \left(\sip{a}{\testx_\query} - \testy_\query\right) \right] \\
        &= 0.
    \end{align*}
    
    \ 
    
    For VI, we have
    \begin{align*}
        \text{VI}
        &=2 \E \tr \left[(b-a) \left(\sip{a}{\testx_\query} - \testy_\query\right) \testx_\query^\top \right] \\
        &= 2 \tr \left[(b-a) a^\top \Lambda\right] - 2 \tr\left[(b-a) \E \left(\testy_\query \testx_\query^\top \right) \right]
        = 0,
    \end{align*}
    where the last line comes from the definition of $a.$ Therefore, all cross terms vanish and it suffices to consider II and III.
    
    \ 
    
    For II, from the definition we have
    \begin{align*}
        &\text{II} \\
        =&\E \left(\frac{1}{M}\sum_{i=1}^M \testy_i \testx_i  - \E \left(\testy_\query \cdot \testx_\query \right) \right)^\top \Gamma^{-1} \testx_\query \testx_\query^\top \Gamma^{-1} \left(\frac{1}{M}\sum_{i=1}^M \testy_i \testx_i  - \E \left(\testy_\query \cdot \testx_\query \right) \right) \\
        =& \E \tr \left(\frac{1}{M}\sum_{i=1}^M \testy_i \testx_i  - \E \left(\testy_\query \cdot \testx_\query \right) \right) \left(\frac{1}{M}\sum_{i=1}^M \testy_i \testx_i  - \E \left(\testy_\query \cdot \testx_\query \right) \right)^\top \Gamma^{-2} \Lambda \tag{property of trace and the fact that $\Gamma$ and $\Lambda$ commute} \\
        =& \frac{1}{M^2} \sum_{i,j=1}^M \E \tr \left\{\left(\testy_i \testx_i  - \E \left(\testy_\query \cdot \testx_\query \right) \right) \left( \testy_j \testx_j  - \E \left(\testy_\query \cdot \testx_\query \right) \right)^\top \Gamma^{-2} \Lambda\right\} \\
        =& \frac{1}{M} \E \tr \left\{\left(\testy_1 \testx_1  - \E \left(\testy_\query \cdot \testx_\query \right) \right) \left( \testy_1 \testx_1  - \E \left(\testy_\query \cdot \testx_\query \right) \right)^\top \Gamma^{-2} \Lambda\right\} \tag{all cross terms vanish due to the independence of $\testx_i$} \\
        =& \frac{1}{M} \tr\left[\Sigma \Gamma^{-2} \Lambda\right].
    \end{align*}
    The last line comes from the definition of $\Sigma.$ 
    
    \ 
    
    For III, we have
    \begin{align*}
        \text{III}
        &= \E (b-a)^\top \testx_\query \testx_\query^\top (b-a)
        = a^\top \Lambda (\Gamma^{-1}-\Lambda^{-1}) \Lambda (\Gamma^{-1}-\Lambda^{-1}) \Lambda a \\
        &= \tr\left[\left(I - \Gamma \Lambda^{-1}\right)^2 \Gamma^{-2} \Lambda^3 a a^\top\right] \tag{property of trace and the fact that $\Gamma$ and $\Lambda$ commute}\\
        &= \frac{1}{N^2} \tr\left[\left(I_d + \tr(\Lambda) \Lambda^{-1}\right)^2 \Gamma^{-2} \Lambda^3 a a^\top\right] \\
        &= \frac{1}{N^2} \left[\tr(\Gamma^{-2} \Lambda^3 a a^\top) + 2\tr(\Lambda) \tr(\Gamma^{-2} \Lambda^2 a a^\top) 
        + \tr(\Lambda)^2 \tr(\Gamma^{-2} \Lambda a a^\top)\right].
    \end{align*}
    Combining all terms above, we conclude.
\end{proof}

\section{Proof of Theorem \ref{thm:convergence_random}}\label{appendix_proof_random}
The proof of Theorem \ref{thm:convergence_random} is very similar to that of Theorem \ref{thm:mainresult}. The first step is to explicitly write out the dynamical system. In order to do so, we notice that the Lemma \ref{lemma:lsa.is.quadratic} does not depend on the training data and data-generaing distribution and hence, it still holds in the case of a random covariance matrix. Therefore, we know when we input the embedding matrix $E_\tau$ to the linear self-attention layer with parameter $\params = (\WKQ,\WPV),$ the prediction will be 
\begin{equation*}
    \widehat y_{\query}(E_\tau; \params) = \param^\top H_\tau \param,
\end{equation*}
where the matrix $H_\tau$ is defined as,
\begin{equation*}
    H_\tau = \frac{1}{2} X_\tau \otimes \left(\frac{E_\tau E_\tau^\top}{N}\right) \in \mathbb{R}^{(d+1)^2 \times (d+1)^2},\quad
    X_\tau = \begin{pmatrix}[1.5]
    0_{d \times d} & x_{\tau,\query}\\
    \left(x_{\tau,\query}\right)^\top & 0
    \end{pmatrix} \in \mathbb{R}^{(d+1) \times (d+1)}
\end{equation*}
and
\begin{equation*}
    \param = \vector(\bU) \in \mathbb{R}^{(d+1)^2}, \quad
    \bU = \begin{pmatrix}[1.5]
            \bUtl & \bUtr \\
            (\bUbl)^\top & \bUlast
    \end{pmatrix} \in \mathbb{R}^{(d+1) \times (d+1)},
\end{equation*}
where $\bUtl = \WKQ_{11} \in \mathbb{R}^{d \times d}, \bUtr = w_{21}^{PV} \in \mathbb{R}^{d \times 1},\bUbl = w_{21}^{KQ} \in \mathbb{R}^{d \times 1},\bUlast = w_{22}^{PV} \in \mathbb{R}$ correspond to particular components of 
$\WPV$ and $\WKQ$, defined in \eqref{eqn:block_matrix}.

\ 

\subsection{Dynamical system}
The next lemma gives the dynamical system when the covariance matrices in the prompts are i.i.d. sampled from some distribution. Notice that in the lemma below, we do not assume $\Lambda_\tau$ are almost surely diagonal. The case when the covariance matrices are diagonal can be viewed as a special case of the following lemma.

\begin{restatable}{lemma}{dyn_system_random}\label{lem:dyn_random}
    Consider gradient flow on \eqref{eqn:loss_random_cov} with respect to $\param$ starting from an initial value that satisfies Assumption~\ref{assume_init}.
    We assume the covariance matrices $\Lambda_\tau$ are sampled from some distribution with finite third moment and $\Lambda_\tau$ are positive definite almost surely. We denote 
    $\param = \vecU := \vector \begin{pmatrix}[1.5]
                \bUtl & \bUtr \\
                (\bUbl)^\top & \bUlast
        \end{pmatrix}$ 
        and define
    \begin{equation*}
        \Gamma_\tau = \left(1 + \frac{1}{N}\right)\Lambda_\tau + \frac{1}{N}\operatorname{tr}(\Lambda_\tau) I_d \in \mathbb{R}^{d \times d}.
    \end{equation*}
    Then the dynamics of $\bU$ follows
    \begin{equation}\label{eqn:random_dyn}
    \begin{aligned}
        \frac{\mathrm{d}}{\mathrm{d} t} \bUtl(t)
        &= - \bUlast^2 \E \left[\Gamma_\tau \Lambda_\tau \bUtl \Lambda_\tau \right] + \bUlast \E \left[\Lambda_\tau^2\right]\\
        \frac{\mathrm{d}}{\mathrm{d} t} \bUlast(t) &= -\bUlast \operatorname{\tr} \E \left[\Gamma_\tau \Lambda_\tau \bUtl \Lambda_{\tau} (\bUtl)^\top\right] + \operatorname{tr}\left(\E \left[\Lambda_\tau^2\right] (\bUtl)^\top \right),
    \end{aligned}
    \end{equation}
    and $\bUtr (t)= 0_{d}, \bUbl (t)= 0_{d}$ for all $t \geq 0.$
\end{restatable}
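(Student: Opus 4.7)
The plan is to reduce Lemma~\ref{lem:dyn_random} to Lemma~\ref{lem:dyn_finite_data_main} by conditioning on $\Lambda_\tau$. First, I observe that Lemma~\ref{lemma:lsa.is.quadratic} is a purely algebraic identity relating $\param$ and the token embedding $E_\tau$, and is agnostic to the distribution of the data. Hence the prediction is still $\widehat y_{\tau,\query} = \param^\top H_\tau \param$ with the same $H_\tau = \f12 X_\tau \otimes (E_\tau E_\tau^\top/N)$ as in~\eqref{eqn:def_H_tau}. The population loss~\eqref{eqn:loss_random_cov} therefore reads $L(\param) = \f12 \E[(\param^\top H_\tau \param - w_\tau^\top x_{\tau,\query})^2]$, with the outer expectation now additionally over $\Lambda_\tau$. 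Exchanging gradient and expectation (justified since $L$ is a polynomial in the entries of $\param$ whose coefficients are finite under the third-moment hypothesis), the gradient flow on $\param$ takes exactly the form~\eqref{eqn:gradient_original}.

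The key observation is that conditional on $\Lambda_\tau$, the prompt is exactly a fixed-covariance prompt with covariance $\Lambda_\tau$: we have $x_{\tau,i}, x_{\tau,\query} \mid \Lambda_\tau \iid \normal(0, \Lambda_\tau)$, and $w_\tau \sim \normal(0, I_d)$ is independent of $\Lambda_\tau$ and of the $x_{\tau,i}$'s. Therefore, by the tower property, each of the two expectations in~\eqref{eqn:gradient_original} can be evaluated by first conditioning on $\Lambda_\tau$ and then applying verbatim the calculation from the proof of Lemma~\ref{lem:dyn_finite_data_main}, with every appearance of $\Lambda$ replaced by $\Lambda_\tau$ and every appearance of $\Gamma$ replaced by $\Gamma_\tau = \f{N+1}{N}\Lambda_\tau + \f{1}{N}\tr(\Lambda_\tau) I_d$. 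In particular the moment computation $\E[\hat\Lambda_\tau^2 \mid \Lambda_\tau] = \Gamma_\tau \Lambda_\tau$ follows from Lemma~\ref{lem:fourth_moment} applied conditionally.

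I would then carry out the same three substeps as in the fixed-covariance proof. First, the conditional derivatives of $\bUtr$ and $\bUbl$ already vanish whenever $\bUtr = \bUbl = 0_d$, since the argument in ``Step Three'' of Lemma~\ref{lem:dyn_finite_data_main}'s proof uses only that $w_\tau$ has mean zero and is independent of the $x_{\tau,i}$'s and of $x_{\tau,\query}$, all of which remain true after conditioning on $\Lambda_\tau$. Combined with Assumption~\ref{assume_init}, this gives $\bUtr(t) = \bUbl(t) = 0_d$ for all $t \geq 0$. Second, conditionally on $\Lambda_\tau$, the dynamics of $\bUtl$ is $-\bUlast^2 \Gamma_\tau \Lambda_\tau \bUtl \Lambda_\tau + \bUlast \Lambda_\tau^2$, and that of $\bUlast$ is $-\tr[\bUlast \Gamma_\tau \Lambda_\tau \bUtl \Lambda_\tau (\bUtl)^\top - \Lambda_\tau^2 (\bUtl)^\top]$. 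Taking the outer expectation over $\Lambda_\tau$ and using linearity of expectation together with the fact that $\bUtl(t)$ and $\bUlast(t)$ are deterministic (and hence pull out of the expectation) yields exactly~\eqref{eqn:random_dyn}.

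There is no substantial obstacle beyond bookkeeping and verifying integrability. The highest-order appearance of $\Lambda_\tau$ in the dynamics is in the term $\E[\tr(\Lambda_\tau) \Lambda_\tau \bUtl \Lambda_\tau]$, which arises from the $\f{1}{N}\tr(\Lambda_\tau) I_d$ summand of $\Gamma_\tau$, and this term is cubic in the entries of $\Lambda_\tau$. The finite third-moment assumption therefore guarantees that all the expectations $\E[\Gamma_\tau \Lambda_\tau \bUtl \Lambda_\tau]$, $\E[\Lambda_\tau^2]$, and $\E[\Gamma_\tau \Lambda_\tau \bUtl \Lambda_\tau (\bUtl)^\top]$ exist, and that the interchange of differentiation and expectation used at the outset is legitimate.
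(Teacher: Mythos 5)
Your proof is correct and follows the same approach as the paper: condition on $\Lambda_\tau$, apply Lemma~\ref{lem:dyn_finite_data_main} verbatim with $\Lambda \mapsto \Lambda_\tau$ and $\Gamma \mapsto \Gamma_\tau$, then take the outer expectation over $\Lambda_\tau$. You are in fact slightly more careful than the paper's one-paragraph argument---notably in justifying the gradient/expectation interchange and in making explicit that the ``conditional dynamics'' language is shorthand for computing the gradient's expectation by the tower property with $\bUtl(t), \bUlast(t)$ deterministic---but the underlying idea is identical.
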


\begin{proof}
    This lemma is a natural corollary of Lemma \ref{lem:dyn_finite_data_main}. Notice that Lemma~\ref{lem:dyn_finite_data_main} holds for any fixed positive definite $\Lambda_\tau.$ So when $\Lambda_\tau$ is random, if we condition on $\Lambda_\tau$, the dynamical system will be 
    \begin{equation}
    \begin{aligned}
        \frac{\mathrm{d}}{\mathrm{d} t} \bUtl(t)
        &= - \bUlast^2  \left[\Gamma_\tau \Lambda_\tau \bUtl \Lambda_\tau \right] + \bUlast  \left[\Lambda_\tau^2\right]\\
        \frac{\mathrm{d}}{\mathrm{d} t} \bUlast(t) &= -\bUlast \operatorname{\tr} \left[\Gamma_\tau \Lambda_\tau \bUtl \Lambda_{\tau} (\bUtl)^\top\right] + \operatorname{tr}\left(\left[\Lambda_\tau^2\right] (\bUtl)^\top \right),
    \end{aligned}
    \end{equation}
    and $\bUtr (t)= 0_{d}, \bUbl (t)= 0_{d}$ for all $t \geq 0.$ Then, we conclude by simply taking expectation over $\Lambda_\tau.$
\end{proof}

\ 

The lemma above gives the dynamical system with general random covariance matrix. When $\Lambda_\tau$ are diagonal almost surely, we can actually simplify the dynamical system above. In this case, we have the following corollary.
\begin{corollary}\label{coro:dyn_diag_random}
    Under the assumptions of Lemma \ref{lem:dyn_random}, we further assume the covariance matrix $\Lambda_\tau$ to be diagonal almost surely. We denote $u_{ij}(t) \in \R$ as the $(i,j)$-th entry of $\bUtl(t),$ and further denote
    \begin{equation}\label{eqn:def_gamma_zeta_xi}
    \begin{aligned}
        \gamma_i &= \E \left[\frac{N+1}{N}\lambda_{\tau,i}^3 + \frac{1}{N}\lambda_{\tau,i}^2 \cdot \sum_{j=1}^d \lambda_{\tau,j}\right], \\
        \xi_i &= \E \left[\lambda_{\tau,i}^2\right], \\
        \zeta_{ij} &= \E \left[\frac{N+1}{N}\lambda_{\tau,i}^2 \lambda_{\tau,j} + \frac{1}{N}\lambda_{\tau,i} \lambda_{\tau,j} \cdot \sum_{k=1}^d \lambda_{\tau,k}\right]
    \end{aligned}
    \end{equation}
    for $i,j \in [d],$ where the expectation is over the distribution of $\Lambda_\tau.$ Then, the dynamical system \eqref{eqn:random_dyn} is equivalent to
    \begin{equation}\label{eqn:dyn_random_diag}
    \begin{aligned}
        \frac{\mathrm{d}}{\mathrm{d}t} u_{ii}(t) &= - \gamma_i \bUlast^2 u_{ii} + \xi_i \bUlast \quad \forall i \in [d], \\
        \frac{\mathrm{d}}{\mathrm{d}t} u_{ij}(t) &= - \zeta_{ij} \bUlast^2 u_{ij} \quad \forall i \neq j \in [d], \\
        \frac{\mathrm{d}}{\mathrm{d} t} \bUlast(t) &= - \sum_{i=1}^d \left[\gamma_i \bUlast u_{ii}^2\right] - \sum_{i\neq j} \zeta_{ij} \bUlast u_{ij}^2 + \sum_{i=1}^d \left[\xi_i u_{ii}\right].
    \end{aligned}
    \end{equation}
\end{corollary}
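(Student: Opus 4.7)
The plan is to reduce the matrix-valued dynamical system \eqref{eqn:random_dyn} to the scalar/entrywise system \eqref{eqn:dyn_random_diag} by exploiting the diagonal structure of $\Lambda_\tau$. First I would observe that under the hypothesis that $\Lambda_\tau$ is diagonal almost surely, the matrix $\Gamma_\tau = \frac{N+1}{N}\Lambda_\tau + \frac{1}{N}\tr(\Lambda_\tau)I_d$ is also diagonal almost surely, with $i$-th diagonal entry equal to $\frac{N+1}{N}\lambda_{\tau,i} + \frac{1}{N}\sum_{k=1}^d \lambda_{\tau,k}$. Since diagonal matrices act on $\bUtl$ entrywise (left-multiplication scales rows, right-multiplication scales columns), I can compute every term in \eqref{eqn:random_dyn} coordinate by coordinate and then take expectations against the law of $\Lambda_\tau$.

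For the dynamics of $\bUtl$, the $(i,j)$-entry of $\Gamma_\tau \Lambda_\tau \bUtl \Lambda_\tau$ is $(\Gamma_\tau)_{ii}\,\lambda_{\tau,i}\lambda_{\tau,j}\,u_{ij}$, which expands to $\bigl[\tfrac{N+1}{N}\lambda_{\tau,i}^2\lambda_{\tau,j} + \tfrac{1}{N}\lambda_{\tau,i}\lambda_{\tau,j}\sum_{k}\lambda_{\tau,k}\bigr]u_{ij}$. Taking expectation over $\Lambda_\tau$ yields exactly $\gamma_i u_{ii}$ on the diagonal and $\zeta_{ij} u_{ij}$ off the diagonal, following the definitions in \eqref{eqn:def_gamma_zeta_xi}. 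The forcing term $\E[\Lambda_\tau^2]$ is diagonal with $i$-th entry $\xi_i$, so it contributes only to the diagonal entries. This immediately gives the first two ODEs in \eqref{eqn:dyn_random_diag}: the diagonal entries $u_{ii}$ satisfy an inhomogeneous equation driven by $\xi_i \bUlast$, while the off-diagonal $u_{ij}$ satisfy a purely homogeneous linear equation with coefficient $-\zeta_{ij}\bUlast^2$.

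For the dynamics of $\bUlast$, I would expand $\tr[\Gamma_\tau \Lambda_\tau \bUtl \Lambda_\tau (\bUtl)^\top]$ componentwise: the $(i,i)$-entry of $\Lambda_\tau \bUtl \Lambda_\tau (\bUtl)^\top$ is $\sum_j \lambda_{\tau,i}\lambda_{\tau,j} u_{ij}^2$, so multiplying by the diagonal $\Gamma_\tau$ and summing over $i$ gives $\sum_{i,j}(\Gamma_\tau)_{ii}\lambda_{\tau,i}\lambda_{\tau,j} u_{ij}^2$. Splitting into $i=j$ and $i\neq j$ contributions and taking expectations produces $\sum_{i}\gamma_i u_{ii}^2 + \sum_{i\neq j}\zeta_{ij} u_{ij}^2$. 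The remaining term $\tr(\E[\Lambda_\tau^2](\bUtl)^\top) = \sum_i \xi_i u_{ii}$ is immediate from the diagonal form of $\E[\Lambda_\tau^2]$, closing the third equation of \eqref{eqn:dyn_random_diag}. The argument is purely computational; there is no genuine obstacle beyond careful bookkeeping of the diagonal versus off-diagonal indices, which is exactly the reason the resulting system decouples into the three separate blocks displayed in the statement.
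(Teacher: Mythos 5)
Your proposal is correct and follows exactly the route of the paper's (very terse) proof: the paper simply notes that the result is obtained by rewriting \eqref{eqn:random_dyn} entrywise using that $\Lambda_\tau$ and hence $\Gamma_\tau$ are diagonal almost surely, and your entrywise expansion of $\Gamma_\tau\Lambda_\tau \bUtl \Lambda_\tau$, of the trace term, and of the forcing term $\E[\Lambda_\tau^2]$ is precisely that computation carried out in full. The identification of the diagonal and off-diagonal coefficients with $\gamma_i$, $\zeta_{ij}$, $\xi_i$ matches \eqref{eqn:def_gamma_zeta_xi}, so nothing is missing.
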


\begin{proof}
    This is directly obtained by rewriting the equation for each entry of $\bUtl$ and recalling the assumption that $\Lambda_\tau$ (and hence $\Gamma_\tau$) is diagonal almost surely.
\end{proof}

\subsection{Loss function and global minima}
As in the proof of Theorem \ref{thm:mainresult}, we can actually recover the loss function in the random covariance case, up to a constant.
\def\rdm{\mathsf{rdm}}

\begin{lemma}\label{lem:loss_random_cov}
    The differential equations in \eqref{eqn:dyn_random_diag} are equivalent to gradient flow on the loss function
    \begin{equation}\label{eqn:def_loss_random}
    \begin{aligned}
        \ell_{\rdm}(\bUtl, \bUlast)
        &= \E \tr \left[\frac{1}{2} \bUlast^2 \Gamma_\tau \Lambda_\tau \bUtl \Lambda_\tau (\bUtl)^\top - \bUlast \Lambda_\tau^2 (\bUtl)^\top\right] \\
        &= \frac{1}{2} \sum_{i=1}^d \left[\gamma_i \bUlast^2 u_{ii}^2\right] + \frac{1}{2} \sum_{i\neq j} \zeta_{ij} \bUlast^2 u_{ij}^2 - \sum_{i=1}^d \left[\xi_i u_{ii} \bUlast\right]
    \end{aligned}
    \end{equation}
    with respect to $u_{ij} \forall i,j \in [d]$ and $u_{-1}$, from an initial value that satisfies Assumption~\ref{assume_init}.
\end{lemma}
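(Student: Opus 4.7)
My plan is to verify that the two stated expressions for $\ell_{\rdm}$ coincide under the diagonality assumption on $\Lambda_\tau$, and then to compute its partial derivatives directly and match them against the right-hand sides of \eqref{eqn:dyn_random_diag}.

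First, I would exploit the almost sure diagonality of $\Lambda_\tau$, which makes $\Gamma_\tau = \frac{N+1}{N}\Lambda_\tau + \frac{1}{N}\tr(\Lambda_\tau)I_d$ diagonal as well. Entry-by-entry, $[\Lambda_\tau \bUtl \Lambda_\tau]_{ij} = \lambda_{\tau,i}\lambda_{\tau,j}u_{ij}$, and left-multiplying by the diagonal $\Gamma_\tau$ scales the $i$-th row by $(\Gamma_\tau)_{ii}$. Taking the trace against $(\bUtl)^\top$ picks out the squared entries, yielding $\tr[\Gamma_\tau \Lambda_\tau \bUtl \Lambda_\tau (\bUtl)^\top] = \sum_{i,j}(\Gamma_\tau)_{ii}\lambda_{\tau,i}\lambda_{\tau,j}u_{ij}^2$. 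Expanding $(\Gamma_\tau)_{ii}\lambda_{\tau,i}\lambda_{\tau,j} = \frac{N+1}{N}\lambda_{\tau,i}^2\lambda_{\tau,j} + \frac{1}{N}\lambda_{\tau,i}\lambda_{\tau,j}\sum_k \lambda_{\tau,k}$ and taking expectations, one recognizes the coefficient of $u_{ii}^2$ as $\gamma_i$ and the coefficient of $u_{ij}^2$ for $i\neq j$ as $\zeta_{ij}$, matching \eqref{eqn:def_gamma_zeta_xi}. Similarly, $\E\tr[\Lambda_\tau^2(\bUtl)^\top] = \E\sum_i \lambda_{\tau,i}^2 u_{ii} = \sum_i \xi_i u_{ii}$. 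Assembling these two pieces with the $\frac{1}{2}\bUlast^2$ and $-\bUlast$ weights gives the entrywise form displayed in the lemma.

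Second, differentiating the entrywise form is immediate: $\partial \ell_{\rdm}/\partial u_{ii} = \gamma_i \bUlast^2 u_{ii} - \xi_i \bUlast$; $\partial \ell_{\rdm}/\partial u_{ij} = \zeta_{ij}\bUlast^2 u_{ij}$ for $i\neq j$; and $\partial \ell_{\rdm}/\partial \bUlast = \sum_i \gamma_i \bUlast u_{ii}^2 + \sum_{i\neq j}\zeta_{ij}\bUlast u_{ij}^2 - \sum_i \xi_i u_{ii}$. Each of these is precisely the negative of the corresponding right-hand side in \eqref{eqn:dyn_random_diag}, so the gradient flow equations $\dot{u}_{ij} = -\partial_{u_{ij}} \ell_{\rdm}$ and $\dot{\bUlast} = -\partial_{\bUlast} \ell_{\rdm}$ coincide exactly with the stated system.

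There is no real obstacle; the argument is essentially bookkeeping. The only care needed is in correctly identifying the coefficients $\gamma_i$, $\zeta_{ij}$, $\xi_i$ from \eqref{eqn:def_gamma_zeta_xi}, noting in particular that $\gamma_i$ absorbs both the $\frac{N+1}{N}\lambda_{\tau,i}$ term and the $\frac{1}{N}\tr(\Lambda_\tau)$ term coming from $(\Gamma_\tau)_{ii}$ when $i=j$. The assumed finite third moments of the $\lambda_{\tau,i}$ guarantee that $\gamma_i, \zeta_{ij}, \xi_i$ are all finite, so the loss $\ell_{\rdm}$ is well defined, which closes the argument.
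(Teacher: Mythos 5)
Your proof is correct and takes essentially the same route as the paper, which likewise verifies the lemma by direct differentiation of $\ell_{\rdm}$ and matching against \eqref{eqn:dyn_random_diag}; your additional explicit check that the trace form equals the entrywise form under diagonal $\Lambda_\tau$ is just the bookkeeping the paper leaves implicit, and your coefficient identifications of $\gamma_i$, $\zeta_{ij}$, $\xi_i$ are all accurate.
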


\begin{proof}
This can be verified by simply taking gradient of $\ell_{\rdm}$ to show that
\begin{equation*}
    \frac{\mathrm{d}}{\mathrm{d}t} u_{ii} = - \frac{\partial \ell_{\rdm}}{\partial u_{ii}} \quad \forall i \in [d], \quad
    \frac{\mathrm{d}}{\mathrm{d}t} u_{ij} = - \frac{\partial \ell_{\rdm}}{\partial u_{ij}} \quad \forall i\neq j \in [d], \quad
    \frac{\mathrm{d}}{\mathrm{d}t} \bUlast = - \frac{\partial \ell_{\rdm}}{\partial \bUlast}.
\end{equation*}
\end{proof}

\ 

Next, we solve for the minimum of $\ell_{\rdm}$ and give the expression for all global minima.
\begin{lemma}\label{lem:min_loss_random}
    Let $\ell_{\rdm}$ be the loss function in \eqref{eqn:def_loss_random}.
    We denote
    \begin{equation*}
        \min \ell_{\rdm} := \min_{\bUtl \in \R^{d\times d}, \bUlast \in \R} \ell_{\rdm}\left(\bUtl, \bUlast\right).
    \end{equation*}
    Then, we have
    \begin{equation}
        \min \ell_{\rdm} = - \frac{1}{2} \sum_{i=1}^d \frac{\xi_i^2}{\gamma_i}
    \end{equation}
    and 
    \begin{equation}
        \ell_{\rdm}(\bUtl,\bUlast) - \min \ell_{\rdm} = \frac{1}{2} \sum_{i=1}^d \gamma_i \left(u_{ii} \bUlast - \frac{\xi_i}{\gamma_i}\right)^2 + \frac{1}{2} \sum_{i\neq j} \zeta_{ij} \bUlast^2 u_{ij}^2.
    \end{equation}
    Moreover, denoting $u_{ij}$ as the $(i,j)$-entry of $\bUtl$, all global minima of $\ell_\rdm$ satisfy
    \begin{equation}\label{eqn:global_min_random}
        \bUlast \cdot u_{ij} = \mathbb{I}(i=j) \cdot \frac{\xi_i}{\gamma_i}.
    \end{equation}
\end{lemma}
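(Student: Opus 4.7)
\textbf{Proof proposal for Lemma~\ref{lem:min_loss_random}.}
The plan is to reduce the problem to completing the square. Starting from the second expression for $\ell_\rdm$ in~\eqref{eqn:def_loss_random}, I would group the terms coupling $u_{ii}$ and $u_{-1}$ for each diagonal index $i \in [d]$:
\begin{equation*}
\frac{1}{2}\gamma_i u_{-1}^2 u_{ii}^2 - \xi_i u_{ii} u_{-1} \;=\; \frac{1}{2}\gamma_i\left(u_{ii} u_{-1} - \frac{\xi_i}{\gamma_i}\right)^2 - \frac{\xi_i^2}{2\gamma_i}.
\end{equation*}
This rewriting uses only that $\gamma_i > 0$, which follows from the definition~\eqref{eqn:def_gamma_zeta_xi} and the assumption that the diagonal entries of $\Lambda_\tau$ are strictly positive almost surely with finite third moments. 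Summing over $i$ and adding in the off-diagonal terms yields
\begin{equation*}
\ell_\rdm(\bUtl, u_{-1}) \;=\; \frac{1}{2}\sum_{i=1}^d \gamma_i\left(u_{ii} u_{-1} - \frac{\xi_i}{\gamma_i}\right)^2 + \frac{1}{2}\sum_{i\neq j}\zeta_{ij}\, u_{-1}^2 u_{ij}^2 \;-\; \frac{1}{2}\sum_{i=1}^d \frac{\xi_i^2}{\gamma_i},
\end{equation*}
which already proves the identity for $\ell_\rdm - \min \ell_\rdm$ claimed in the lemma, once the minimum value is identified.

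Next, I would observe that both sums on the right-hand side are non-negative: the first because each $\gamma_i > 0$, and the second because $\zeta_{ij} > 0$ (again by~\eqref{eqn:def_gamma_zeta_xi} and strict positivity of the diagonal entries of $\Lambda_\tau$). Hence $\ell_\rdm \geq -\tfrac{1}{2}\sum_i \xi_i^2/\gamma_i$ everywhere. To check this lower bound is attained, I would exhibit an explicit minimizer, e.g.\ $u_{-1}=1$, $u_{ii} = \xi_i/\gamma_i$ for all $i$, and $u_{ij}=0$ for $i \neq j$, which zeros out both nonnegative sums. This establishes the stated value of $\min \ell_\rdm$.

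Finally, to characterize \emph{all} global minima, I would argue that at any minimizer both nonnegative sums must vanish. Vanishing of the diagonal sum requires $\gamma_i(u_{ii}u_{-1} - \xi_i/\gamma_i)^2 = 0$ for each $i$, and since $\gamma_i > 0$ this gives $u_{ii}u_{-1} = \xi_i/\gamma_i$. Vanishing of the off-diagonal sum requires $\zeta_{ij} u_{-1}^2 u_{ij}^2 = 0$ for each $i \neq j$; since $\zeta_{ij} > 0$, this forces $u_{-1} u_{ij} = 0$. Combining these two conditions gives exactly~\eqref{eqn:global_min_random}. The main (very mild) subtlety is the off-diagonal case: the product $u_{-1}u_{ij}$ is forced to zero without forcing either factor to be zero individually, but this is precisely what the indicator formulation in~\eqref{eqn:global_min_random} encodes. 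No further obstacle is expected, as the argument is entirely algebraic.
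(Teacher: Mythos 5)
Your proposal is correct and follows essentially the same route as the paper: complete the square on the diagonal terms, recognize the remaining sums are nonnegative because $\gamma_i, \zeta_{ij} > 0$, exhibit the minimizer $(\bUlast, u_{ii}, u_{ij}) = (1, \xi_i/\gamma_i, 0)$, and read off the minimum conditions. One very minor imprecision: at any global minimum the diagonal condition $u_{ii}\bUlast = \xi_i/\gamma_i > 0$ actually forces $\bUlast \neq 0$, so $u_{ij}$ itself (not merely the product $\bUlast u_{ij}$) must vanish for $i\neq j$; this does not affect the conclusion, which only asserts $\bUlast u_{ij}=0$.
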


\begin{proof}
    From the definition of $\ell_\rdm,$ we have
    \begin{align*}
        \ell_\rdm = \frac{1}{2} \sum_{i=1}^d \gamma_i \left(u_{ii} \bUlast - \frac{\xi_i}{\gamma_i}\right)^2 + \frac{1}{2} \sum_{i\neq j} \zeta_{ij} \bUlast^2 u_{ij}^2 - \frac{1}{2} \sum_{i=1}^d \frac{\xi_i^2}{\gamma_i} 
        \geq -\frac{1}{2} \sum_{i=1}^d \frac{\xi_i^2}{\gamma_i}.
    \end{align*}
    The equation holds when $u_{ij} = 0$ for $i \neq j \in [d]$ and $\bUlast u_{ii} = \frac{\xi_i}{\gamma_i}$ for each $i \in [d].$ This can be achieved by simply letting $\bUlast = 1$ and $u_{ii} = \frac{\xi_i}{\gamma_i}$ for $i \in [d].$ Of course, when we replace $(\bUlast, u_{ii})$ with $(c\bUlast, c^{-1} u_{ii})$ for any constant $c \neq 0,$ we can also achieve this global minimum.
\end{proof}

\subsection{PL Inequality and global convergence}

Finally, to end the proof, we prove a Polyak-\L ojasiewicz Inequality on the loss function $\ell_\rdm$, and then prove global convergence. Before that, let's first prove the balanced condition of parameters will hold during the whole trajectory. 

\begin{lemma}[Balanced condition]\label{lem:balanced_random}
    Under the assumptions of Lemma \ref{lem:dyn_random}, for any $t \geq 0,$ it holds that 
    \begin{equation}\label{eqn:balanced_param_random}
        \bUlast^2 = \tr\left[\bUtl (\bUtl)^\top\right].
    \end{equation}
\end{lemma}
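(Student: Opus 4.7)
The plan is to mimic exactly the proof of Lemma~\ref{lem:balanced} in the fixed-covariance setting, the only change being that every quantity involving $\Lambda$ or $\Gamma$ is now inside an expectation over $\Lambda_\tau$. Concretely, I would show that $\frac{d}{dt}\bigl(\bUlast^2\bigr) = \frac{d}{dt}\tr[\bUtl(\bUtl)^\top]$ for all $t \geq 0$, and then verify the balanced identity at $t = 0$ using Assumption~\ref{assume_init}; since the difference $\tr[\bUtl(\bUtl)^\top] - \bUlast^2$ is constant along the flow and vanishes initially, it vanishes for all time.

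For the derivative step, I would right-multiply the first equation of \eqref{eqn:random_dyn} by $(\bUtl)^\top$ and take the trace, obtaining
\[
\tr\!\left[\frac{d\bUtl}{dt}(\bUtl)^\top\right] = -\bUlast^2\,\tr\,\E\!\left[\Gamma_\tau \Lambda_\tau \bUtl \Lambda_\tau (\bUtl)^\top\right] + \bUlast\,\tr\!\left(\E[\Lambda_\tau^2]\,(\bUtl)^\top\right).
\]
Multiplying the second equation of \eqref{eqn:random_dyn} by $\bUlast$ yields exactly the same right-hand side. Taking the symmetric part (adding the transpose of the first identity to itself) gives
\[
\tfrac12\frac{d}{dt}\tr[\bUtl(\bUtl)^\top] \;=\; \tfrac12\frac{d}{dt}\bigl(\bUlast^2\bigr),
\]
so $\tr[\bUtl(\bUtl)^\top] - \bUlast^2$ is conserved along the trajectory.

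For the initial condition, Assumption~\ref{assume_init} gives $\bUlast(0) = \sigma$ and (reading off the $(1,1)$ block of $\WKQ(0)$) $\bUtl(0) = \sigma\,\Theta\Theta^\top$, so
\[
\tr[\bUtl(0)(\bUtl(0))^\top] \;=\; \sigma^2\,\tr\!\left[\Theta\Theta^\top\,(\Theta\Theta^\top)^\top\right] \;=\; \sigma^2\,\|\Theta\Theta^\top\|_F^2 \;=\; \sigma^2 \;=\; \bUlast(0)^2,
\]
where the penultimate equality uses $\|\Theta\Theta^\top\|_F = 1$ from Assumption~\ref{assume_init}. Combining this with the conservation law established above yields $\bUlast(t)^2 = \tr[\bUtl(t)(\bUtl(t))^\top]$ for all $t \geq 0$.

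I do not anticipate any real obstacle: the argument is a direct matching of the two right-hand sides in \eqref{eqn:random_dyn}. The only point worth flagging is that the randomness in $\Lambda_\tau$ is harmless because the expectation $\E[\cdot]$ enters both equations of \eqref{eqn:random_dyn} through the same two tensors $\E[\Gamma_\tau \Lambda_\tau \,\cdot\, \Lambda_\tau]$ and $\E[\Lambda_\tau^2]$, which is precisely what makes the cancellation in the deterministic proof go through verbatim here.
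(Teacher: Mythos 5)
Your proposal is correct and follows essentially the same argument as the paper: multiply the first equation of \eqref{eqn:random_dyn} on the right by $(\bUtl)^\top$ and take the trace, multiply the second by $\bUlast$, observe the two right-hand sides coincide, conclude that $\tr[\bUtl(\bUtl)^\top]-\bUlast^2$ is conserved, and verify it vanishes at $t=0$ via Assumption~\ref{assume_init} and $\|\Theta\Theta^\top\|_F=1$. No gaps.
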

\begin{proof}
    The proof is similar to the proof of Lemma \ref{lem:balanced}. From Lemma \ref{lem:dyn_finite_data_main}, we multiply the first equation in \eqref{eqn:random_dyn} by $(\bUtl)^\top$ from the right to get
    \begin{equation*}
        \left[\frac{\mathrm{d}}{\mathrm{d} t} \bUtl(t)\right] (\bUtl)^\top
        = - \bUlast^2 \E \left[\Gamma_\tau \Lambda_\tau \bUtl \Lambda_\tau (\bUtl)^\top\right] + \bUlast \E \left[\Lambda_\tau^2 (\bUtl)^\top\right].
    \end{equation*}
    Also we multiply the second equation in Lemma \ref{eqn:random_dyn} by $\bUlast$ to obtain
    \begin{equation*}
        \left(\frac{\mathrm{d}}{\mathrm{d} t}\bUlast(t) \right)\bUlast(t) 
        = -\bUlast^2 \operatorname{\tr} \E \left[\Gamma_\tau \Lambda_\tau \bUtl \Lambda_{\tau} (\bUtl)^\top\right] + \bUlast\operatorname{tr}\left(\E \left[\Lambda_\tau^2\right] (\bUtl)^\top \right),
    \end{equation*}
    Therefore, we have
    \begin{equation*}
        \operatorname{tr}\left[\left(\frac{\mathrm{d}}{\mathrm{d} t} \bUtl(t)\right) (\bUtl(t))^\top\right] = \left(\frac{\mathrm{d}}{\mathrm{d} t} \bUlast(t)\right) \bUlast(t).
    \end{equation*}
    Taking the transpose of the equation above and adding to itself gives
    \begin{equation*}
        \frac{\mathrm{d}}{\mathrm{d} t} \operatorname{tr}\left[\bUtl(t) (\bUtl(t))^\top\right] = \frac{\mathrm{d}}{\mathrm{d} t} \left(\bUlast(t)^2\right).
    \end{equation*}
    Notice that from Assumption~\ref{assume_init}, we know that
    \begin{equation*}
        \bUlast(0)^2 = \sigma^2 = \sigma^2 \operatorname{tr}\left[\Theta \Theta^\top \Theta \Theta^\top\right] = \operatorname{tr}\left[\bUtl(0) (\bUtl(0))^\top\right].
    \end{equation*}
    So for any time $t\geq 0,$ the equation holds.
\end{proof}

\ 

Next, similar to the proof of Theorem \ref{thm:mainresult}, we prove that, as long as the initial scale is small enough, $\bUlast$ will be positive along the whole trajectory and can be lower bounded by a positive constant, which implies that the trajectories will be away from the saddle point at the origin.

\begin{lemma}\label{lem:non_zero_random}
    We do gradient flow on $\ell_\rdm$ with respect to $u_{i,j} \ (\forall i,j \in [d])$ and $\bUlast$. Suppose the initialization satisfies Assumption \ref{assume_init} with initial scale
    \begin{equation}\label{eqn:init_cond_random_proof}
        0 < \sigma < \sqrt{\frac{2\left\|\E \Lambda_\tau \Theta\right\|_F^2}{\sqrt{d} \left[\E \left\|\Gamma_\tau\right\|_{op} \left\|\Lambda_\tau\right\|_F^2 \right]}},
    \end{equation}
    then for any $t \geq 0,$ it holds that
    \begin{equation}
        \bUlast(t) > 0.
    \end{equation}
\end{lemma}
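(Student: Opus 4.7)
The strategy mirrors the argument for Lemma~\ref{lem:non-zero}: show that $\ell_\rdm$ is non-increasing along gradient flow, observe that $\ell_\rdm$ vanishes identically on the set $\{\bUlast=0\}$, and then verify that the hypothesis \eqref{eqn:init_cond_random_proof} forces $\ell_\rdm(\bUtl(0),\bUlast(0))<0$. Combined with continuity of the trajectory and $\bUlast(0)=\sigma>0$, these three facts preclude $\bUlast(t)$ from ever vanishing, hence $\bUlast(t)>0$ for all $t\geq 0$.

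The first two ingredients are immediate. Since gradient flow obeys
\[\tfrac{\mathrm{d}}{\mathrm{d}t}\ell_\rdm = -\Big\|\tfrac{\partial \ell_\rdm}{\partial \bUtl}\Big\|_F^{2} - \Big|\tfrac{\partial \ell_\rdm}{\partial \bUlast}\Big|^{2} \leq 0,\]
the loss is monotone non-increasing. Moreover, inspection of \eqref{eqn:def_loss_random} shows that every term in $\ell_\rdm$ carries at least one factor of $\bUlast$, so $\ell_\rdm(\bUtl,0)=0$ for any $\bUtl$.

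The substantive step is the initial-value computation. Plugging Assumption~\ref{assume_init} (so $\bUtl(0)=\sigma\Theta\Theta^\top$ and $\bUlast(0)=\sigma$) into the trace representation of $\ell_\rdm$ yields
\[\ell_\rdm(\bUtl(0),\bUlast(0)) = \tfrac{1}{2}\sigma^{4}\,\E\tr\!\big[\Gamma_\tau \Lambda_\tau\Theta\Theta^\top \Lambda_\tau\Theta\Theta^\top\big] - \sigma^{3}\,\E\tr\!\big[\Lambda_\tau^{2}\Theta\Theta^\top\big].\]
For the positive term, I would combine the trace inequality $\tr(\Gamma_\tau M)\leq \sqrt{d}\,\|\Gamma_\tau\|_{op}\|M\|_F$ (valid since $M:=\Lambda_\tau\Theta\Theta^\top \Lambda_\tau\Theta\Theta^\top$ has rank at most $d$) with repeated use of $\|AB\|_F \leq \|A\|_{op}\|B\|_F$ and the elementary bound $\|\Theta\|_{op}^{2}\leq \|\Theta\Theta^\top\|_F=1$ to obtain $\|M\|_F\leq \|\Lambda_\tau\|_F^{2}$, and thus
\[\E\tr\!\big[\Gamma_\tau \Lambda_\tau\Theta\Theta^\top \Lambda_\tau\Theta\Theta^\top\big] \leq \sqrt{d}\,\E\!\big[\|\Gamma_\tau\|_{op}\|\Lambda_\tau\|_F^{2}\big].\]
For the negative term, rewrite $\tr[\Lambda_\tau^{2}\Theta\Theta^\top]=\|\Lambda_\tau\Theta\|_F^{2}$ and apply Jensen's inequality entrywise to the random matrix $\Lambda_\tau\Theta$, giving $\E\|\Lambda_\tau\Theta\|_F^{2}\geq \|\E[\Lambda_\tau]\Theta\|_F^{2}$. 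Putting the two estimates together,
\[\ell_\rdm(\bUtl(0),\bUlast(0)) \leq \tfrac{\sigma^{3}}{2}\,\Big(\sigma^{2}\sqrt{d}\,\E\!\big[\|\Gamma_\tau\|_{op}\|\Lambda_\tau\|_F^{2}\big] - 2\|\E[\Lambda_\tau]\Theta\|_F^{2}\Big),\]
which is strictly negative precisely under hypothesis \eqref{eqn:init_cond_random_proof}; the assumption $\|\E[\Lambda_\tau]\Theta\|_F\neq 0$ ensures this bound is non-vacuous.

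To close, observe that by monotonicity $\ell_\rdm(\bUtl(t),\bUlast(t))\leq \ell_\rdm(\bUtl(0),\bUlast(0))<0$ for every $t\geq 0$. Since $\ell_\rdm$ vanishes on $\{\bUlast=0\}$, no trajectory point satisfies $\bUlast(t)=0$; combined with $\bUlast(0)=\sigma>0$ and continuity of $t\mapsto \bUlast(t)$ (a consequence of local existence and the smoothness of $\nabla \ell_\rdm$), the intermediate value theorem rules out a sign flip, so $\bUlast(t)>0$ throughout. I expect the main technical nuisance to be bookkeeping the chain of operator/Frobenius inequalities so that the constant in the upper bound on the positive term matches the denominator in \eqref{eqn:init_cond_random_proof} exactly; the Jensen step for the negative term, which replaces $\E\|\Lambda_\tau\Theta\|_F^{2}$ by the tighter $\|\E[\Lambda_\tau]\Theta\|_F^{2}$, is precisely what allows the condition on $\sigma$ to be stated in terms of the averaged quantity $\E[\Lambda_\tau]$ rather than a pessimistic moment.
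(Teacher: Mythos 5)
Your approach matches the paper's: show $\ell_\rdm$ is non-increasing, observe it vanishes on $\{\bUlast=0\}$, bound $\ell_\rdm(0)$ from above using Von Neumann's trace inequality on the positive term, and conclude by continuity. One arithmetic slip: with $\bUtl(0)=\sigma\Theta\Theta^\top$ and $\bUlast(0)=\sigma$, the second term of $\ell_\rdm(0)$ is $-\sigma^2\,\E\tr[\Lambda_\tau^2\Theta\Theta^\top]$, not $-\sigma^3(\cdots)$, so the common prefactor in your last display should be $\sigma^2/2$ rather than $\sigma^3/2$; the sign analysis and the resulting condition on $\sigma$ are unaffected. Your explicit use of Jensen's inequality, $\E\|\Lambda_\tau\Theta\|_F^2\geq\|\E[\Lambda_\tau]\Theta\|_F^2$, is in fact a needed correction: the paper writes this step as an equality ("from the property of trace"), which is only true when $\Lambda_\tau$ is deterministic, and the inequality is the honest justification for stating the condition on $\sigma$ in terms of $\|\E\Lambda_\tau\Theta\|_F$.
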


\begin{proof}
    From the dynamics of gradient flow, we know the loss function $\ell_\rdm$ is non-increasing:
    \begin{equation*}
        \frac{\mathrm{d} \ell_\rdm}{\mathrm{d} t} 
        = \sum_{i,j=1}^d \frac{\partial \ell_\rdm}{\partial u_{ij}} \cdot \frac{\mathrm{d} u_{ij}}{\mathrm{d} t} + \frac{\partial \ell_\rdm}{\partial \bUlast} \cdot \frac{\mathrm{d} \bUlast}{\mathrm{d} t}
        = - \sum_{i,j=1}^d \left[\frac{\partial \ell_\rdm}{\partial u_{ij}}\right]^2 - \left[\frac{\partial \ell_\rdm}{\partial \bUlast}\right]^2 \leq 0.
    \end{equation*}
Since we assume $\bUtl(0) = \Theta \Theta^\top,$ we know the loss function at $t = 0$ is
\begin{equation*}
    \ell_\rdm(\bUtl(0),\bUlast(0)) = \E \tr \left[\frac{\sigma^4}{2} \Gamma_\tau \Lambda_\tau \Theta \Theta^\top \Lambda_\tau \Theta \Theta^\top - \sigma^2 \Lambda_\tau^2 \Theta \Theta^\top\right].
\end{equation*}
From the property of trace, we know
\begin{equation*}
    \E \tr\left[\sigma^2 \Lambda_\tau^2 \Theta \Theta^\top\right] = \sigma^2 \left\|\E \Lambda_\tau \Theta\right\|_F^2.
\end{equation*}
From Von-Neumann's trace inequality and the assumption that $\left\| \Theta \Theta^\top\right\|_F = 1$, we know
\begin{align*}
    \E \tr \left[\frac{\sigma^4}{2} \Gamma_\tau \Lambda_\tau \Theta \Theta^\top \Lambda_\tau \Theta \Theta^\top \right] 
    &\leq \frac{\sigma^4 \sqrt{d}}{2} \E \left\|\Gamma_\tau\right\|_{op} \left\|\Lambda_\tau \Theta \Theta^\top \Lambda_\tau \Theta \Theta^\top\right\|_F \\
    &\leq \frac{\sigma^4 \sqrt{d} \left\| \Theta \Theta^\top\right\|_F^2}{2} \left[\E \left\|\Gamma_\tau\right\|_{op} \left\|\Lambda_\tau\right\|_F^2 \right]
    = \frac{\sigma^4 \sqrt{d}}{2} \left[\E \left\|\Gamma_\tau\right\|_{op} \left\|\Lambda_\tau\right\|_F^2 \right].
\end{align*}
From the assumptions on $\Theta$ and $\Lambda_\tau$ we know $\E \Lambda_\tau \Theta \neq 0_{d \times d}$ and $\E \left\|\Gamma_\tau\right\|_{op} \left\|\Lambda_\tau\right\|_F^2 > 0.$ Therefore, comparing the two displays above, we know when \eqref{eqn:init_cond_random_proof} holds, we must have $\ell_\rdm(0) < 0.$ So from the non-increasing property of the loss function, we know $\ell_\rdm(t) < 0$ for any time $t \geq 0.$ Notice that when $\bUlast = 0,$ the loss function is also zero, which suggests that $\bUlast(t) \neq 0$ for any time $t \geq 0.$ Since $\bUlast(0) > 0$ and the trajectory of $\bUlast$ must be continuous, we know that it stays positive at all times. 
\end{proof}

\ 

\begin{lemma}\label{lem:lower_bound_u_random}
    We do gradient flow on $\ell_\rdm$ with respect to $u_{i,j} \ (\forall i,j \in [d])$ and $\bUlast$. Suppose the initialization satisfies Assumption \ref{assume_init} and the initial scale satisfies \eqref{eqn:init_cond_random_proof}. Then, for any $t \geq 0,$ it holds that
    \begin{equation}
        \bUlast(t) \geq \sqrt{\frac{\sigma^2}{2\sqrt{d}\left\|\E \Lambda_\tau^2\right\|_{op}} \left[2\left\|\E \Lambda_\tau \Theta\right\|_F^2 - \sqrt{d} \sigma^2 \left[\E \left\|\Gamma_\tau\right\|_{op} \left\|\Lambda_\tau\right\|_F^2 \right]\right]} > 0.
    \end{equation}
\end{lemma}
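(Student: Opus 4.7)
The plan is to mirror the strategy of Lemma~\ref{lem:lower_bound_W} in the deterministic case, now using the random-covariance ingredients established in Lemmas~\ref{lem:balanced_random} and~\ref{lem:non_zero_random}. I will argue by contradiction: assume that at some time $t \ge 0$ the quantity $\bUlast(t)$ falls strictly below the claimed lower bound, and then show this forces the loss $\ell_\rdm(\bUtl(t),\bUlast(t))$ to exceed its initial value, contradicting the monotonicity of $\ell_\rdm$ along gradient flow.

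The first step is to rewrite $\bUlast$ in a form that can be related to $\bUtl$. By Lemma~\ref{lem:balanced_random} we have $\bUlast(t)^2 = \tr[\bUtl(t)(\bUtl(t))^\top] = \|\bUtl(t)\|_F^2$, and by Lemma~\ref{lem:non_zero_random} the initialization condition~\eqref{eqn:init_cond_random_proof} forces $\bUlast(t) > 0$, so $\bUlast(t) = \|\bUtl(t)\|_F$ for all $t \ge 0$. Next, I will lower bound $\ell_\rdm(\bUtl,\bUlast)$ pointwise. Writing
\[
\ell_\rdm(\bUtl,\bUlast) = \tfrac{1}{2}\bUlast^2 \,\E\tr\!\big[\Gamma_\tau \Lambda_\tau \bUtl \Lambda_\tau (\bUtl)^\top\big] - \bUlast\,\tr\!\big(\E[\Lambda_\tau^2](\bUtl)^\top\big),
\]
the first term is nonnegative because $\Gamma_\tau\Lambda_\tau \succeq 0$ (commuting positive semidefinite matrices, by Lemma~\ref{lem:psd}) and $\bUtl\Lambda_\tau(\bUtl)^\top \succeq 0$. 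For the second term, Von~Neumann's trace inequality (Lemma~\ref{lem:trace}) gives
\[
\tr\!\big(\E[\Lambda_\tau^2](\bUtl)^\top\big) \le \sqrt{d}\,\|\E[\Lambda_\tau^2]\|_{op}\,\|\bUtl\|_F = \sqrt{d}\,\|\E[\Lambda_\tau^2]\|_{op}\,\bUlast,
\]
where the last equality used the balancedness from the first step. Combining these yields
\[
\ell_\rdm(\bUtl(t),\bUlast(t)) \ge -\sqrt{d}\,\|\E[\Lambda_\tau^2]\|_{op}\,\bUlast(t)^2.
\]

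To close the argument, recall from the proof of Lemma~\ref{lem:non_zero_random} that the initialization satisfies
\[
\ell_\rdm(\bUtl(0),\bUlast(0)) \le -\tfrac{\sigma^2}{2}\Big[2\|\E\Lambda_\tau\Theta\|_F^2 - \sqrt{d}\,\sigma^2\,\E\!\big[\|\Gamma_\tau\|_{op}\|\Lambda_\tau\|_F^2\big]\Big],
\]
and condition~\eqref{eqn:init_cond_random_proof} ensures the bracketed quantity is strictly positive. Because $\ell_\rdm$ is non-increasing along gradient flow (each coordinate derivative equals minus the corresponding partial, so $\tfrac{\mathrm d}{\mathrm d t}\ell_\rdm \le 0$), the same upper bound holds for $\ell_\rdm(\bUtl(t),\bUlast(t))$ at every $t$. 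Chaining this with the lower bound displayed above and solving for $\bUlast(t)^2$ yields
\[
\bUlast(t)^2 \ge \tfrac{\sigma^2}{2\sqrt{d}\,\|\E\Lambda_\tau^2\|_{op}}\Big[2\|\E\Lambda_\tau\Theta\|_F^2 - \sqrt{d}\,\sigma^2\,\E\!\big[\|\Gamma_\tau\|_{op}\|\Lambda_\tau\|_F^2\big]\Big],
\]
which is exactly the claimed bound, and is strictly positive by the initialization hypothesis. The main obstacle, which I expect to be routine but needs care, is justifying the nonnegativity of the quadratic term in $\ell_\rdm$ under the expectation over $\Lambda_\tau$: one must use that $\Gamma_\tau$ and $\Lambda_\tau$ commute almost surely (which holds since $\Lambda_\tau$ is diagonal a.s.\ and hence so is $\Gamma_\tau$), so that $\Gamma_\tau\Lambda_\tau \succeq 0$ pointwise and the trace under expectation remains nonnegative.
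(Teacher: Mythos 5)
Your proposal is correct and follows essentially the same route as the paper's own proof: lower-bound $\ell_\rdm$ by dropping the nonnegative quadratic term (using that $\Gamma_\tau\Lambda_\tau\succeq 0$ and $\bUtl\Lambda_\tau(\bUtl)^\top\succeq 0$ almost surely), apply Von Neumann's trace inequality together with the balancedness identity $\bUlast=\|\bUtl\|_F$ from Lemma~\ref{lem:balanced_random} and positivity from Lemma~\ref{lem:non_zero_random}, and then chain with the upper bound on $\ell_\rdm(0)$ and monotonicity of the loss along gradient flow. One small remark: although you frame this as a proof by contradiction, the chain of inequalities you write out actually closes the argument directly, exactly as the paper does, so the contradiction framing is unnecessary.
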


\begin{proof}
    From the dynamics of gradient flow, we know $\ell_\rdm$ is non-increasing (see the proof of Lemma \ref{lem:non_zero_random}). Recall the definition of the loss function:
    \begin{equation*}
        \ell_{\rdm} (\bUtl, \bUlast)
        = \E \tr \left[\frac{1}{2} \bUlast^2 \Gamma_\tau \Lambda_\tau \bUtl \Lambda_\tau (\bUtl)^\top - \bUlast \Lambda_\tau^2 (\bUtl)^\top\right].
    \end{equation*}
    Since $\Lambda_\tau$ commutes with $\Gamma_\tau$ and they are both positive definite almost surely, we know that $\Gamma_\tau \Lambda_\tau \succeq 0_{d \times d}$ almost surely from Lemma \ref{lem:matrix}. Again, since $\bUtl \Lambda_\tau (\bUtl)^\top \succeq 0_{d \times d}$ almost surely, from Lemma \ref{lem:matrix} we have $\tr \left[\frac{1}{2} \bUlast^2 \Gamma_\tau \Lambda_\tau \bUtl \Lambda_\tau (\bUtl)^\top\right] \geq 0$ almost surely. Therefore, we have
    \begin{equation*}
        \ell_{\rdm}  (\bUtl, \bUlast)\geq - \E \tr \left[\bUlast \Lambda_\tau^2 (\bUtl)^\top\right] 
        = - \tr \left[\bUlast \left(\E \Lambda_\tau^2\right) (\bUtl)^\top\right].
    \end{equation*}
    From Von Neumann's trace inequality (Lemma \ref{lem:trace}) and the fact that $\bUlast(t) > 0$ for any $t\geq 0$ (Lemma \ref{lem:non_zero_random}), we know $\ell_{\rdm} (\bUtl(t), \bUlast(t)) \geq - \sqrt{d} \bUlast \left\|\E \Lambda_\tau^2\right\|_{op} \left\|\bUtl\right\|_F.$
    From Lemma \ref{lem:balanced_random}, we know $\bUlast^2 = \tr(\bUtl (\bUtl)^\top) = \left\|\bUtl\right\|_F^2.$ Since $\bUlast(t) > 0$ for any time, we know actually $\bUlast(t) = \left\|\bUtl(t)\right\|_F$. So we have
    \begin{equation*}
        \ell_{\rdm} (\bUtl(t), \bUlast(t)) \geq - \sqrt{d} \bUlast(t)^2 \left\|\E \Lambda_\tau^2\right\|_{op}.
    \end{equation*}
    From the proof of Lemma \ref{lem:non_zero_random}, we know
    \begin{equation*}
        \ell_{\rdm}(\bUtl(t), \bUlast(t)) \leq \ell_{\rdm}( \bUtl(0), \bUlast(0))
        \leq \frac{\sigma^4 \sqrt{d}}{2} \left[\E \left\|\Gamma_\tau\right\|_{op} \left\|\Lambda_\tau\right\|_F^2 \right] - \sigma^2 \left\|\E \Lambda_\tau \Theta\right\|_F^2.
    \end{equation*}
    Combine the two preceding displays above, we have
    \begin{equation*}
        \bUlast(t) \geq \sqrt{\frac{\sigma^2}{2\sqrt{d}\left\|\E \Lambda_\tau^2\right\|_{op}} \left[2\left\|\E \Lambda_\tau \Theta\right\|_F^2 - \sqrt{d} \sigma^2 \left[\E \left\|\Gamma_\tau\right\|_{op} \left\|\Lambda_\tau\right\|_F^2 \right]\right]} > 0.
    \end{equation*}
    The last inequality comes from Lemma \ref{lem:non_zero_random}.
\end{proof}

\ 

Finally, we prove the PL Inequality, which naturally leads to the global convergence.

\begin{restatable}{lemma}{plinequalityrandom}\label{lemma:proxypl_random}
We do gradient flow on $\ell_\rdm$ with respect to $u_{i,j} \ (\forall i,j \in [d])$ and $\bUlast$. Suppose the initialization satisfies Assumption \ref{assume_init} and the initial scale satisfies \eqref{eqn:init_cond_random_proof}.
If we denote
\begin{equation*}
    \eta = \min\left\{\gamma_i, i \in [d]; \zeta_{ij}, i \neq j \in [d]\right\}
\end{equation*}
and
\begin{equation}\label{eqn:def_nu_maintext}
    \nu := \frac{\eta \cdot \sigma^2}{2\sqrt{d}\left\|\E \Lambda_\tau^2\right\|_{op}} \left[2\left\|\E \Lambda_\tau \Theta\right\|_F^2 - \sqrt{d} \sigma^2 \left[\E \left\|\Gamma_\tau\right\|_{op} \left\|\Lambda_\tau\right\|_F^2 \right]\right] > 0,
\end{equation}
then for any $t \geq 0,$ it holds that
\begin{equation}\label{eqn:PL_random}
    \left\|\nabla \ell_\rdm (\bUtl, \bUlast)\right\|_2^2 := \sum_{i,j=1}^d\left|\frac{\partial \ell_\rdm}{\partial u_{ij}}\right|^2 + \left|\frac{\partial \ell_\rdm}{\partial \bUlast}\right|^2 
    \geq \nu \left(\ell_\rdm - \min \ell_\rdm\right).
\end{equation}
Additionally, $\ell_\rdm$ converges to the global minimal value, $u_{ij}$ and $\bUlast$ converge to the following limits,
\begin{equation}
    \lim_{t \to \infty} u_{ij}(t) = \mathbb{I}(i=j) \cdot \left[\sum_{i=1}^d \frac{\xi_i^2}{\gamma_i^2}\right]^{-\frac{1}{4}} \cdot \frac{\xi_i}{\gamma_i}\quad \forall i \in [d], \quad
    \lim_{t \to \infty} \bUlast(t) = \left[\sum_{i=1}^d \frac{\xi_i}{\gamma_i}\right]^{\frac{1}{4}}.
\end{equation}
\end{restatable}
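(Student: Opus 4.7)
My plan is to first establish the PL inequality \eqref{eqn:PL_random} by a direct gradient computation combined with the uniform lower bound on $\bUlast(t)^2$ from Lemma \ref{lem:lower_bound_u_random}, then deduce convergence of $\ell_\rdm$ to its minimum via the standard Gr\"onwall argument, and finally pin down the individual limits of $u_{ij}$ and $\bUlast$ using the balanced condition $\bUlast^2 = \|\bUtl\|_F^2$ from Lemma \ref{lem:balanced_random}.

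For the PL inequality, I would differentiate \eqref{eqn:def_loss_random} directly to obtain
$$\frac{\partial \ell_\rdm}{\partial u_{ii}} = \gamma_i \bUlast\!\left(\bUlast u_{ii} - \tfrac{\xi_i}{\gamma_i}\right), \qquad \frac{\partial \ell_\rdm}{\partial u_{ij}} = \zeta_{ij} \bUlast^2 u_{ij} \quad (i\neq j).$$
Discarding the nonnegative $(\partial \ell_\rdm/\partial \bUlast)^2$ contribution and using the elementary bounds $\gamma_i^2 \geq \eta \gamma_i$ and $\zeta_{ij}^2 \geq \eta \zeta_{ij}$ (which hold since $\eta$ is the minimum of these strictly positive quantities), I can lower-bound
$$\|\nabla \ell_\rdm\|_2^2 \geq \eta \bUlast^2 \left[\sum_i \gamma_i (\bUlast u_{ii} - \xi_i/\gamma_i)^2 + \sum_{i\neq j}\zeta_{ij}\bUlast^2 u_{ij}^2\right] = 2\eta \bUlast^2 (\ell_\rdm - \min \ell_\rdm),$$
where the equality invokes the explicit expression for the excess loss from Lemma \ref{lem:min_loss_random}. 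Substituting the lower bound on $\bUlast^2$ from Lemma \ref{lem:lower_bound_u_random} (valid under the hypothesis \eqref{eqn:init_cond_random_proof}) then yields the PL inequality with coefficient at least $\nu$.

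Once PL holds along the entire trajectory, the standard calculation $\tfrac{\mathrm{d}}{\mathrm{d}t}(\ell_\rdm - \min\ell_\rdm) = -\|\nabla \ell_\rdm\|^2 \leq -\nu(\ell_\rdm - \min\ell_\rdm)$ gives exponential decay, hence $\ell_\rdm(t) \to \min \ell_\rdm$. Reading back through the formula in Lemma \ref{lem:min_loss_random}, this forces $\bUlast(t) u_{ii}(t) \to \xi_i/\gamma_i$ for each $i$ and $\bUlast(t) u_{ij}(t) \to 0$ for $i\neq j$, since $\gamma_i,\zeta_{ij}>0$. Combined with $\bUlast(t)$ bounded away from zero, this gives $u_{ij}(t) \to 0$ for $i\neq j$. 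To separate the limits of $\bUlast$ and the $u_{ii}$'s, I would multiply the balanced identity $\bUlast^2 = \sum_i u_{ii}^2 + \sum_{i\neq j} u_{ij}^2$ by $\bUlast^2$ to obtain $\bUlast^4 = \sum_i (\bUlast u_{ii})^2 + \sum_{i\neq j}(\bUlast u_{ij})^2$; passing to the limit yields $\bUlast(t) \to (\sum_i \xi_i^2/\gamma_i^2)^{1/4}$, and consequently $u_{ii}(t) = (\bUlast u_{ii})/\bUlast \to (\xi_i/\gamma_i)(\sum_j \xi_j^2/\gamma_j^2)^{-1/4}$, as stated.

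The main obstacle throughout is guaranteeing that $\bUlast(t)$ stays uniformly bounded away from zero: the origin is a saddle of $\ell_\rdm$ at which the PL coefficient degenerates, and without this lower bound the dynamics could in principle stall there. This is exactly the role played by Lemma \ref{lem:lower_bound_u_random}, whose proof exploits the strict negativity of $\ell_\rdm$ at initialization (ensured by the initialization-scale hypothesis) together with the non-increasing nature of $\ell_\rdm$ along gradient flow and the fact that $\ell_\rdm = 0$ whenever $\bUlast = 0$.
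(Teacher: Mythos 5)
Your proposal is correct and takes essentially the same approach as the paper: compute the gradient, compare to the explicit excess-loss decomposition from Lemma~\ref{lem:min_loss_random}, invoke the uniform lower bound on $\bUlast$ from Lemma~\ref{lem:lower_bound_u_random} to obtain the PL constant, apply Gr\"onwall, and then disentangle the individual limits via the balanced condition $\bUlast^2=\|\bUtl\|_F^2$. One note: your derived limit $\bUlast(t)\to\bigl[\sum_i \xi_i^2/\gamma_i^2\bigr]^{1/4}$ differs from the displayed lemma statement $\bigl[\sum_i \xi_i/\gamma_i\bigr]^{1/4}$, but it agrees with what the paper's own proof actually derives, so the statement contains a typo and your version is the correct one.
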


Translating back to the original parameterization, we have this is equivalent to
\begin{align*}
    \lim_{t \to \infty} \WKQ(t)
    &= \begin{pmatrix}[1.5]
        \left\|\left[\E \Gamma_\tau \Lambda_\tau^2\right]^{-1} \E \left[\Lambda_\tau^2\right]\right\|_F^{-\frac{1}{2}} \cdot \left[\E \Gamma_\tau \Lambda_\tau^2\right]^{-1} \E \left[\Lambda_\tau^2\right] & 0_d \\
        0_d^\top & 0
    \end{pmatrix}, \\
    \lim_{t \to \infty} \WPV(t) 
    &= \begin{pmatrix}[1.5]
    0_{d \times d} & 0_d \\
    0_d^\top & \left\|\left[\E \Gamma_\tau \Lambda_\tau^2\right]^{-1} \E \left[\Lambda_\tau^2\right]\right\|_F^{\frac{1}{2}}
    \end{pmatrix},
\end{align*}
where $\Gamma_\tau = \frac{N+1}{N} \Lambda_\tau + \frac{1}{N}\tr(\Lambda_\tau)I_d \in \R^{d \times d}$ and $\E$ is over $\Lambda_\tau.$

\begin{proof}
First, we prove the PL Inequality. From Lemma \ref{lem:min_loss_random}, we know
\begin{equation*}
    \ell_{\rdm} (\bUtl, \bUlast) - \min \ell_{\rdm} = \frac{1}{2} \sum_{i=1}^d \gamma_i \left(u_{ii} \bUlast - \frac{\xi_i}{\gamma_i}\right)^2 + \frac{1}{2} \sum_{i\neq j} \zeta_{ij} \bUlast^2 u_{ij}^2,
\end{equation*}
where $\xi_i, \zeta_{ij}, \gamma_i$ are defined in \eqref{eqn:def_gamma_zeta_xi}. Meanwhile, we calculate the square norm of the gradient of $\ell_\rdm$:
\begin{align*}
    \left\|\nabla \ell_\rdm (\bUtl, \bUlast)\right\|_2^2 
    &:= \sum_{i,j=1}^d\left|\frac{\partial \ell_\rdm}{\partial u_{ij}}\right|^2 + \left|\frac{\partial \ell_\rdm}{\partial \bUlast}\right|^2 
    \geq \sum_{i,j=1}^d\left|\frac{\partial \ell_\rdm}{\partial u_{ij}}\right|^2 \\
    &= \sum_{i=1}^d \gamma_i^2 \bUlast^2 \left(u_{ii} \bUlast - \frac{\xi_i}{\gamma_i}\right)^2 + \sum_{i\neq j} \zeta_{ij}^2 \bUlast^4 u_{ij}^2.
\end{align*}
Comparing the two displays above, we know in order to achieve $ \left\|\nabla \ell_\rdm\right\|_2^2 \geq \nu \left(\ell_{\rdm} - \min \ell_{\rdm}\right),$ it suffices to make
\begin{align*}
    \gamma_i \bUlast(t)^2 &\geq \frac{\nu}{2} \quad \forall i \in [d], \\
    \zeta_{ij} \bUlast(t)^2 & \geq \frac{\nu}{2} \quad \forall i \neq j \in [d].
\end{align*}
We define $\eta := \min \left\{\gamma_i, \zeta_{ij}, i \neq j \in [d]\right\},$ then it is sufficient to make
\begin{equation*}
    \eta \bUlast(t)^2 \geq \frac{\nu}{2}.
\end{equation*}
From Lemma \ref{lem:lower_bound_u_random}, we know that we can actually lower bound $\bUlast$ from below by a positive constant. Then, the inequality holds if we take
\begin{equation*}
    \nu := \frac{\eta \cdot \sigma^2}{2\sqrt{d}\left\|\E \Lambda_\tau^2\right\|_{op}} \left[2\left\|\E \Lambda_\tau \Theta\right\|_F^2 - \sqrt{d} \sigma^2 \left[\E \left\|\Gamma_\tau\right\|_{op} \left\|\Lambda_\tau\right\|_F^2 \right]\right] > 0.
\end{equation*}
Therefore, as long as we take $\nu$ as above, a PL inequality holds for $\ell_\rdm.$

With an abuse of notation, let us write $\ell_\rdm(t) =  \ell_\rdm(\bUtl(t), \bUlast(t))$.   Then, from the dynamics of gradient flow and the PL Inequality (\eqref{eqn:PL_random}), we know
\begin{align*}
    \frac{\mathrm{d}}{\mathrm{d} t} \left[\ell_\rdm (t) - \min\ell_\rdm\right]
    = - \left\|\nabla \ell_\rdm (t)\right\|_2^2 
    \leq - \nu \left(\ell_\rdm (t)- \min \ell_\rdm\right),
\end{align*}
which by Gr\"onwall's inequality implies 
\begin{equation*}
    0
    \leq \ell_\rdm(t) - \min \ell_\rdm 
    \leq \exp(-\nu t) \left[\ell_\rdm(0) - \min\ell_\rdm\right] \to 0
\end{equation*}
when $t \to \infty.$ From Lemma \ref{lem:min_loss_random}, we know 
\begin{equation*}
    \sum_{i=1}^d \gamma_i \left(u_{ii} \bUlast - \frac{\xi_i}{\gamma_i}\right)^2 + \sum_{i\neq j} \zeta_{ij} \bUlast^2 u_{ij}^2 \to 0 \text{ when } t \to \infty.
\end{equation*}
This implies
\begin{equation}\label{eqn:convergence_uij_random}
\begin{aligned}
    u_{ii} \bUlast &\to \frac{\xi_i}{\gamma_i} \quad \forall i \in [d], \\
    u_{ij} \bUlast &\to 0 \quad \forall i \neq j \in [d].
\end{aligned}
\end{equation}
We take square of $u_{ii}(t) \bUlast(t)$ and $u_{ij}(t) \bUlast(t)$, then sum over all $i,j \in [d].$ Then, we get $\bUlast^2 \sum_{i,j = 1}^d u_{ij}^2 \to \sum_{i=1}^d \frac{\xi_i^2}{\gamma_i^2}.$ From Lemma \ref{lem:balanced_random}, we know for any $t \geq 0,$ 
$\bUlast(t)^2 = \tr\left(\bUtl (\bUtl)^\top\right) = \sum_{i,j = 1}^d u_{ij}^2.$
So we have
\begin{equation*}
    \bUlast(t)^4 = \bUlast^2 \sum_{i,j = 1}^d u_{ij}^2 \to \sum_{i=1}^d \frac{\xi_i^2}{\gamma_i^2},
\end{equation*}
which implies
\begin{equation}\label{eqn:convergence_ulast_random}
    \bUlast(t) \to \left[\sum_{i=1}^d \frac{\xi_i^2}{\gamma_i^2}\right]^{\frac{1}{4}}
\end{equation}
when $t \to \infty.$ Combining \eqref{eqn:convergence_uij_random} and \eqref{eqn:convergence_ulast_random}, we conclude
\begin{equation*}
    u_{ij}(t) \to 0 \quad \forall i \neq j \in [d], \quad
    u_{ii}(t) \to \left[\sum_{i=1}^d \frac{\xi_i^2}{\gamma_i^2}\right]^{-\frac{1}{4}} \cdot \frac{\xi_i}{\gamma_i}\quad \forall i \in [d].
\end{equation*}
\end{proof}

\section{Technical lemmas}
\begin{lemma}[Matrix Derivatives, Kronecker Product and Vectorization, \citep{petersen2008matrix}]\label{lem:matrix}
    We denote $\boldsymbol{A}, \boldsymbol{B}, \boldsymbol{X}$ as matrices and $\boldsymbol{x}$ as vectors. Then, we have
    \begin{itemize}
        \item $\frac{\partial \mathbf{x}^\top \mathbf{B} \mathbf{x}}{\partial \mathbf{x}}=\left(\mathbf{B}+\mathbf{B}^\top\right) \mathbf{x}.$
        \item $\vector(\mathbf{A X B})=\left(\mathbf{B}^\top \otimes \mathbf{A}\right) \vector(\mathbf{X})$.
        \item $\operatorname{tr}\left(\mathbf{A}^\top \mathbf{B}\right)=\vector(\mathbf{A})^\top \vector(\mathbf{B}).$
        \item $\frac{\partial}{\partial \mathbf{X}} \operatorname{tr}\left(\mathbf{X B X}^\top\right)=\mathbf{X B}^\top + \mathbf{X B}.$
        \item $\frac{\partial}{\partial \mathbf{X}} \operatorname{tr}\left(\mathbf{A} \mathbf{X}^\top\right)=\mathbf{A}.$
        \item $\frac{\partial}{\partial \mathbf{X}} \operatorname{tr}\left(\mathbf{A X B X ^ { \top }} \mathbf{C}\right)=\mathbf{A}^\top \mathbf{C}^\top \mathbf{X} \mathbf{B}^\top+\mathbf{C A X B}.$
    \end{itemize}
\end{lemma}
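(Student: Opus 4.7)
The plan is to verify each of the six identities directly from definitions; since each is a well-known fact from standard matrix calculus references, the proof amounts to careful bookkeeping of indices rather than any deep argument. I would dispatch the list in the order it is stated, grouping the three differentiation identities at the end so that I only need to invoke one general principle (the coordinate-wise characterization $[\partial_X f]_{ij} = \partial f / \partial X_{ij}$ together with $\partial \tr(M)/\partial X_{ij} = \sum_k \partial M_{kk}/\partial X_{ij}$).

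For the first identity I would expand $x^\top B x = \sum_{i,j} B_{ij} x_i x_j$ and differentiate with respect to $x_k$, obtaining $\sum_j B_{kj} x_j + \sum_i B_{ik} x_i = [(B+B^\top)x]_k$. For the second identity, I would compare entries: writing the $(i,j)$ block structure of the Kronecker product $B^\top \otimes A$ and recalling that $\vector(\cdot)$ stacks columns, the $(i + (j-1)m)$-th entry of $\vector(AXB)$ equals $[AXB]_{ij} = \sum_{k,\ell} A_{ik} X_{k\ell} B_{\ell j}$, which matches the corresponding row of $(B^\top \otimes A)$ applied to $\vector(X)$ since the $(j,\ell)$-block of $B^\top \otimes A$ is $B_{\ell j} A$. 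The third identity follows because $\tr(A^\top B) = \sum_{i,j} A_{ij} B_{ij}$, which is precisely the Euclidean inner product $\langle \vector(A), \vector(B)\rangle$.

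For the three trace-derivative identities I would use the product rule and the basic fact $\partial X_{k\ell}/\partial X_{ij} = \delta_{ki}\delta_{\ell j}$. For $\tr(A X^\top) = \sum_{i,j} A_{ij} X_{ij}$, differentiation with respect to $X_{k\ell}$ gives $A_{k\ell}$, i.e., the derivative is $A$. For $\tr(X B X^\top) = \sum_{i,j,k} X_{ij} B_{jk} X_{ik}$, differentiating with respect to $X_{pq}$ yields $\sum_k B_{qk} X_{pk} + \sum_j X_{pj} B_{jq} = [X B^\top]_{pq} + [X B]_{pq}$. The sixth identity is the most tedious but structurally identical: write $\tr(A X B X^\top C) = \sum_{i,j,k,\ell,m} A_{ij} X_{jk} B_{k\ell} X_{m\ell} C_{mi}$, apply the product rule separating the two occurrences of $X$, and collect terms into $A^\top C^\top X B^\top + C A X B$.

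There is no real obstacle here; the only thing to watch out for is convention. I would fix at the outset that $[\partial_X f]_{ij} := \partial f/\partial X_{ij}$ (the numerator layout used implicitly in the cited identities), note that with this convention both differentiation and the vectorization operator interact consistently, and then let the index calculations run. A brief remark citing \citep{petersen2008matrix} for these identities would allow the proof to be compressed to a single illustrative computation plus a pointer for the remainder.
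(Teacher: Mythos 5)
Your verification is correct: each of the six identities follows from exactly the index bookkeeping you describe (I checked the sixth, the only nontrivial one, and your product-rule split over the two occurrences of $\mathbf{X}$ does collect into $\mathbf{A}^\top \mathbf{C}^\top \mathbf{X}\mathbf{B}^\top + \mathbf{C}\mathbf{A}\mathbf{X}\mathbf{B}$ under the convention $[\partial_{\mathbf{X}} f]_{ij} = \partial f/\partial X_{ij}$). The paper itself gives no proof of this lemma and simply cites \citep{petersen2008matrix}, so your self-contained elementary argument is entirely adequate and there is nothing to compare it against.
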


\ 

\begin{lemma}\label{lem:fourth_moment}
    If $X$ is Gaussian random vector of $d$ dimension, mean zero and covariance matrix $\Lambda,$ and $A \in \mathbb{R}^{d \times d}$ is a fixed matrix. Then
    \begin{equation*}
        \E \left[XX^\top A XX^\top\right] = \Lambda \left(A + A^\top\right) \Lambda + \operatorname{tr}(A \Lambda) \Lambda.
    \end{equation*}
\end{lemma}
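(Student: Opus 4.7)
The plan is to reduce the matrix identity to an entry-wise identity and then apply Isserlis' theorem (Wick's formula) for fourth moments of a mean-zero Gaussian. Writing the $(i,j)$ entry of the left-hand side as
\[
\bigl[\E(XX^\top A XX^\top)\bigr]_{ij} = \E\bigl[X_i (X^\top A X) X_j\bigr] = \sum_{k,l=1}^d A_{kl}\,\E[X_i X_j X_k X_l],
\]
we see the problem collapses to computing $\E[X_i X_j X_k X_l]$ for all index tuples.

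Next, I would invoke Isserlis' theorem, which for a centered Gaussian gives the three-pairing sum
\[
\E[X_i X_j X_k X_l] = \Lambda_{ij}\Lambda_{kl} + \Lambda_{ik}\Lambda_{jl} + \Lambda_{il}\Lambda_{jk}.
\]
Substituting into the previous display splits the sum into three pieces, each of which I would identify as a matrix expression evaluated at $(i,j)$:
\[
\sum_{k,l} A_{kl}\Lambda_{ij}\Lambda_{kl} = \Lambda_{ij}\operatorname{tr}(A\Lambda), \qquad
\sum_{k,l} A_{kl}\Lambda_{ik}\Lambda_{jl} = (\Lambda A \Lambda)_{ij}, \qquad
\sum_{k,l} A_{kl}\Lambda_{il}\Lambda_{jk} = (\Lambda A^\top \Lambda)_{ij},
\]
where in the first identity I use symmetry of $\Lambda$ to rewrite $\sum_{k,l}A_{kl}\Lambda_{kl} = \operatorname{tr}(A\Lambda)$, and in the third I use $A_{kl}\Lambda_{jk} = (A^\top)_{lk}\Lambda_{kj}$ before summing on $k$.

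Summing the three contributions yields $\bigl[\Lambda(A+A^\top)\Lambda + \operatorname{tr}(A\Lambda)\Lambda\bigr]_{ij}$, and since the identity holds entry-wise it holds as a matrix identity. There is essentially no obstacle; the only care needed is the index-bookkeeping when contracting the three Wick pairings with $A_{kl}$, in particular keeping track of which pairing produces $A$ versus $A^\top$. If one prefers a coordinate-free derivation, an alternative would be to diagonalize $\Lambda = Q D Q^\top$, substitute $X = Q D^{1/2} Z$ with $Z\sim\normal(0,I_d)$, reduce to the isotropic case (where the fourth-moment identity $\E[ZZ^\top B ZZ^\top] = B + B^\top + \operatorname{tr}(B)I_d$ is standard), and conjugate back; this route avoids index manipulation but requires the isotropic sub-lemma.
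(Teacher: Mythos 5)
Your proof is correct and follows essentially the same route as the paper's: reduce to an entry-wise identity, apply Isserlis' theorem to the fourth-moment $\E[X_iX_jX_kX_l]$, and contract the three Wick pairings with $A_{kl}$ to recover $\operatorname{tr}(A\Lambda)\Lambda$, $\Lambda A\Lambda$, and $\Lambda A^\top\Lambda$. The only difference is the cosmetic choice of which index pair labels the output entry versus the summation, so there is nothing to reconcile.
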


\begin{proof}
    We denote $X = (X_1,...,X_d)^\top.$ Then,
    \begin{equation*}
        XX^\top A XX^\top = X (X^\top A  X) X^\top = \left(\sum_{i,j=1}^d A_{ij}X_i X_j\right) XX^\top.
    \end{equation*}
    So we know $(XX^\top A XX^\top)_{k,l} = \left(\sum_{i,j=1}^d A_{ij} X_i X_j\right) X_k X_l.$ From Isserlis' Theorem in probability theory (Theorem 1.1 in \citet{michalowicz2009isserlis}, originally proposed in \citet{wick1950evaluation}), we know for any $i,j,k,l \in [d],$ it holds that
    \begin{equation*}
        \E \big[X_i X_j X_k X_l\big] = \Lambda_{ij} \Lambda_{kl} + \Lambda_{ik} \Lambda_{jl} + \Lambda_{il} \Lambda_{jk}.
    \end{equation*}
    Then, we have for any fixed $k,l \in [d],$ 
    \begin{align*}
        \E (XX^\top A XX^\top)_{k,l} 
        &= \sum_{i,j = 1}^d A_{ij} \Lambda_{ij} \Lambda_{kl} + A_{ij} \Lambda_{ik} \Lambda_{jl} + A_{ij} \Lambda_{il} \Lambda_{jk} \\
        &= \operatorname{tr}(A\Lambda) \Lambda_{kl} + \Lambda_k^\top (A + A^\top) \Lambda_l.
    \end{align*}
    Therefore, we know
    \begin{equation*}
        \E (XX^\top A XX^\top) = \Lambda \left(A + A^\top\right) \Lambda + \operatorname{tr}(A \Lambda) \Lambda.
    \end{equation*}
\end{proof}

\

\begin{lemma}[Von-Neumann's Trace Inequality]\label{lem:trace}
    Let $U, V \in \mathbb{R}^{d \times n}$ with $d \leq n$. We have
$$
\operatorname{tr}\left(U^{\top} V\right) \leq \sum_{i=1}^d \sigma_i(U) \sigma_i(V) \leq\|U\|_{\mathrm{op}} \times \sum_{i=1}^d \sigma_i(V) \leq \sqrt{d} \cdot\|U\|_{\mathrm{op}}\|V\|_F
$$
where $\sigma_1(X) \geq \sigma_2(X) \geq \cdots \geq \sigma_d(X)$ are the ordered singular values of $X \in \mathbb{R}^{d \times n}$.
\end{lemma}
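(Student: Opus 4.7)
The plan is to prove the three inequalities in sequence, noting that only the first carries real content. The middle inequality is immediate: $\sigma_i(U) \leq \sigma_1(U) = \|U\|_{op}$ for every $i$, so summing $\sigma_i(U)\sigma_i(V) \leq \|U\|_{op}\sigma_i(V)$ over $i$ gives the bound. The third inequality is Cauchy-Schwarz: $\sum_{i=1}^d \sigma_i(V) \leq \sqrt{d}\,\sqrt{\sum_{i=1}^d \sigma_i(V)^2} = \sqrt{d}\,\|V\|_F$, where the final identity uses that $V \in \R^{d\times n}$ has at most $d$ nonzero singular values, so $\sum_{i=1}^d \sigma_i(V)^2 = \|V\|_F^2$.

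The substantive step is the classical Von Neumann trace inequality, $\operatorname{tr}(U^\top V) \leq \sum_{i=1}^d \sigma_i(U)\sigma_i(V)$. I would reduce this to a purely diagonal problem via the SVD. Writing full SVDs $U = P_U \Sigma_U Q_U^\top$ and $V = P_V \Sigma_V Q_V^\top$ with $P_U, P_V \in \R^{d\times d}$ and $Q_U, Q_V \in \R^{n\times n}$ orthogonal, and $\Sigma_U, \Sigma_V \in \R^{d\times n}$ carrying the singular values on their leading $d\times d$ blocks, cyclicity of the trace turns $\operatorname{tr}(U^\top V)$ into $\operatorname{tr}(\Sigma_U^\top R\, \Sigma_V\, S)$ where $R := P_U^\top P_V \in \R^{d \times d}$ and $S := Q_V^\top Q_U \in \R^{n \times n}$ are orthogonal.

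Expanding in coordinates and using the sparse diagonal structure of $\Sigma_U, \Sigma_V$, this becomes
\[
\operatorname{tr}(U^\top V) = \sum_{i,j=1}^d \sigma_i(U)\,\sigma_j(V)\, T_{ij}, \qquad T_{ij} := R_{ij}\,\tilde S_{ji},
\]
where $\tilde S$ is the top-left $d\times d$ block of $S$. A short calculation using Cauchy-Schwarz --- each row of $R$ has unit Euclidean norm and each column of $\tilde S$ has Euclidean norm at most $1$ --- shows that $(|T_{ij}|)$ is doubly sub-stochastic, i.e., its row and column sums are at most $1$.

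The main obstacle is the final rearrangement step: for nonincreasing nonnegative sequences $\sigma_1(U)\geq\cdots\geq\sigma_d(U)$ and $\sigma_1(V)\geq\cdots\geq\sigma_d(V)$ and any doubly sub-stochastic matrix $(|T_{ij}|)$, I need $\sum_{i,j}\sigma_i(U)\sigma_j(V)\,T_{ij} \leq \sum_{i=1}^d \sigma_i(U)\sigma_i(V)$. This is classical: embed $(|T_{ij}|)$ into a doubly stochastic matrix by adding slack and apply Birkhoff's theorem to write it as a convex combination of permutation matrices $P_\pi$; for each permutation $\pi$, the rearrangement inequality gives $\sum_i \sigma_i(U)\sigma_{\pi(i)}(V) \leq \sum_i \sigma_i(U)\sigma_i(V)$; then average. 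Substituting back and noting that the original trace is bounded above by its signed version replaced by $|T_{ij}|$ (which one can arrange by absorbing signs into $R$ without changing orthogonality up to a diagonal $\pm 1$) closes the argument.
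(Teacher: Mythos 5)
Your proof is correct, and it is worth noting that the paper itself offers no proof of this lemma at all: Lemma~\ref{lem:trace} is stated in the technical appendix as a classical fact and simply used, so there is no argument of the authors' to compare yours against. What you have written is the standard textbook proof of von Neumann's inequality --- reduce via the two SVDs to $\sum_{i,j}\sigma_i(U)\sigma_j(V)R_{ij}\tilde S_{ji}$, check via Cauchy--Schwarz on the rows and columns of the orthogonal factors that $(|R_{ij}\tilde S_{ji}|)$ is doubly sub-stochastic, dominate by a doubly stochastic matrix, and finish with Birkhoff plus the rearrangement inequality --- and all the steps go through. The easy second and third inequalities (operator-norm bound on each $\sigma_i(U)$, and Cauchy--Schwarz together with $\sum_{i=1}^d\sigma_i(V)^2=\|V\|_F^2$ since a $d\times n$ matrix with $d\le n$ has at most $d$ nonzero singular values) are handled correctly. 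One small presentational point: your closing remark about ``absorbing signs into $R$'' is more machinery than you need; since $\sigma_i(U)\sigma_j(V)\ge 0$, the bound $\sum_{i,j}\sigma_i(U)\sigma_j(V)T_{ij}\le\sum_{i,j}\sigma_i(U)\sigma_j(V)|T_{ij}|$ is immediate, and the doubly sub-stochastic argument is applied to $(|T_{ij}|)$ directly.
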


\

\begin{lemma}[\citep{meenakshi1999product}]\label{lem:psd}
    For any two positive semi-definitive matrices $A, B \in \mathbb{R}^{d \times d},$ we have
    \begin{itemize}
        \item $\operatorname{tr}[AB] \geq 0.$
        \item $AB \succeq 0$ if and only if $A$ and $B$ commute.
    \end{itemize}
\end{lemma}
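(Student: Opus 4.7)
The plan is to prove each of the two claims separately using the PSD square root. For the first claim, I would use the unique PSD square root $A^{1/2}$ of $A$ (which exists because $A$ is symmetric PSD, via the spectral theorem). Writing $A = A^{1/2} A^{1/2}$ and using the cyclic property of the trace gives
\[ \operatorname{tr}[AB] = \operatorname{tr}[A^{1/2} A^{1/2} B] = \operatorname{tr}[A^{1/2} B A^{1/2}]. \]
Then I would observe that $A^{1/2} B A^{1/2}$ is PSD: for any $v \in \mathbb{R}^d$, $v^\top (A^{1/2} B A^{1/2}) v = (A^{1/2} v)^\top B (A^{1/2} v) \geq 0$ since $B \succeq 0$. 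The trace of a PSD matrix equals the sum of its (nonnegative) eigenvalues, hence is nonnegative, which finishes the first bullet.

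For the second claim, I would first handle the easier ``if'' direction. If $A$ and $B$ commute, then by the simultaneous diagonalization theorem for commuting symmetric matrices, there exist an orthogonal $Q$ and diagonal matrices $D_A, D_B$ with nonnegative entries (from the PSD assumption) such that $A = Q D_A Q^\top$ and $B = Q D_B Q^\top$. Then $AB = Q D_A D_B Q^\top$, and since $D_A D_B$ is diagonal with entrywise nonnegative entries, $AB$ is PSD.

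For the ``only if'' direction, I would use that any PSD matrix is symmetric by convention in the real case (which is how PSD is used throughout this paper). If $AB \succeq 0$, then $AB$ is symmetric, so $AB = (AB)^\top = B^\top A^\top = BA$, where the last equality uses that PSD matrices $A, B$ are themselves symmetric. Hence $A$ and $B$ commute. There is no significant obstacle here — the only mild subtlety is that ``PSD'' is tacitly taken to mean ``symmetric with nonnegative eigenvalues,'' which is the standard convention adopted throughout the paper and makes the characterization via symmetry of $AB$ immediate.
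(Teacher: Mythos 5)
The paper does not actually prove this lemma; it is stated as a known fact and attributed to the cited reference (Meenakshi--Rajian), so there is no in-paper argument to compare against. Your proof is correct and fills that gap with the standard argument. For the first bullet, writing $\operatorname{tr}[AB]=\operatorname{tr}[A^{1/2}BA^{1/2}]$ and noting $A^{1/2}BA^{1/2}\succeq 0$ is exactly right (one could even avoid eigenvalues entirely by observing that the trace is the sum of the diagonal entries $e_i^\top A^{1/2}BA^{1/2}e_i\geq 0$). For the second bullet, the ``if'' direction via simultaneous orthogonal diagonalization of commuting symmetric matrices is fine, and the ``only if'' direction correctly isolates the one point that matters: $\succeq 0$ must be read as ``symmetric with nonnegative quadratic form,'' so that $AB=(AB)^\top=B^\top A^\top=BA$. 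That convention is indeed the one the paper uses everywhere it invokes this lemma (e.g.\ in the proofs of Corollary~\ref{coro:min_loss} and Lemma~\ref{lem:lower_bound_W}, where $\Gamma\Lambda\succeq 0$ is deduced from commutativity and then the trace inequality is applied), so your proof is consistent with how the statement is deployed. No gaps.
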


\section{Experiment details}\label{appendix_experiments}
In this section, we provide more details for the experiment in Figure~\ref{figure}.  Our experimental setup is based on the codebase provided by~\citet{garg2022can}, with a modification that allows for the possibility that the covariate distribution changes across prompts.  We use the standard GPT2 architecture with 256 embedding size, 12 layers and 8 heads~\citep{radford2018improving} as implemented by HuggingFace~\citep{wolf2020transformers}.  For the GPT2 models, we use the embedding method proposed by~\citet{garg2022can}, where instead of concatenating $x$ and $y$ into a single token, they are treated as separate tokens.   It is also worth noting that the training objective function for the GPT2 model is different than those we consider for the linear self-attention network: for the GPT2 model, the objective function is the average over the full length of the context sequence (predictions for each $x_i$ using $(x_k, y_k)_{k<i}$), while in our setting the objective function is only for the final query point.  However, in the figure, for both GPT2 and the linear self-attention model the error plotted corresponds to the error for predicting the final query point.

In all experiments, covariates are sampled from a mean-zero Gaussian in $d=20$ dimensions with either fixed or random covariance matrix.   
 For the fixed covariance case, we fix the covariance matrix to be identity; for the random case, the covariance matrices are restricted to be diagonal and all diagonal entries are i.i.d. sampled from the standard exponential distribution. The linear weights in all tasks are i.i.d. sampled from standard Gaussian distribution and also independently from all covariates. 
We trained the model for $500000$ steps using Adam~\citep{kingma2014adam} with a batch size of 64 and learning rate of $0.0001.$ We use the same curriculum strategy of~\citet{garg2022can} for acceleration.

For testing the trained model, we used ordinary least squares as a baseline which is optimal for noiseless linear regression tasks.  For prompts at test time, covariates are sampled i.i.d. from a mean-zero Gaussian distribution.  For the fixed-covariance evaluation, the covariance is the identity matrix.  In the random-covariance evaluation, the covariance is a random diagonal matrix with diagonal entries sampled from the standard exponential distribution, multiplied by a scaling coefficient $c \in \{1,4,9\}$, i.e. for each task $\tau,$ the covariance matrix in the random case is 
\begin{equation*}
    \Lambda_\tau = c \cdot \operatorname{diag}\left(\lambda_{\tau,1},...,\lambda_{\tau,d}\right)
\end{equation*}
where $\lambda_{\tau,i} \iid \mathsf{Exponential}(1)$ for any $\tau$ and $i \in [d].$   The plots in Figure~\ref{figure} show the error averaged over $64^2$ prompts, where we sample $64$ covariance matrices for each curve and $64$ prompts for each covariance matrix. We compute $90\%$ confidence interval over 1000 bootstrap trials for each teat.

\printbibliography
\end{document}